\renewcommand*{\backrefalt}[4]{%
    \ifcase #1 \footnotesize{(Not cited.)}%
    \or        \footnotesize{(Cited on page~#2.)}%
    \else      \footnotesize{(Cited on pages~#2.)}%
    \fi}
\theoremstyle{plain}
\newtheorem{theorem}{Theorem}[section]
\newtheorem{lemma}[theorem]{Lemma}
\theoremstyle{definition}
\newtheorem{definition}[theorem]{Definition}
\theoremstyle{remark}
\definecolor{first}{HTML}{029E73}
\definecolor{second}{HTML}{0173B2}
\definecolor{third}{HTML}{D55E00}
\definecolor{fourth}{HTML}{8000ff}
\definecolor{fifth}{HTML}{8080ff}
\icmltitlerunning{Molecular Conformer Aggregation Networks}
\def\eqref#1{equation~(\ref{#1})}
\def\1{\bm{1}}
\def\vpi{{\bm{\pi}}}
\def\va{{\bm{a}}}
\def\ve{{\bm{e}}}
\def\vf{{\bm{f}}}
\def\vg{{\bm{g}}}
\def\vh{{\bm{h}}}
\def\vr{{\bm{r}}}
\def\vx{{\bm{x}}}
\def\mA{{\bm{A}}}
\def\mB{{\bm{B}}}
\def\mC{{\bm{C}}}
\def\mG{{\bm{G}}}
\def\mH{{\bm{H}}}
\def\mL{{\bm{L}}}
\def\mM{{\bm{M}}}
\DeclareMathAlphabet{\mathsfit}{\encodingdefault}{\sfdefault}{m}{sl}
\SetMathAlphabet{\mathsfit}{bold}{\encodingdefault}{\sfdefault}{bx}{n}
\newcommand{\tens}[1]{\bm{\mathsfit{#1}}}
\def\tL{{\tens{L}}}
\def\gO{{\mathcal{O}}}
\def\sR{{\mathbb{R}}}
\newcommand{\ba}{\ensuremath{{\bf a}}}
\newcommand{\bb}{\ensuremath{{\bf b}}}
\newcommand{\cP}{\mathcal{P}}
\newcommand{\cS}{\mathcal{S}}
\newcommand{\Rb}{\mathbb{R}}
\newcommand{\hb}{\mathbf{h}}
\newcommand{\eb}{\mathbf{e}}
\newcommand{\mb}{\mathbf{m}}
\newcommand{\bbr}{\mathbf{r}}
\newcommand{\Hb}{\ensuremath{\mathbf{H}}}
\newcommand{\Wb}{\ensuremath{\mathbf{W}}}
\newcommand{\Ab}{\ensuremath{\mathbf{A}}}
\newcommand{\conan}{\textsc{ConAN}}
\newcommand{\rbf}{\mathtt{RBF}}
\newcommand{\st}{\mathsf{t}}
\newcommand{\UPD}{\mathsf{UPD}}
\newcommand{\AGG}{\mathsf{AGG}}
\newcommand{\RO}{\mathsf{READOUT}}
\renewcommand{\vec}[1]{\mathbf{#1}}
\newcommand{\oms}{\{\!\!\{}
\newcommand{\cms}{\}\!\!\}}
\newcommand{\tup}[1]{{(#1)}}
\newcommand{\bbv}{\mathbf{v}}
\newcommand{\bM}{\mathbf{M}}
\newcommand{\be}{\bm{e}}
\definecolor{cb-black}      {RGB}{  0,   0,   0}
\definecolor{cb-blue-green} {RGB}{  0,  073,  073}
\definecolor{cb-rose}       {RGB}{255, 109, 182}
\definecolor{cb-salmon-pink}{RGB}{255, 182, 119}
\definecolor{cb-purple}     {RGB}{ 73,   0, 146}
\definecolor{cb-blue}       {RGB}{ 0, 109, 219}
\definecolor{cb-lilac}      {RGB}{182, 109, 255}
\definecolor{cb-blue-sky}   {RGB}{109, 182, 255}
\definecolor{cb-blue-light} {RGB}{182, 219, 255}
\definecolor{cb-burgundy}   {RGB}{146,   0,   0}
\definecolor{cb-brown}      {RGB}{146,  73,   0}
\definecolor{cb-clay}       {RGB}{219, 209,   0}
\definecolor{cb-green-lime} {RGB}{ 36, 255,  36}
\definecolor{cb-yellow}     {RGB}{255, 255, 109}
\definecolor{cb-green-sea}{HTML}{0173B2}
\definecolor{cb-burgundy}{HTML}{029E73}
\definecolor{cb-lilac}{HTML}{D55E00}
\definecolor{first}{HTML}{0173B2}
\definecolor{second}{HTML}{029E73}
\definecolor{third}{HTML}{D55E00}
\definecolor{fourth}{HTML}{00427e}
\definecolor{sns_cb1}{HTML}{0173B2}
\definecolor{sns_cb2}{HTML}{029E73}
\definecolor{sns_cb3}{HTML}{D55E00}
\definecolor{sns_cb4}{HTML}{CC78BC}
\definecolor{sns_cb5}{HTML}{ECE133}
\definecolor{sns_cb6}{HTML}{56B4E9}
\def\va{{\bm{a}}}
\def\ve{{\bm{e}}}
\def\vf{{\bm{f}}}
\def\vg{{\bm{g}}}
\def\vh{{\bm{h}}}
\def\vr{{\bm{r}}}
\def\vx{{\bm{x}}}
\renewcommand{\vec}[1]{\mathbf{#1}}
\def\mA{{\bm{A}}}
\def\mB{{\bm{B}}}
\def\mC{{\bm{C}}}
\def\mG{{\bm{G}}}
\def\mH{{\bm{H}}}
\def\mL{{\bm{L}}}
\def\mM{{\bm{M}}}
\newcommand\deq{\stackrel{\text{\tiny def}}{=}}
\def\st{{\em s.t.~}}
\def\ie{{\em i.e.,~}}
\def\wrt{{\em w.r.t.~}}
\def\resp{{\em resp.~}}
\newcommand{\fgwpa}{\text{FGW}_{p,\alpha}}
\newcommand{\fgwtwoa}{\text{FGW}_{2,\alpha}}
\newcommand{\wap}{\text{W}_{p}}
\newcommand{\w}{\text{W}}
\newcommand{\gw}    {\text{GW}}
\newcommand{\gwp}    {\text{GW}_{p}}
\DeclareMathOperator*{\supp}{supp}
\DeclareMathOperator*{\argmin}{arg~min}
\newcommand{\nn}{\nonumber} % Nonumber in equation
\newcommand{\R}{\mathbb{R}}
\newcommand{\Rp}{\mathbb{R}_{+}} % Define real number.
\newcommand{\I}{\mathbf{1}}
\newcommand{\Ns}{\mathbb{N}^\star} % Define natural number without 0.
\newcommand{\N}{\mathbb{N}} % Define natural number
\newcommand{\X}{\mathbb{X}} 
\newcommand{\Sb}{\mathbb{S}}
\newcommand{\E}[2]{\mathbb{E}_{#1} \left(#2\right)}
\newcommand{\bbS}{\mathbb{S}}
\newcommand{\bbA}{\mathbb{A}}
\newcommand{\bspi}{\boldsymbol{\pi}}
\newcommand{\bsPi}{\boldsymbol{\Pi}}
\newcommand{\bsomega}{\boldsymbol{\omega}}
\newcommand{\bsOmega}{\boldsymbol{\Omega}}
\newcommand{\bslambda}{\boldsymbol{\lambda}}
\begin{document}

\twocolumn[
\vspace{-0.15in}
\icmltitle{\fontsize{14}{16}\selectfont Structure-Aware 
 E(3)-Invariant Molecular Conformer Aggregation Networks}

% E(3)-Invariant Conformer Ensemble Networks with Geometry-Aware Aggregation

% E(3)-Invariant Molecular Conformer Aggregation Networks

% It is OKAY to include author information, even for blind
% submissions: the style file will automatically remove it for you
% unless you've provided the [accepted] option to the icml2024
% package.

% List of affiliations: The first argument should be a (short)
% identifier you will use later to specify author affiliations
% Academic affiliations should list Department, University, City, Region, Country
% Industry affiliations should list Company, City, Region, Country

% You can specify symbols, otherwise they are numbered in order.
% Ideally, you should not use this facility. Affiliations will be numbered
% in order of appearance and this is the preferred way.
% \usepackage[accepted]{icml2024}

\icmlsetsymbol{equal}{*}
\vspace{-0.1in}
\begin{icmlauthorlist}
\icmlauthor{Duy M. H. Nguyen}{equal,uni-stuttgart,imprs,dfki}
\icmlauthor{Nina Lukashina}{equal,uni-stuttgart,imprs}
\icmlauthor{Tai Nguyen}{dfki}
\icmlauthor{An T. Le}{uni-damstadt}
\icmlauthor{TrungTin Nguyen}{queensland}
\icmlauthor{Nhat Ho}{texas}
\icmlauthor{Jan Peters}{dfki,uni-damstadt,hessian-ai}
\icmlauthor{Daniel Sonntag}{dfki,uni-oldenburg}
\icmlauthor{Viktor Zaverkin}{nec}
\icmlauthor{Mathias Niepert}{uni-stuttgart,imprs,nec}
\end{icmlauthorlist}
\icmlaffiliation{uni-stuttgart}{Department of Computer Science, University of Stuttgart, Germany}
\icmlaffiliation{imprs}{Max Planck Research School for Intelligent Systems (IMPRS-IS)}
\icmlaffiliation{dfki}{German Research Center for Artificial Intelligence (DFKI)}
\icmlaffiliation{uni-damstadt}{Department of Computer Science, Technische Universitat Darmstadt, Germany}
\icmlaffiliation{queensland}{School of Mathematics and Physics, University of Queensland, Australia}
\icmlaffiliation{texas}{Department of Statistics and Data Sciences, University of Texas at Austin, USA}
\icmlaffiliation{texas}{Department of Statistics and Data Sciences, University of Texas at Austin, USA}
\icmlaffiliation{hessian-ai}{Hessian.AI}
\icmlaffiliation{uni-oldenburg}{Department of Applied Artificial Intelligence, Oldenburg University, Germany}
\icmlaffiliation{nec}{NEC Laboratories Europe}
\icmlcorrespondingauthor{Duy H. M. Nguyen}{hong01@dfki.de}
\icmlkeywords{Machine Learning, ICML}
\vskip 0.3in
]
\printAffiliationsAndNotice{\icmlEqualContribution}
% this must go after the closing bracket ] following \twocolumn[ ...

% This command actually creates the footnote in the first column
% listing the affiliations and the copyright notice.
% The command takes one argument, which is text to display at the start of the footnote.
% The \icmlEqualContribution command is standard text for equal contribution.
% Remove it (just {}) if you do not need this facility.

% \printAffiliationsAndNotice{}  % leave blank if no need to mention equal contribution
% \printAffiliationsAndNotice{\icmlEqualContribution} % otherwise use the standard text.
\vspace{-1.in}
\begin{abstract}
\vspace{-0.04in}
A molecule's 2D representation consists of its atoms, their attributes, and the molecule's covalent bonds. A 3D (geometric) representation of a molecule is called a conformer and consists of its atom types and Cartesian coordinates. Every conformer has a potential energy, and the lower this energy, the more likely it occurs in nature. Most existing machine learning methods for molecular property prediction consider either 2D molecular graphs or 3D conformer structure representations in isolation. Inspired by recent work on using ensembles of conformers in conjunction with 2D graph representations, we propose $\mathrm{E}$(3)-invariant molecular conformer aggregation networks.  The method integrates a molecule's 2D representation with that of multiple of its conformers. 
Contrary to prior work, we propose a novel 2D--3D aggregation mechanism based on a differentiable solver for the \emph{Fused Gromov-Wasserstein Barycenter} problem and the use of an efficient conformer generation method based on distance geometry.  We show that the proposed aggregation mechanism is $\mathrm{E}$(3) invariant and propose an efficient GPU implementation. Moreover, we demonstrate that the aggregation mechanism helps to significantly outperform state-of-the-art molecule property prediction methods on established datasets. Our implementation is available at this \href{https://github.com/duyhominhnguyen/conan-fgw/tree/main}{link}.

\end{abstract}
\vspace{-0.25in}
\addtocontents{toc}{\protect\setcounter{tocdepth}{-1}}
\section{Introduction}
\vspace{-0.05in}
Machine learning is increasingly used for modeling and analyzing properties of atomic systems with important applications in drug discovery and material design~\citep{Butler2018, Vamathevan2019, Choudhary2022, Fedik2022, Batatia2023}. Most existing machine learning approaches to molecular property prediction either incorporate 2D (topological)~\citep{Kipf2017, Gilmer2017, Xu2018, Velickovic2018} or 3D (geometric) information of molecular structures~\citep{schutt2017schnet, Schuett2021, Batzner2022, batatia2022mace}. 2D molecular graphs describe molecular connectivity (covalent bonds) but ignore the spatial arrangement of the atoms in a molecule (molecular conformation). 3D graph representations capture conformational changes but are commonly used to encode an individual conformer. Many molecular properties, such as solubility and binding affinity \cite{cao2022design}, however, inherently depend on a large number of conformations a molecule can occur as in nature, and employing a single geometry per molecule limits the applicability of machine-learning models. Furthermore, it is challenging to determine conformers that predominantly contribute to the molecular properties of interest. Thus, developing expressive representations for molecular systems when modeling their properties is an ongoing challenge.

To overcome this, recent work has introduced molecular representations that incorporate both 2D molecular graphs and 3D conformers \cite{zhu2023learning}. These methods aim to encode various molecular structures, such as atom types, bond types, and spatial coordinates, leading to more comprehensive feature embeddings. The latest algorithms, including graph neural networks, attention mechanisms \cite{axelrod_molecular_2023}, and long short-term memory networks \cite{wang2024multi}, have demonstrated improved generalization capabilities in various molecular prediction tasks. 
Despite their effectiveness, these methods struggle to balance model complexity and performance and face scalability challenges mainly due to the computational cost of generating 3D conformers. These problems are exacerbated when using several conformers per system, underscoring the need for strategies to mitigate these limitations.

\textbf{Contributions.} We propose a new message-passing neural network architecture that integrates both 2D and ensembles of 3D molecular structures. 
The approach introduces a geometry-aware conformer ensemble aggregation strategy using Fused Gromov-Wasserstein (FGW) barycenters~\cite{titouan_optimal_2019}, in which interactions between atoms across conformers are captured using both latent atom embeddings and conformer structures. The aggregation mechanism is invariant to actions of the group $E(3)$---the Euclidean group in 3 dimensions---such as translations, rotations, and inversion as well as to permutations of the input conformers. To make the proposed method applicable to large-scale problems, we accelerate the solvers for the FGW barycenter problem with entropic-based techniques \cite{rioux2023entropic}, allowing the model to be trained in parallel on multiple GPUs. We also experimentally explore the impact of the number of conformers and demonstrate that, within our framework, a modest number of conformers generated through efficient distance geometry-based sampling achieves state-of-the-art accuracy. We partially explain this through a theoretical analysis showing that the empirical barycenter converges to the target barycenter at a rate of $\mathcal{O}\left(1/K\right)$, where $K$ denotes the number of conformers. Finally, we conduct a systematic evaluation of our proposed approaches, comparing their performance to state-of-the-art algorithms. The results show that our method is competitive and frequently surpasses existing methods across a variety of datasets and tasks.

\vspace{-0.1in}
\section{Background}
\vspace{-0.05in}
We first provide notations used in the paper. We note the simplex histogram with $n$-bins as $\Delta_n := \left\{\bsomega \in \Rp^n: \sum_{i} \omega_i    = 1\right\}$ and  $\bbS_n(\bbA)$ as the set of symmetric matrices of size $n$ taking values in $\bbA \subset \R$. For any $x \in \bsOmega$, $\delta_x$ denotes the Dirac measure in $x$.
Let $\cP(\bsOmega)$ be the set of all probability measures on a space $\bsOmega$. 
We denote $[K] = \{1, 2, \ldots, K\}$ for any $K \in \N$.
We denote the matrix scalar product associated with the Forbenius norm as $\langle \cdot \rangle$. The tensor-matrix multiplication will be denoted as $\otimes$, \ie given any tensor $\tL := \left(L_{ijkl}\right)$ and matrix $\mB:=(B_{kl})$, $\tL \otimes \mB$ is the matrix $\left(\sum_{kl} L_{ijkl} B_{kl}\right)_{ij}$.

A graph $G$ is a pair $(V,E)$ with \emph{finite} sets of
vertices or nodes $V$ and edges $E \subseteq \{ \{u,v\} \subseteq V \mid u \neq v \}$. We set $n \coloneqq |V|$ and write that the graph is of order $n$. For ease of notation, we denote the edge $\{u,v\}$ in $E$ by $(u,v)$ or $(v,u)$. The neighborhood of $v$ in $V$ is denoted by $N(v) \coloneqq  \{ u \in V \mid (v, u) \in E \}$ and the degree of a vertex $v$ is  $|N(v)|$. 
An attributed graph $G$  is a triple $(V, E, \ell_f)$ with a graph $(V, E)$ and (vertex-)feature (attribute) function $\ell_f \colon V \to \Rb^{1 \times d}$, for some $d \in \Ns$. Then $\ell_f(v)$ is an attribute or feature of $v$, for $v$ in $V$. When we have multiple attributes, we have a pair $\mG=(G,\mH)$, where $G = (V, E)$ and $\mH$ in $\Rb^{n\times d}$ is a node attribute matrix. For a matrix $\mH$ in $\Rb^{n\times d}$ and $v$ in $[n]$, we denote by $\mH_{v}$ in $\Rb^{1\times d}$ the $v$th row of $\mH$ such that $\mH_{v} \coloneqq \ell_f(v)$. Analogously, we can define attributes for the edges of the graph. Furthermore, we can encode an $n$-order graph $G$ via an adjacency matrix $\vec{A}(G) \in \{ 0,1 \}^{n \times n}$.
\vspace{-0.05in}
\subsection{Message-Passing Neural Networks}
\vspace{-0.05in}
Message-passing neural networks (MPNN) learn $d$-dimensional real-valued vector representations for each vertex in a graph by exchanging and aggregating information from neighboring nodes. Each vertex $v$ is annotated with a feature  $\hb_{v}^\tup{0}$ in $\Rb^{d}$ representing characteristics such as atom positions and numbers in the case of chemical molecules. In addition, each edge $(u, v)$ is associated with a feature vector $\be(u, v)$. %In computational chemistry applications this is typically the bond type. 
An MPNN architecture consists of a composition of permutation-equivariant parameterized functions. 

Following \citet{Gil+2017} and \citet{Sca+2009}, in each layer, $\ell > 0$,  we compute vertex features
\begingroup
\setlength{\abovedisplayskip}{0.1in}
\setlength{\belowdisplayskip}{0.1in}
\begin{align}
	\hb_{v}^\tup{\ell} &\coloneqq
	\UPD^\tup{\ell}\Bigl(\hb_{v}^\tup{\ell-1},\AGG^\tup{\ell} \bigl(\oms \mb_{v,u}^\tup{\ell}
	\mid u\in N(v) \cms \bigr)\Bigr)\nonumber\\
    %%
    % &\in \Rb^{d}, \label{def:gnn} \text{ with}\\
% \end{align}
% with 
% \begin{equation}\label{def:gnn-2}
	\mb_{v,u}^\tup{\ell} &\coloneqq	\bM^\tup{\ell}\Bigl(\hb_{v}^\tup{\ell-1},\hb_{u}^\tup{\ell-1}, \eb_{v,u}\Bigr) \in \Rb^{d},\label{def:gnn-2}
\end{align}
\endgroup
where  $\UPD^\tup{\ell}$,  $\bM^\tup{\ell}$, and $\AGG^\tup{\ell}$ are differentiable parameterized functions. In the case of graph-level regression problems, one uses
\begingroup
\setlength{\abovedisplayskip}{0.05in}
\setlength{\belowdisplayskip}{0.05in}
\begin{equation}\label{def:readout}
	\hb_G \coloneqq \RO\bigl( \oms \hb_{v}^{\tup{L}}\mid v\in V(G) \cms \bigr) \in \Rb^d,
\end{equation}
\endgroup
to compute a single vectorial representation based on learned vertex features after iteration $L$ where $\RO$
can be a differentiable parameterized function.

Molecules are $3$-dimensional structures that can be represented by \textit{geometric graphs}, capturing each atom's 3D position. To obtain more expressive representations, we also consider geometric input attributes and focus on vectorial features $\vv{\bbv}_{v}, \vv{\bbv}_{u}$ of nodes. 
Since we address the problem of molecular property prediction, where we assume the properties to be invariant to actions of the group $E(3)$, we focus on $E(3)$-invariant MPNNs for geometric graphs.

\begin{figure*}
\centering
      \includegraphics[width=.84\textwidth]{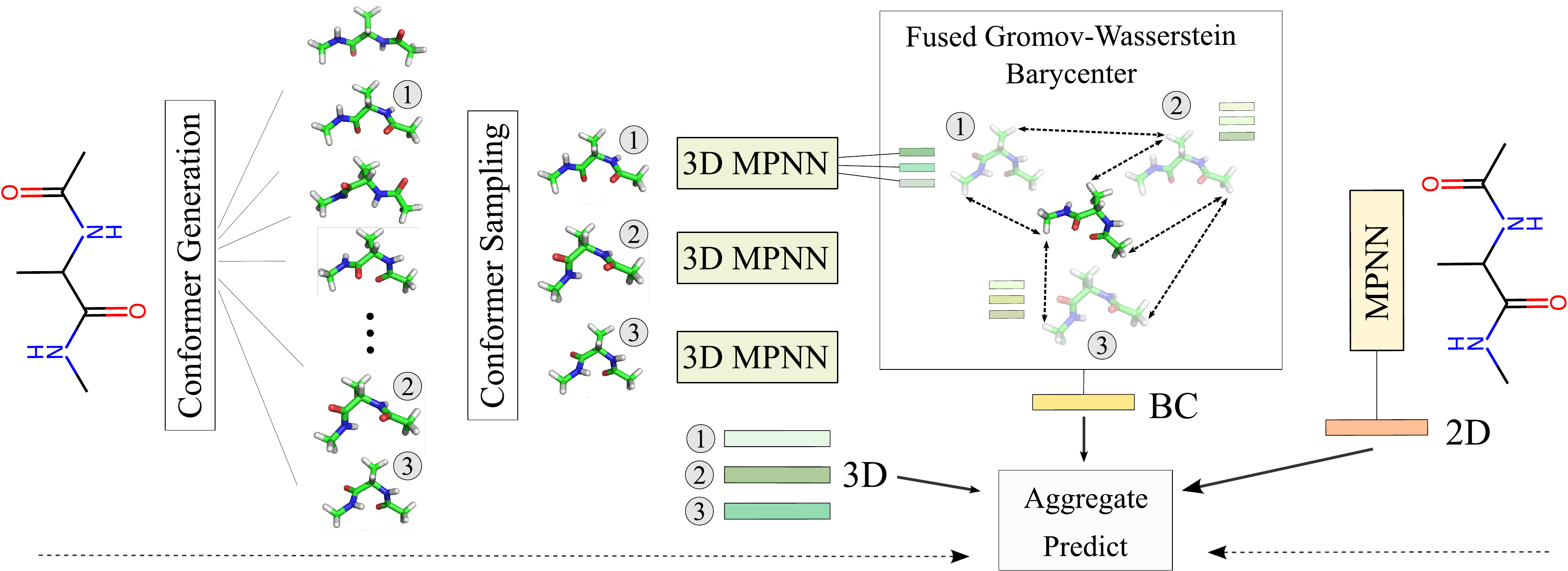}
      \caption{Overview of the proposed conformer aggregation network with alanine dipeptide as example input.}
      \label{fig:trainingOverview}
      \vspace{-0.1in}
\end{figure*}
\vspace{-0.05in}
\subsection{Fused Gromov-Wasserstein Distance}\label{sec_FGW}
%\mn{Here we should add a concise intro to FGW distance and Barycenter computation. Importantly, we need to use terminology that we later, in section \ref{barycenter-aggregation}, instantiate with our 3D invariant features (concretely, from SchNet) and atom distances.}
\vspace{-0.05in}
\textbf{Fused Gromov-Wasserstein.}
An undirected attributed graph $G$ of order $n$  in the optimal transport context is defined as a tuple $G:=(\mH,\mA,\bsomega)$, where $\mH \in \R^{n \times d}$  is
a node feature matrix and $\mA$  is a matrix encoding relationships between nodes, and $\bsomega\in\Delta_n$ denotes the probability measure of nodes within the graph, which can be modeled as the relative
importance weights of graph nodes. 
Without any prior knowledge, uniform weights can be chosen $(\bsomega = \I_n/n)$ \cite{vincent_cuaz_template_2022}. The matrix $\mA$ can be the graph adjacency matrix, the shortest-path matrix or other distance metrics based on the graph topologies \cite{peyre_gromov_wasserstein_2016,titouan_optimal_2019,titouan_fused_2020}.
Given two graphs $G_1, G_2$ of order $n_1, n_2$, respectively, Fused
Gromov-Wasserstein (FGW) distance can be defined as follows:
\setlength{\abovedisplayskip}{5pt}
\setlength{\belowdisplayskip}{5pt}
\allowdisplaybreaks
\begin{align}
  & \hspace{-4mm} \fgwpa(G_1,G_2) \nn\\
    &\hspace{-2mm}  :=  \hspace{-2mm}   \min_{\vpi \in \Pi\left(\bsomega_1, \bsomega_2\right)}\left\langle(1-\alpha) \mM+\alpha \tL\left(\mA_1, \mA_2\right) \otimes \vpi, \vpi\right\rangle. \label{eq_FGW_def}
\end{align}
Here $\mM := \left( d_f(\mH_1[i], \mH_2[j])^p \right)_{n_1 \times n_2} \in \sR^{n_1 \times n_2}$ is the pairwise node distance matrix, $\tL\left(\mA_1, \mA_2\right) = ( |\mA_1[i,j] - \mA_2[l,m]|^p )_{ijlm}$ the 4-tensor representing the alignment cost matrix, and $\bsPi(\bsomega_1,\bsomega_2):=\{\bspi\in\Rp^{n_1\times n_2}|\bspi\I_{n_2} = \bsomega_1,~\bspi\I_{n_1} = \bsomega_2\}$ is the set of all valid couplings between node distributions $\bsomega_1$ and $\bsomega_2$. Moreover, $d_f(\cdot, \cdot)$ is the distance metric in the feature space, and $\alpha \in [0,1]$ is the weight that trades off between the Gromov-Wasserstein cost on the graph structure and Wasserstein cost on the feature signal. In practice, we usually choose $p=2$, Euclidean distance for $d_f(\cdot, \cdot)$, and $\alpha=0.5$ to calculate FGW distance.

\textbf{Entropic Fused Gromov-Wasserstein.} The entropic FGW distance adds an entropic term~\cite{cuturi2013sinkhorn} as
\vspace{-0.05in}
\begingroup
\begin{equation}
\hspace{-4mm} \fgwpa^{\epsilon}(G_1,G_2)  := \fgwpa(G_1,G_2) -  \epsilon \operatorname{H}(\vpi),
\end{equation}
\endgroup
where the entropic scalar $\epsilon$ facilitates the tunable trade-off between solution accuracy and computational performance (w.r.t. lower and higher $\epsilon$, respectively). Solving this entropic FGW involves iterations of solving the linear entropic OT problem~\cref{eq:efgw_update} with (stabilized) Sinkhorn projections (Proposition 2~\cite{peyre2016gromov}), described in~\cref{sec_solving_EFGW} and~\cref{alg:fgw_sinkhorn}. 
\vspace{-0.1in}
\section{\conan: Conformer Aggregation Networks via Fused Gromov-Wasserstein Barycenters}
\vspace{-0.05in}
In what follows, we refer to the representation of atoms and covalent bonds and their attributes as the 2D structure and the atoms, their 3D coordinates, and atom types as 3D structures.  
The following subsections describe each part of the framework in detail.
\vspace{-0.1in}
\subsection{Conformer Generation}
\vspace{-0.05in}
To efficiently generate conformers, we employ distance geometry-based algorithms, which convert distance constraints into Cartesian coordinates. For atomistic systems, constraints typically define lower and upper bounds on interatomic squared distances. In a 2D input graph, covalent bond distances adhere to known ranges, while bond angles are determined by corresponding geminal distances. Adjacent atoms or functional groups adhere to cis/trans limits for rotatable bonds or set values for rigid groups. Other distances have hard sphere lower bounds, usually chosen approximately 10$\%$ below van der Waals radii \cite{hawkins2017conformation}. Chirality constraints are applied to every rigid quadruple of atoms.

A distance geometry algorithm now randomly generates a $3$-dimensional conformation satisfying the constraints. To bias the generation towards low-energy conformations, a simple and efficient force field is typically applied. 
We use efficient implementations from the RDKit package \cite{landrum2016rdkit}.
\vspace{-0.3in}
\subsection{Conformer Aggregation Network}\label{sec_ConAN}
\vspace{-0.05in}
We propose a new MPNN-based neural network that consists of three parts as depicted in Figure \ref{fig:trainingOverview}. 
First, a 2D MPNN model is used to capture the general molecular features such as covalent bond structure and atom features.
Second, a novel FGW barycenter-based implicit $E(3)$-invariant aggregation function that integrates the representations of molecular 3D conformations computed by geometric message-passing neural networks. 
Finally, a permutation and $E(3)$-invariant aggregation function will be used to combine the 2D graph and 3D conformer representations of the molecules. 

\textbf{2D Molecular Graph Message-Passing  Network.}
Each molecule is represented by a 2D graph $G = (V, E)$ with nodes $V$ representing its atoms and edges $E$ representing its covalent bonds, annotated with molecular features $\vh_v^{(0)}$ and $\ve_{v, u}$, respectively (see Section~\ref{seq:expirements} for details).
To propagate features across a molecule and get 2D molecular representations, we use  GAT layers, which utilize a self-attention mechanism in message-passing with the following operations:
\setlength{\abovedisplayskip}{5pt}   
\setlength{\belowdisplayskip}{5pt}
\begin{align}
\label{eq-2d-graph-message-passing}
\hb_{v}^{(\ell)} := & \AGG^\tup{\ell} \bigl(\oms \mb_{v,u}^\tup{\ell} \mid u\in N(v) \cms \bigr) = \sum_{u \in N(v)} \mb_{v,u}^\tup{\ell} \nonumber \\
 & \mbox{ with } \ \ \  \mb_{v,u}^\tup{\ell} = \alpha_{v, u} \Wb \hb_u^{(l-1)},
\end{align}
and where $\alpha_{v, u}$ are the GAT attention coefficients and $\Wb$ a learnable parameter matrix.
Following \citet{Vel+2018}, the attention mechanism is implemented with a single-layer feedforward neural network. To obtain a per-molecule embedding, we compute $    \hb^{\mathtt{2D}}_G = \sum_{v \in V} \hb_v^{(L)}\label{eq-GAT-output}$, where $L$ is the number of message-passing layers. 

\textbf{3D Conformer Message-Passing Network.}
A conformer (atomic structure) of a molecule is defined as $S = \{\bbr_i, Z_i\}_{i=1}^{N}$ where $N$ is the number of atoms, $\bbr_i \in \mathbb{R}^3$ are the Cartesian coordinates of atom $i$, and $Z_i \in \mathbb{N}$ is the atomic number of atom $i$. We use weighted adjacency matrices $\vec{A} \in \mathbb{R}^{n \times n}$ to represent pairwise atom distances. In some cases we will apply a cutoff radius to these distances. 
We employ the geometric MPNN  SchNet~\citep{Sch+2017}, although it is worth noting that alternative $E(3)$-invariant neural networks could be seamlessly integrated. The selection of SchNet is motivated not only by its proficient balance between model complexity and efficacy but also by its proven utility in previous works~\cite{axelrod_molecular_2023}.
SchNet performs $E(3)$-invariant message-passing by using radial basis functions to incorporate the distances of the geometric node features $\vv{\bbv}_{v}, \vv{\bbv}_{u}$. We refer the reader to \cref{appendix-schnet-details} for more details. We denote the matrix whose columns are the atom-wise features of SchNet from the last message-passing layer $L$ with $\Hb$, that is, $\Hb[v] = \hb^{(L)}_v$.

To compute the vector representation for a conformer $S$, we aggregate the atom-wise embeddings obtained from the last message-passing layer $L$ of SchNet into a single vector representation as $\hb_{S}^{\mathtt{3D}} = \sum_{v \in V} \left(\Ab\hb_v^{(L)} + \ba\right)$,  
where $V$ is the set of atoms and $\Ab$ and $\ba$ learned during training. 
For a set of $K$ conformers, the output of our 3D MPNN models is a matrix whose columns are the embeddings $\hb_{S_k}$ for conformer $k$, that is, $\Hb^{\mathtt{3D}}[k] = \hb_{S_{k}}^{\mathtt{3D}}$.

\vspace{0.05in}
\textbf{FGW Barycenter Aggregation.}
\label{barycenter-aggregation}
We now introduce an implicit and differentiable neural aggregation function whose output is determined by solving an FGW barycenter optimization problem. Its input is $K$ graphs $G_k = (\Hb_k, \mA_k,  \bsomega_k)$ for each conformer $S_k = \{\bbr_{k,i}, Z_{k,i}\}_{i=1}^{N}$, with features $\Hb_k$ computed by an $E(3)$-invariant MPNN, with weighted adjacency matrix $\mA_k$ of pairwise atomic distances, and the probability mass of each atom $\bsomega_k$, typically set to $1/N$. The output of the barycenter conformer, denoted as $\overline{G} = (\overline{\Hb}, \overline{\mA}, \overline{\bsomega})$, represents the geometric mean of the input conformers, incorporating both their structural characteristics and features (Figure \ref{fig:trainingOverview}).
The barycenter $\overline{G}$ is the conformer graph that minimizes the sum of weighted FGW distances among the conformer graphs $(G_k)_{k\in[K]}$ with feature matrices $(\Hb_k)_{k\in[K]}$,
structure matrices $(\mA_k)_{k\in[K]}$, and base histograms $(\bsomega_k)_{k\in[K]} \in \Delta_n^K$. That is, given any fixed $K \in \N$ and any $\bslambda \in \Delta_K$, the FGW barycenter is defined as
\begingroup
\setlength{\abovedisplayskip}{-0.05in}
\begin{align}
    \overline{G} :=\argmin_{G} \sum_{k=1}^K \lambda_k \fgwpa(G,G_k),\label{eq_FGWB_def}
\end{align}
\endgroup
where $\fgwpa(G,G_k)$ is the fused Gromov-Wasserstein distance defined in \cref{eq_FGW_def}, and where we set, for each pair of conformer graphs $G = (\Hb, \mA,  \bsomega)$ and $G_k = (\Hb_k, \mA_k,  \bsomega_k)$, $\mM := \left( \left(\Hb[i] - \Hb_k[j]\right)^2\right)_{n \times n} \in \sR^{n \times n}$ as the feature distance matrix, and  $\tL\left(\mA, \mA_k\right) = \left(\mA[i,j] - \mA_k[l, m]|\right)_{ijlm}$ as the 4-tensor representing the structural distance when aligning atoms $i$ to $l$ and $j$ to $m$ (Figure \ref{fig:trainingIllustration}).
Solving~\cref{eq_FGWB_def}, we obtain a unique FGW barycenter graph $\overline{G} = (\overline{\Hb}, \overline{\mA},  \overline{\bsomega})$ with  representation $\overline{\hb}_v = \overline{\Hb}[v]$ for each atom $v$.
We aggregate the atom-wise embeddings obtained from the FGW barycenter $\overline{G}$ into a single vector representation using $    \hb^{\mathtt{BC}}_{\overline{G}} = \sum_{v \in V} \left(\overline{\Ab}~\overline{\hb}_v + \overline{\ba}\right) \label{eq-barycenter-graph-representation}$.

\iffalse
The input embeddings from the upstream 3D-GNN, obtained from the last message-passing layer $L$, are the input to the FGW barycenter aggregation. Note that at this layer $L$, an atom $v$ is represented by a learnable representation $\hb_v^{(L)}$.
\fi

\begin{figure}[t]
\centering    \includegraphics[width=.65\columnwidth]{./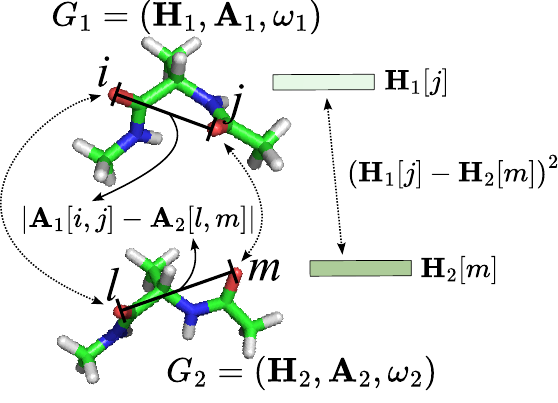}
      \caption{Illustration of the feature-based and structural distances of conformers (here: alanine dipeptide) we use for the computation of the Fused Gromov-Wasserstein barycenter.}
      \label{fig:trainingIllustration}
      \vspace{-0.3in}
\end{figure}

Intuitively, barycenter-based aggregation in Eq.(\ref{eq_FGWB_def}) can be seen as a more distance (structure) preserving pooling operation rather than 
standard mean aggregation. For instance, consider two conformers, where one is a 180-degree rotation of the other. Averaging their coordinates collapses the hydrogen atoms into the same position, creating an unphysical structure. On the contrary, employing the FGW Barycenter might prevent such issues.

\textbf{Invariant Aggregation of 2D and 3D Representations.}
We integrate the representations of the 2D graph and the 3D conformer graphs using an average aggregation as well as the barycenter-based aggregation. The requirement for this aggregation is that it is \emph{invariant} to the order of the input conformers; that is, it treats the conformers as a set as well as invariant to actions of the group $E(3)$. 

Let $\Hb^{\mathtt{2D}}$ and $\Hb^{\mathtt{BC}}$ be the matrices whose columns are, respectively, $K$ copies of the 2D and barycenter representations from
previous sections. Using learnable weight matrices $\Wb^{\mathtt{2D}}$, $\Wb^{\mathtt{3D}}$, and $\Wb^{\mathtt{BC}}$, we obtain the final atom-wise feature matrices as
\vspace{-0.02in}
\begin{equation}
    \Hb^{\mathtt{comb}} = \Wb^{\mathtt{2D}}\Hb^{\mathtt{2D}} + \Wb^{\mathtt{3D}}\Hb^{\mathtt{3D}} + \gamma\, \Wb^{\mathtt{BC}}\Hb^{\mathtt{BC}},
    \label{eq:aggregate}
    \vspace{-0.02in}
\end{equation}
where $\gamma$ is a hyper-parameter controlling the contribution of the barycenter-based feature. Intuitively, this aggregation function, where we use multiple copies of the 2D graph and barycenter representations, provides a balanced contribution of the three types of representations and is empirically highly beneficial. 
Finally, to predict a molecular property, we apply a linear regression layer on a mean-aggregation of the per-conformations embedding as:
\vspace{-0.05in}
\begin{equation}
    \hat{y} = \Wb^{G}\left(\frac{1}{K} \sum_{k=1}^{K} \Hb^{\mathtt{comb}}[k]\right) + \bb^G.
    \label{eq:final_output}
\end{equation}

\vspace{-0.1in}
We can show that the function defined by \cref{eq-2d-graph-message-passing} to \cref{eq:final_output} is invariant to actions of the group $E(3)$ and permutations acting on the sequence of input conformers. 
\begin{theorem}
\label{theorem-invariance}
Let $G$ be the 2D graph and $(S_1, ..., S_K)$ with $S_k = \{\bbr_{k,i}, Z_{k,i}\}_{i=1}^{N}$, $1 \leq k \leq K$, be a sequence of $K$ conformers of a molecule. 
Let $\hat{y} = f_{\bm{\theta}}(G, (S_1, ..., S_K))$ be the function defined by \cref{eq-2d-graph-message-passing} to \cref{eq:final_output}. For any $g_1, ..., g_K \in E(3)$ we have that 
$f_{\bm{\theta}}(G, (g_1  S_1, ..., g_K  S_K)) = f_{\bm{\theta}}(G, (S_1, ..., S_K))$. Moreover, for any $\pi \in \mathrm{Sym}([K])$ we have that $f_{\bm{\theta}}(G, (S_{\pi(1)}, ..., S_{\pi(K)})) = f_{\bm{\theta}}(G, (S_1, ..., S_K))$.
\end{theorem}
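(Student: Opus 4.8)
The plan is to push an arbitrary $E(3)$ action and an arbitrary conformer permutation through the computational graph of $f_{\bm{\theta}}$ stage by stage, showing at each stage that the relevant intermediate quantity is left invariant. I would organize the argument around the three representations that feed the combination step \cref{eq:aggregate}: the 2D embedding $\hb^{\mathtt{2D}}_G$, the set of 3D conformer embeddings $\{\hb^{\mathtt{3D}}_{S_k}\}_{k\in[K]}$, and the barycenter embedding $\hb^{\mathtt{BC}}_{\overline{G}}$. Once each of these is shown to transform as required, the final linear readout \cref{eq:final_output} inherits the invariance immediately since it is an affine function of a mean over the conformer columns of $\Hb^{\mathtt{comb}}$.

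For the $E(3)$ part, the key observation is that every downstream computation depends on a conformer $S_k$ only through $E(3)$-invariant quantities. First I would note that $\hb^{\mathtt{2D}}_G$ is computed purely from the 2D graph $G$ via \cref{eq-2d-graph-message-passing} and is therefore independent of the conformers entirely. Next, since any $g_k\in E(3)$ acts by an isometry, the pairwise distances $\|g_k\bbr_{k,i}-g_k\bbr_{k,j}\|=\|\bbr_{k,i}-\bbr_{k,j}\|$ are preserved, so the weighted adjacency matrix $\mA_k$ is unchanged; and because SchNet message-passing consumes only these interatomic distances (through radial basis functions) together with the $E(3)$-invariant atomic numbers $Z_{k,i}$, the atom-wise feature matrix $\Hb_k$ and hence $\hb^{\mathtt{3D}}_{S_k}$ are unchanged as well. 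Consequently each optimal-transport graph $G_k=(\Hb_k,\mA_k,\bsomega_k)$ is literally identical before and after applying $g_k$, so the barycenter problem \cref{eq_FGWB_def} is unchanged, its minimizer $\overline{G}$ is unchanged, and $\hb^{\mathtt{BC}}_{\overline{G}}$ is unchanged. With all three representations invariant, \cref{eq:aggregate} and \cref{eq:final_output} return the same $\hat{y}$.

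For permutation invariance I would use the fact that the final readout \cref{eq:final_output} applies a mean over the $K$ conformer columns. Replacing $(S_1,\dots,S_K)$ by $(S_{\pi(1)},\dots,S_{\pi(K)})$ permutes the columns of $\Hb^{\mathtt{3D}}$ by $\pi$ while leaving the $K$ identical columns of $\Hb^{\mathtt{2D}}$ untouched; the mean $\tfrac{1}{K}\sum_{k}$ in \cref{eq:final_output} is symmetric under this reindexing, so the 2D and 3D contributions are invariant. The remaining, and only delicate, point is the barycenter term: I would argue that with uniform weights $\lambda_k=1/K$ the objective $\sum_{k}\lambda_k\,\fgwpa(G,G_k)$ in \cref{eq_FGWB_def} is a symmetric function of the multiset $\{G_k\}_{k\in[K]}$, hence its value at every candidate $G$ is unchanged by $\pi$, the set of minimizers is unchanged, and (invoking the stated uniqueness of $\overline{G}$) the barycenter itself is unchanged, making $\hb^{\mathtt{BC}}_{\overline{G}}$ permutation-invariant.

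I expect the FGW barycenter step to be the main obstacle, for two reasons. First, one must be careful that $\hb^{\mathtt{BC}}_{\overline{G}}$ is well defined independently of any internal relabeling of the barycenter's atoms; this is handled because the readout $\hb^{\mathtt{BC}}_{\overline{G}}=\sum_{v\in V}\left(\overline{\Ab}\,\overline{\hb}_v+\overline{\ba}\right)$ is itself a permutation-invariant sum over the barycenter's nodes, so a change of node ordering in $\overline{G}$ leaves the pooled vector intact. Second, the clean permutation argument relies on the weights being uniform (or at least symmetric under $\pi$); I would state this assumption explicitly and note that it is exactly the regime $\lambda_k=1/K$ used in the construction, which also underlies the ``geometric mean'' interpretation of $\overline{G}$. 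The remaining steps---the isometry-invariance of distances, the cited $E(3)$-invariance of SchNet, and the symmetry of the mean readout---are routine and can be verified directly.
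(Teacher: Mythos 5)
Your proposal is correct and follows essentially the same route as the paper's proof: the 2D embedding is conformer-independent, the $E(3)$-invariance of interatomic distances and of the 3D MPNN makes each graph $G_k$ (hence the barycenter) invariant, permutations of conformers merely permute the columns of $\Hb^{\mathtt{3D}}$ and leave the barycenter objective unchanged, and the column mean in \cref{eq:final_output} absorbs the permutation. Your additional remarks on the uniform weights $\lambda_k = 1/K$ and on the node-permutation-invariance of the barycenter readout are points the paper's proof leaves implicit, but they do not change the argument.
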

\section{Efficient and Convergent Molecular Conformer Aggregation}
\vspace{-0.05in}
In this section, we provide some theoretical results to justify our novel FGW barycenter-based implicit $E(3)$-invariant aggregation function that integrates the representations of molecular 3D conformations computed by geometric message-passing neural networks in \cref{sec_ConAN}.
We established a fast convergence rate of the empirical FGW barycenters to the true barycenters as a function of the number of conformer samples $K$.

\textbf{Undirected Attribute Graph Space.} Let us define a structured object to be a triplet $(\bsOmega,\mA,\mu)$, $\bsOmega=\bsOmega_s\times\bsOmega_f$, where $(\bsOmega_f,d_f)$ and $(\bsOmega_s,\mA)$ are feature and structure metric spaces, respectively, and $\mu$ is a probability measure over $\bsOmega$.
%%
%% Space of structured objects
%%
%%
By defining $\bsomega$, the probability measure of the nodes, the graph $G$ represents a fully supported probability measure over the feature/structure of the product space, $\mu = \sum_{k}\omega_k\delta_{(\vx_k,\va_k)}$, which describes the entire undirected attributed graph.
%%
\iffalse
We also denote $\mu_{\mH_1} = \sum_{k} \omega_k\delta_{\vx_k}$ and $\mu_{\mA_1}=\sum_{k} \omega_k\delta_{\va_k}$ the feature and structure  marginals of $\mu_1$.
\fi
%%
We note $\X$ the set of all metric spaces. The space of all structured objects over $(\bsOmega_f,d_f)$ will be written as $\Sb(\bsOmega)$, and is defined by all the triplets $(\bsOmega,\mA,\mu)$, where $(\bsOmega_f,d_f) \in \X$ and $\mu \in \cP(\bsOmega)$.
%%
\iffalse
To avoid finiteness issues in the rest of the paper, given any $p \in\N$, we only consider the space $\Sb_p(\bsOmega) \subset \Sb(\bsOmega)$ \st
\begin{align}
    &\int_{\bsOmega_f} d(a,b)^p d\mu_{\mA_1}(b) \deq  \sum_{k} h_kd(a,a_k)^p < +\infty \quad \forall a \in \bsOmega_f,\nn\\
    %%
    & \int_{\bsOmega_s \times \bsOmega_s} A(x,y)^p d\mu_{\mH_1}(x) d\mu_{\mH_1}(y)\nn\\
    %%
    & \quad \deq \sum_{i,k} h_i h_kA(x_i,y_k)^p 
    < +\infty.
\end{align}
\fi
\iffalse
The definition of FGW in \eqref{eq_FGW_def} implies that 
\begin{align}
    \int_{\cP(\bsOmega)} \fgwpa^p(\mu,\nu) dP(\nu) < +\infty \quad \forall \mu \in \cP(\bsOmega).
\end{align}

$\Sb_p(\bsOmega) \subset \Sb(\bsOmega)$ to be the set of all Borel probability measures $P$ on $\bsOmega$ \st the variance functional defined by
\begin{align}
    \mu \rightarrow \int_{\bsOmega} d^p(\mu,\nu) dP(\nu) \text{ is finite on $\bsOmega$.}
\end{align}
%%
In our context, it is sufficient to consider $(\bsOmega,d)$ be the FGW space, \ie defined by 
\fi

\textbf{True and Empirical Barycenters.} 
Given $(\bsOmega,\mA,\mu) \in \Sb(\bsOmega)$, the variance functional $\sigma^2$ of a distribution  $P \in \cP(\cP_p(\bsOmega))$ is defined as follows:
\vspace{-0.05in}
\begin{align}
\sigma_P^2 = \int_{\cP_p(\bsOmega)} \fgwpa^p(\overline{\mu}_{0},\nu)dP(\nu),
\vspace{-0.05in}\label{eq_def_sigma2}
\end{align}
where $\overline{\mu}_{0}$ is a \textit{true barycenter} defined in \eqref{eq_true_barycenter}.
We will then restrict our attention to the subset $\cP_p(\cP_p(\bsOmega)) = \left\{P \in \cP(\cP_p(\bsOmega)): \sigma_P^2 < +\infty\right\}$. Note that $\cP_p(\bsOmega)$ is a subset of $\cP(\bsOmega)$ with finite variance and defined the same way as $\cP_p(\cP_p(\bsOmega))$ but on $(\bsOmega,\mA,\mu)$.
For any $P \in \cP_p(\cP_p(\bsOmega))$, we define the true barycenter of $P$ is any $\overline{\mu}_{0} \in \cP_p(\bsOmega)$ \st
\begin{align}
    \overline{\mu}_{0} \in \argmin_{\mu \in \cP_p(\bsOmega)} \int_{\cP_p(\bsOmega)} \fgwpa^p(\mu,\nu) dP(\nu). \label{eq_true_barycenter}
\vspace{-0.15in}
\end{align}
In our context of predicting molecular properties, the true barycenter $\overline{\mu}_{0}$ is unknown. However, we can still draw $K$ random sample independently of the 3D molecular representation  $\left\{\mu_k\right\}_{k \in [K]} = \left\{\sum_{l=1}^k \omega_l\delta_{(x_l,a_l)}\right\}_{k\in[K]}$ from $P$.
Then, an \textit{empirical barycenter} is defined as a barycenter of the empirical distribution $P_K = (1/K)\sum_{k} \delta_{\mu_k}$, \ie
\begin{align}
    \overline{\mu}_K \in \argmin_{\mu \in \cP_p(\bsOmega)} \frac{1}{K}\sum_{k} \fgwpa^p(\mu,\mu_k).
\end{align}
\vspace{-0.25in}
\subsection{Fast Convergence of Empirical FGW Barycenter}\label{sec_FCEB}
\vspace{-0.05in}
This work establishes a novel fast rate convergence for empirical barycenters in the FGW space via \cref{theorem_main}, which is proved in \cref{sec_theorem_main}. To the best of our knowledge, this is new in the literature, where only the result for Wasserstein space exists in  \citet{le_gouic_fast_2022}.
%%
\iffalse
\begin{theorem}\label{theorem_main}
    Let $P \in \cP_2(\cP_2(\bsOmega))$ be a probability measure on the 2-FGW space. Then it holds that:
    \begin{enumerate}
        \item[(a)] There exists at least one barycenter $\overline{\mu}_{0} \in \cP_2(\bsOmega)$ of $P$  satisfying \eqref{eq_true_barycenter}.

        \item[(b)] Let $\gamma,\beta>0$ and suppose that every $\mu \in \supp(P)$ is the pushforward of $\overline{\mu}_{0}$ by the gradient of an $\gamma$-strongly convex and $\beta$ smooth function $\psi_{\overline{\mu}_{0} \rightarrow \mu}$,\ie $\mu = (\nabla \psi_{\overline{\mu}_{0} \rightarrow \mu})_{\#}\overline{\mu}_{0}$. If $\beta-\gamma<1$, then $\overline{\mu}_{0}$ is unique and any empirical barycenter $\overline{\mu}_K$ of $P$ satisfies
    \begin{align}
        \E{}{\fgwtwoa^2(\overline{\mu}_{0},\overline{\mu}_K)} \le \frac{4\sigma^2}{(1-\beta+\gamma)^2K}.
    \end{align}
    \end{enumerate}
    %%
\end{theorem}
\fi

\begin{theorem}\label{theorem_main}
    Let $P \in \cP_2(\cP_2(\bsOmega))$ be a probability measure on the 2-FGW space. Let $\overline{\mu}_{0} \in \cP_2(\bsOmega)$ and $\sigma^2_P$ be barycenter and variance functional of $P$ satisfying (\ref{eq_true_barycenter}) and (\ref{eq_def_sigma2}), respectively. Let $\gamma,\beta>0$ and suppose that every $\mu \in \supp(P)$ is the pushforward of $\overline{\mu}_{0}$ by the gradient of an $\gamma$-strongly convex and $\beta$ smooth function $\psi_{\overline{\mu}_{0} \rightarrow \mu}$, \ie $\mu = (\nabla \psi_{\overline{\mu}_{0} \rightarrow \mu})_{\#}\overline{\mu}_{0}$. If $\beta-\gamma<1$, then $\overline{\mu}_{0}$ is unique and any empirical barycenter $\overline{\mu}_K$ of $P$ satisfies
    \vspace{-0.05in}
    \begin{align}
        \E{}{\fgwtwoa^2(\overline{\mu}_{0},\overline{\mu}_K)} \le \frac{4\sigma^2_P}{(1-\beta+\gamma)^2K}.
        \label{eq:upper_bound}
        \vspace{-0.2in}
    \end{align}
    \iffalse
    This is equivalent to the following
     \begin{align}
        \E{}{\fgwtwoa^2(\overline{G}_{0},\overline{G}_K)} \le \frac{4\sigma^2_P}{(1-\beta+\gamma)^2K}. \label{eq:upper_bound}
    \end{align}
    \fi
    %%
\end{theorem}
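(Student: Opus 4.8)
The plan is to transplant the fast-rate framework of \citet{le_gouic_fast_2022} from the Wasserstein space to the geodesic space $(\cP_2(\bsOmega),\fgwtwoa)$. The argument rests on two pillars: a \emph{variance inequality} that upgrades the mere convexity of the barycenter functional into quadratic growth around $\overline{\mu}_{0}$, and a self-bounding empirical-process estimate that converts such growth into the $\cO(1/K)$ rate. Throughout I write $F_P(\mu) := \int_{\cP_2(\bsOmega)} \fgwtwoa^2(\mu,\nu)\,dP(\nu)$, so that $\overline{\mu}_{0}$ minimizes $F_P$, $\overline{\mu}_K$ minimizes $F_{P_K}$ with $P_K = \tfrac1K\sum_k \delta_{\mu_k}$, and $\sigma_P^2 = F_P(\overline{\mu}_{0})$ by \eqref{eq_def_sigma2}.

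First I would establish the variance inequality
\[ F_P(\mu) - F_P(\overline{\mu}_{0}) \ge (1-\beta+\gamma)\,\fgwtwoa^2(\mu,\overline{\mu}_{0}) \quad\text{for all }\mu\in\cP_2(\bsOmega). \]
Fix $\mu$ and let $S$ be the optimal map sending $\overline{\mu}_{0}$ to $\mu$. For each $\nu=(\nabla\psi_\nu)_{\#}\overline{\mu}_{0}\in\supp(P)$, the coupling induced by $(S,\nabla\psi_\nu)$ is admissible for $(\mu,\nu)$, which gives the elementary upper bound $\fgwtwoa^2(\mu,\nu)\le \fgwtwoa^2(\overline{\mu}_{0},\nu) - 2\langle S-\mathrm{id},\nabla\psi_\nu-\mathrm{id}\rangle + \lVert S-\mathrm{id}\rVert^2$. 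The substance of the inequality is the matching \emph{lower} bound in which $\lVert S-\mathrm{id}\rVert^2$ is replaced by $(1-(\beta-\gamma))\lVert S-\mathrm{id}\rVert^2$; this is exactly where the curvature budget $\beta-\gamma<1$ enters, through the Hessian bounds $\gamma \mI \preceq \nabla^2\psi_\nu \preceq \beta \mI$. Integrating against $P$ and invoking the first-order optimality condition $\int(\nabla\psi_\nu-\mathrm{id})\,dP(\nu)=0$ for $\overline{\mu}_{0}$ annihilates the linear term, leaving precisely the claimed quadratic lower bound. Since $1-\beta+\gamma>0$, this inequality forces any two minimizers of $F_P$ to coincide, yielding the asserted uniqueness of $\overline{\mu}_{0}$.

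Granting the variance inequality, the rate follows by a self-bounding argument. Evaluating it at $\mu=\overline{\mu}_K$ and using that $\overline{\mu}_K$ minimizes $F_{P_K}$ gives $(1-\beta+\gamma)\,\fgwtwoa^2(\overline{\mu}_K,\overline{\mu}_{0}) \le F_P(\overline{\mu}_K)-F_P(\overline{\mu}_{0}) \le (P-P_K)[g]$, where $g_\nu := \fgwtwoa^2(\overline{\mu}_K,\nu)-\fgwtwoa^2(\overline{\mu}_{0},\nu)$. The triangle inequality for the metric $\fgwtwoa$ yields $|g_\nu|\le \fgwtwoa(\overline{\mu}_K,\overline{\mu}_{0})\,(\fgwtwoa(\overline{\mu}_K,\nu)+\fgwtwoa(\overline{\mu}_{0},\nu))$, so a Cauchy--Schwarz estimate of the centered empirical average produces a bound of the form $\E{}{(P-P_K)[g]} \le \tfrac{2\sigma_P}{\sqrt K}\sqrt{\E{}{\fgwtwoa^2(\overline{\mu}_K,\overline{\mu}_{0})}}$. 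Writing $x^2 := \E{}{\fgwtwoa^2(\overline{\mu}_K,\overline{\mu}_{0})}$, the chain reads $(1-\beta+\gamma)\,x^2 \le \tfrac{2\sigma_P}{\sqrt K}\,x$; solving this quadratic inequality gives $x \le \tfrac{2\sigma_P}{(1-\beta+\gamma)\sqrt K}$, i.e. the stated bound \eqref{eq:upper_bound}, the factor $4$ being the square of the $2$ coming from the triangle inequality.

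I expect the main obstacle to be the variance inequality rather than the rate argument, because the FGW distance carries a Gromov--Wasserstein (structure) term that is quadratic in the transport plan and admits no Brenier map; consequently the clean map-based expansion above must be justified for the joint feature--structure functional, which requires the FGW analogues of geodesic existence, of the pushforward representation $\mu = (\nabla\psi_{\overline{\mu}_{0}\rightarrow\mu})_{\#}\overline{\mu}_{0}$, and of the first-order optimality condition for FGW barycenters. A secondary technical point is measurability: since $\overline{\mu}_K$ depends on the entire sample, the passage to $\E{}{(P-P_K)[g]}$ must be made rigorous (for instance by localizing to a ball around $\overline{\mu}_{0}$), following the empirical-process machinery of \citet{le_gouic_fast_2022}.
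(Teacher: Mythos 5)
Your overall strategy is genuinely different from the paper's, and the difference matters. The paper does \emph{not} re-run the machinery of \citet{le_gouic_fast_2022} inside the FGW space. Instead it proves a domination inequality: for measures supported on a common ground space, any admissible coupling upper-bounds $\fgwpa$, and via the triangle inequality for $\mA$ together with the elementary bound $(a+b)^p\le 2^{p-1}(a^p+b^p)$ one obtains $\fgwpa(G_1,G_2)\le \wap(\mu_1,\mu_2)$ (inequality \eqref{eq_fgw_wp}), where $\wap$ is the Wasserstein distance for the combined metric $\overline{d}((\vx_1,\va_1),(\vx_2,\va_2))=(1-\alpha)d_f(\vx_1,\vx_2)+2^{p-1}\alpha\,\mA(\va_1,\va_2)$ on the product space $\bsOmega=\bsOmega_s\times\bsOmega_f$. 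It then applies Corollary 4.4 of \citet{le_gouic_fast_2022} (restated as Lemma~\ref{lemma_corollary_4_4_le_gouic}) for the $2$-Wasserstein space over $(\bsOmega,\overline{d})$ as a black box and transfers the bound back through the domination. What that buys is that all of the delicate geometry --- variance inequality, first-order optimality, empirical-process localization --- is borrowed wholesale from the Wasserstein setting, where Brenier maps and the pushforward hypothesis $\mu=(\nabla\psi_{\overline{\mu}_{0}\rightarrow\mu})_{\#}\overline{\mu}_{0}$ make literal sense.

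The gap in your proposal is precisely the step you flag yourself: the variance inequality $F_P(\mu)-F_P(\overline{\mu}_{0})\ge(1-\beta+\gamma)\,\fgwtwoa^2(\mu,\overline{\mu}_{0})$ for the FGW functional. Your derivation expands $\fgwtwoa^2(\mu,\nu)$ along an ``optimal map $S$ sending $\overline{\mu}_{0}$ to $\mu$'' and invokes the first-order optimality condition $\int(\nabla\psi_\nu-\mathrm{id})\,dP(\nu)=0$; both devices belong to Wasserstein theory. The Gromov--Wasserstein component of $\fgwtwoa$ is quadratic in the coupling, its optimal couplings need not be induced by maps, and no analogue of the Brenier/first-order calculus is established (here or elsewhere) for FGW barycenters, so the matching lower bound in which $\lVert S-\mathrm{id}\rVert^2$ is deflated by $1-(\beta-\gamma)$ is not justified for the structure term. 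Until that inequality is proved, the uniqueness claim and the self-bounding rate argument --- which are otherwise sound and mirror the Wasserstein proof --- have nothing to stand on. The paper's comparison-with-$\wap$ route is exactly the device that avoids ever having to prove an FGW variance inequality, at the price of only using the FGW structure through a one-sided metric comparison.
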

\vspace{-0.1in}
The upper bound in \cref{eq:upper_bound} implies that the empirical barycenter converges to the target distribution at a rate of $\mathcal{O}(1/K)$, where $K$ is the number of 3D conformers. This suggests utilizing small values of $K$, such as $K \in \{5, 10\}$, would yield a satisfactory approximation for $\overline{\mu}_{0}$. We confirm this empirically in experiments in Section \ref{sec:ablation_study}.
\vspace{-0.05in}
\begin{algorithm}[H]
   \caption{Entropic FGW Barycenter}
   \label{alg:fgw_barycenter}
   \scalebox{0.9}{
   \begin{minipage}{\linewidth}
\begin{algorithmic}
   \STATE {\bfseries Input:} $\overline{\bsomega}$, $\{G_s := (\mH_s, \mA_s,\bsomega_s)\}_{s=1}^K$, $\epsilon$.
   \STATE {\bfseries Optimizing:} $\overline{G}, \{\vpi_s \in \Pi(\overline{\bsomega}, \bsomega_s)\}_{s=1}^K$.
   \REPEAT
   %\STATE Initialize $\overline{G}^{(k)} = (\overline{\bsomega}, \overline{\mH}^{(k)}, \overline{\mA}^{(k)})$.
   \FOR{$s=1$ {\bfseries to} $K$}
   \STATE Solve $\argmin_{\vpi_s^{(k)}} \fgwpa^{\epsilon}(\overline{G}^{(k)}, G_s)$ with Alg.~\ref{alg:fgw_sinkhorn}.
   \ENDFOR
   \STATE Update $\overline{\mA}^{(k+1)} \leftarrow \frac{1}{\overline{\bsomega}\overline{\bsomega}^\top} \frac{1}{K} \sum_{s=1}^K {\vpi_s^{(k)}} \mA_s {\vpi_s^{(k)}}^\top$.
   \STATE Update $\overline{\mH}^{(k+1)} \leftarrow \mathrm{diag}(1/\overline{\bsomega}) \frac{1}{K} \sum_{s=1}^K {\vpi_s^{(k)}} \mH_s$
   \UNTIL{$k$ in \textit{outer iterations} and \textit{not converged}}
\end{algorithmic}
\end{minipage}}
\end{algorithm}
\vspace{-0.2in}
\subsection{Empirical Entropic FGW Barycenter}
To train on large-scale problems, we propose to solve the entropic relaxation of~\cref{eq_FGWB_def} to take advantage of GPU computing power \cite{peyre2019computational}.
% for utilizing GPU-accelerated Sinkhorn iterations~\cite{peyre2019computational}. 
Given a set of conformer graphs $\{G_s := (\mH_s, \mA_s,\bsomega_s)\}_{s=1}^K$, we want to optimize the entropic barycenter $\overline{G}$, where we fixed the prior on nodes $\overline{\bsomega}$
\begingroup
\setlength{\abovedisplayskip}{-0.05in}
\setlength{\belowdisplayskip}{0pt}
\begin{equation} \label{eq:fgw_barycenter}
\overline{G} = \argmin_{G} \frac{1}{K} \sum_{s=1}^K \fgwpa^{\epsilon}\left(\overline{G}, G_s\right).
\end{equation}
\endgroup

with $\lambda_s = 1/K,\,\forall s \in [1, K]$. \citet{titouan_optimal_2019} solve~\cref{eq:fgw_barycenter} using Block Coordinate Descent, which iteratively minimizes the original FGW distance between the current barycenter and the graphs $G_s$. In our case, we solve for $K$ couplings of entropic FGW distances to the graphs at each iteration, then following the update rule for structure matrix (Proposition 4,~\cite{peyre2016gromov})
\begingroup
\setlength{\abovedisplayskip}{0.02pt}
\setlength{\belowdisplayskip}{0.02pt}
\begin{equation}
    \overline{\mA}^{(k+1)} \leftarrow \frac{1}{\overline{\bsomega}~\overline{\bsomega}^\top} \frac{1}{K}\sum_{s=1}^K {\vpi_s^{(k)}} \mA_s {\vpi_s^{(k)}}^\top,
\end{equation}
\endgroup
and for the feature matrix~\cite{titouan_optimal_2019, cuturi2014fast}
\vspace{-0.05in}
\begingroup
\setlength{\abovedisplayskip}{0.04pt}
\setlength{\belowdisplayskip}{0.04pt}
\begin{equation}
    \overline{\mH}^{(k+1)} \leftarrow \mathrm{diag}(1/\overline{\bsomega}) \frac{1}{K} \sum_{s=1}^K {\vpi_s^{(k)}} \mH_s,
\end{equation}
\endgroup
leading to~\cref{alg:fgw_barycenter}. More details on practical implementations and algorithm complexity are in~\cref{sec_solving_EFGW}.
\vspace{-0.25in}
\section{Related Work}
\vspace{-0.05in}
\textbf{Molecular Representation Learning.}
 The traditional approach for molecular representation referred to as connectivity fingerprints \cite{Morgan1965} encodes the presence of different substructures within a compound in the form of a binary vector. Modern molecular representations used in machine learning for molecular properties prediction include 1D strings \cite{ChemBerta2,SmilesBert}, 2D topological graphs \cite{Gil+2017,Yang2019,rong2020self,hu2020pretraining} and 3D geometric graphs \cite{ChemRL-GEM,zhou2023unimol,liu2022pretraining}. The use of an ensemble of molecular conformations remains a relatively unexplored frontier in research, despite early evidence suggesting its efficacy in property prediction \cite{axelrod_molecular_2023, wang2024multi}.
 Another line of work uses conformers only at training time in a self-supervised loss to improve a 2D MPNN~\cite{pmlr-v162-stark22a}. Contrary to prior work, we introduce a novel and streamlined barycenter-based conformer aggregation technique, seamlessly integrating learned representations from both 2D and 3D MPNNs. 
 Moreover, we show that cost-effective conformers generated through distance-geometry sampling are sufficiently informative.

\textbf{Geometric Graph Neural Networks.}
 Graph Neural Networks (GNNs) designed for geometric graphs operate based on the message-passing framework, where the features of each node are dynamically updated through a process that respects permutation equivariance. Examples are models such as SphereNet \cite{SphereNet}, GMNNs \cite{zaverkin:2020}, DimeNet \cite{gasteiger_dimenet_2020}, GemNet-T \cite{gasteiger2021gemnet}, SchNet \cite{schutt2017schnet}, GVP-GNN, PaiNN, E(n)-GNN \cite{satorras2021En}, MACE \cite{batatia2022mace}, ICTP \cite{zaverkin2024higherrank}, SEGNN \cite{brandstetter2022geometric}, SE(3)-Transformer \cite{fuchs2020se}, and VisNet \cite{wang2024enhancing}.

\textbf{Optimal Transport in Graph Learning.}
By modeling graph features/structures as probability measures, the (Fused) GW distance \cite{titouan_fused_2020} serves as a versatile metric for comparing structured graphs. Previous applications of GW distance include graph node matching \cite{xu2019gromov}, partitioning \cite{xu2019scalable,chowdhury2021generalized}, and its use as a loss function for graph metric learning \cite{vincent2021online, vincent_cuaz_template_2022, chen2020optimal, zeng2023generative}. More recently, FGW has been leveraged as an objective for encoding graphs \cite{tang2023fused} in tasks such as graph prediction \cite{brogat_learning_2022} and classification \cite{ma2023fused}.
To the best of our knowledge, we are the first to introduce the entropic FGW barycenter problem~\cite{peyre_gromov_wasserstein_2016, titouan_fused_2020} 
for molecular representation learning. By employing the entropic formulation~\cite{cuturi2013sinkhorn, cuturi2014fast}, our learning pipeline enjoys a tunable trade-off between barycenter accuracy and computational performance, thus enabling an efficient hyperparameter tuning process. Moreover, we also present empirical barycenter-related theories,  demonstrating how this entropic FGW barycenter framework effectively captures meaningful underlying structures of 3D conformers, thereby enhancing overall performance.
\vspace{-0.1in}
\section{Experiments} 
\label{seq:expirements}
\vspace{-0.05in}
\subsection{Implementation Details}
\vspace{-0.05in}
We encode each molecule in the SMILES format and employ the RDKit package to generate 3D conformers. We set the size of the latent dimensions of GAT~\cite{Vel+2018} to $128/256$. Node features are initialized based on atomic properties such as atomic number, chirality, degree, charge, number of hydrogens, radical electrons, hybridization, aromaticity, and ring membership, while edges are represented as one-hot vectors denoting bond type, stereo configuration, and conjugation status.
Each 3D conformer generated by RDKit comprises $n$ atoms with the corresponding 3D coordinates representing their spatial positions. Subsequently, we establish the graph structure and compute atomic embeddings utilizing the force-field energy-based SchNet model \cite{Sch+2017}, extracting features prior to the $\RO$ layer. Our SchNet configuration incorporates \textit{three interaction blocks} with feature maps of size $F = 128$, employing a radial function defined on Gaussians spaced at intervals of $0.1 \mathring{{A}}$ with a cutoff distance of 10 $\mathring{A}$. The output of each conformer $k \in [K]$ forms a graph $G_k$, utilized in solving the FGW barycenter $\overline{G}$ as defined in Eq. (\ref{eq_FGWB_def}). Subsequently, we aggregate features from 2D, 3D, and barycenter molecule graphs using Eqs. (\ref{eq:aggregate}-\ref{eq:final_output}), followed by MLP layers. Leveraging Sinkhorn iterations in our barycenter solver (Algorithm \ref{alg:fgw_barycenter}), we speed up the training process across multiple GPUs using PyTorch's distributed data-parallel technique. Training the entire model employs the Adam optimizer with initial learning rates selected from ${1e^{-3}, 1e^{-3}/2, 1e^{-4}}$, halved using ReduceLROnPlateau after $10$ epochs without validation set improvement. Further experimental details are provided in the Appendix.

To accelerate the training process, especially in large-scale settings (e.g., BDE dataset), we first train the model with 2D and 3D features for some epochs, and then load the saved model and continue to train with full configurations as in Eq.(\ref{eq:aggregate}) till converge. We set empirically $\gamma$ in Eq.(\ref{eq:aggregate}) is $0.2$. 
\vspace{-0.15in}
\begin{table}[H]
\caption{Number of samples for each split on molecular property prediction, classification tasks, and reaction prediction.}
\setlength{\tabcolsep}{2.8pt}
\vspace{0.05in}
\centering
\scalebox{0.8}{
\begin{tabular}{lccccccc}
\hline
& \textbf{Lipo} & \textbf{ESOL} & \textbf{FreeSolv} &\textbf{BACE} & \textbf{CoV-2 3CL}  & \textbf{Cov-2} & \textbf{BDE}\\
\hline
\textbf{Train} & 2940 & 789 & 449 & 1059 & 50 (485) & 53 (3294) & 8280\\
\textbf{Valid.} & 420 & 112 & 64 & 151  & 15 (157) & 17 (1096) &  1184 \\
\textbf{Test} &  840 & 227 & 129 & 303 & 11 (162) & 22 (1086)  & 2366 \\
\hline
\textbf{Total} &  4200 & 1128 & 642 & 1513 &  76 (804) & 92 (5476) & 11830 \\
\hline
\end{tabular}}
\label{tab:regression-classification}
\end{table}
\vspace{-0.2in}
\subsection{Molecular Property Prediction Tasks}
\vspace{-0.05in}
\textbf{Dataset.} We use four datasets \texttt{Lipo}, \texttt{ESOL}, \texttt{FreeSolv}, and \texttt{BACE} in \texttt{MoleculeNet} benchmark (Table \ref{tab:regression-classification}), spanning on various molecular characteristics such as physical chemistry and biophysics. We split data using random scaffold settings as baselines and reported the mean and standard deviation of mean square error (mse) by running on five trial times. More information for datasets is in Section \ref{sec:dataset_overview} Appendix.

\textbf{Baselines.} We compare against various benchmarks, including both supervised, pre-training, and multi-modal approaches.
\texttt{(i) The supervised methods} are 2D graph neural network models including 2D-GAT \cite{Vel+2018}, D-MPNN \cite{Yang2019}, and AttentiveFP \cite{xiong2019pushing}; \texttt{(ii) 2D molecule pretraining methods} are PretrainGNN \cite{Hu2020Strategies}, GROVER \cite{rong2020self}, MolCLR \cite{wang2022molecular}, ChemRL-Gem \cite{fang2022geometry}, ChemBERTa-2 \cite{ChemBerta2}, and MolFormer \cite{ross2022large}. It's important to note that these models are pre-trained on a vast amount of data; for example, MolFormer is learned on  $1.1$ billion molecules from PubChem and ZINC datasets. 
We also compare with the \texttt{(iii) 2D-3D aggregation} ConfNet model \cite{liu2021fast}, which is one of the winners of KDD Cup on OGB Large-Scale Challenge \cite{hu2021ogblsc}.
Finally, we benchmark with \texttt{3D conformers-based models} such as UniMol \cite{zhou2023unimol}, SchNet, and ChemProp3D \cite{axelrod_molecular_2023}. Among this, UniMol is pre-trained on $209$ M molecular conformation and requires 11 conformers on each downstream task. We train SchNet with $5$ conformers ($10$ for FreeSolv) and test with two versions: (a) taking output at the final layer and averaging different conformers (SchNet-scalar), (b) using feature node embeddings before $\RO$ layers and aggregating conformers by an MLP layer (SchNet-em). In ChemProp3D, we replace the classification header with an MLP layer for regression tasks, training with a 2D molecular graph and $10$ conformers. With the ConfNet, we use $20$ conformers in the training step and provide results for $20 \textrm{ and } 40$ conformers for the evaluations step, followed by configurations in \cite{liu2021fast}.

\vspace{-0.2in}
\begin{table}[H]
\centering
\caption{Models evaluation on regression tasks (MSE $\downarrow$).}
\vspace{0.05in}
\label{tab:regression_task}
\setlength{\tabcolsep}{2.pt}
\resizebox{1.02\columnwidth}{!}{%
\begin{tabular}{lllll}
    \toprule
    \multicolumn{1}{c}{\bf Model}  &\multicolumn{1}{c}{\bf Lipo} &\multicolumn{1}{c}{\bf ESOL} &\multicolumn{1}{c}{\bf FreeSolv} &\multicolumn{1}{c}{\bf BACE} \\
    \cline{1-5}
    2D-GAT & $1.387 \pm 0.206$ & $2.288 \pm 0.017$ & $8.564 \pm 1.345$ & $1.844 \pm 0.33$ \\
    D-MPNN & $0.534 \pm 0.022$ & $0.923 \pm 0.045$ & $4.213 \pm 0.068$  & $0.723 \pm 0.021$ \\
    Attentive FP & $	0.520 \pm 0.001$ & $0.771 \pm 0.026$ & $4.197 \pm 0.193$ & - \\
    PretrainGNN & $	0.545 \pm 0.003$ & $	1.21 \pm 0.005$ & $6.392 \pm 0.003$ & - \\
    GROVER\_{large} & $	0.676 \pm 0.012$ & $	0.798 \pm 0.018$ & $5.162 \pm 0.047$ & - \\
    ChemBERTa-2* & $ 0.639 \pm 0.006$ & $	0.795 \pm 0.033$ & - & $1.858 \pm 0.029$ \\
    ChemRL-GEM & $0.486 \pm 0.008$ & $0.706 \pm 0.061$ & $3.924 \pm 0.436$  & - \\
    MolFormer & $0.492 \pm 0.012$ & $0.766 \pm 0.026$ & $5.485 \pm 0.045$ & $1.091 \pm 0.021$ \\
    % SPMM & $0.692 \pm 0.008$ & $0.818 \pm 0.008$ & $1.907 \pm 0.058$  & $1.096 \pm 0.011$ \\
    % MolCLR & $0.691 \pm 0.004$ & $1.271 \pm 0.040$ & $2.594 \pm 0.249$ & \\
    UniMol & \boldsymbol{$0.374 \pm 0.012$} & $0.741 \pm 0.014$ & {$2.867 \pm 0.186$}& - \\
    SchNet-scalar & $0.704 \pm 0.032$ & $0.672 \pm 0.027$  & $1.608 \pm 0.158$ & $0.723 \pm  0.1$ \\
    SchNet-emb & $0.589 \pm 0.022$ & $0.635 \pm 0.057$ & $1.587 \pm 0.136$ & {$0.692 \pm 0.028$} \\
    ChemProp3D & $0.602 \pm 0.035$ & $0.681 \pm 0.023$ & $2.014 \pm 0.182$ & $0.815 \pm 0.17$ \\
    ConfNet & $1.360 \pm 0.038$ & $2.115 \pm 0.484$ & - & $1.329 \pm 0.042$ \\
    \midrule
    \conan & {$0.556 \pm 0.013$} & \underline{$0.571 \pm 0.019$} & $1.496 \pm 0.158$ & \underline{$0.635 \pm 0.051$} \\
    \conan-FGW & \underline{$0.422 \pm 0.016$} & \boldsymbol{$0.529 \pm 0.022$} & \boldsymbol{$ 1.068 \pm  0.083 $} & \boldsymbol{$0.549 \pm 0.016$} \\
    \bottomrule
\end{tabular}
}%
\end{table}
\vspace{-0.2in}
\textbf{Results.} 
Table \ref{tab:regression_task} presents the experimental findings of \conan, alongside competitive methods, with the best results highlighted in bold. Baseline outcomes from prior studies \cite{zhou2023unimol,fang2022geometry,chang2023bidirectional} are included, while performance for other models is provided through public codes. \conan \, version denotes the aggregation of 2D and 3D features as per Eq. (\ref{eq:aggregate}) without employing the barycenter, whereas \conan-FGW signifies full configurations. We employ a number of conformers $\{5, 5, 10, 5\}$ and $\{5, 5, 5, 5\}$ for \conan , and \conan-FGW, respectively, based on validation results for \texttt{Lipo}, \texttt{ESOL}, \texttt{FreeSolv}, and \texttt{BACE}. From the experiments, several observations emerge: (i) \conan \, proves more effective than relying solely on 2D or 3D, as shown by Conan's performance, achieving second-best rankings on three datasets compared to models using only 2D (ChemRL-GEM) or 3D representations (UniMol). (ii) \conan-FGW consistently outperforms baselines across all datasets, despite employing significantly fewer 3D conformers than \conan. This underscores the importance of leveraging the barycenter to capture invariant 3D geometric characteristics.
\vspace{-0.05in}
\subsection{3D SARS-CoV Molecular Classification Tasks}
\vspace{-0.05in}
\textbf{Dataset.} \ We evaluate \conan \,on two datasets \texttt{Cov-2 3CL} and \texttt{Cov-2} (Table \ref{tab:regression-classification}), focusing on molecular classification tasks. The same splitting for training and testing is followed \cite{axelrod_molecular_2023}. We also apply the CREST \cite{grimme2019exploration} to filter generated conformers by RDKit as \cite{axelrod_molecular_2023} for fair comparisons. Model performance is reported with the receiver operating characteristic area under the curve (ROC) and precision-recall area under the curve (PRC) over three trial times.

\textbf{Baselines.} We compare with three models, namely, SchNetFeatures, ChemProp3D, CP3D-NDU, each with two different attention mechanisms to \textit{ensemble 3D conformers and 2D molecular graph} feature embedding as proposed by \citet{axelrod_molecular_2023}. These baselines generate $200$ conformers for their input algorithms. Additionally, the ConfNet \cite{liu2021fast} is also evaluated using $20$ or $40$ conformers in testing. 
\vspace{-0.2in}
\begin{table}[H]
\caption{Performance of various models on the two molecular classification tasks.}
\vspace{0.1in}
\label{tab:cls-task}
\centering
\setlength{\tabcolsep}{2.5pt}
\resizebox{1.0\columnwidth}{!}{%
\begin{tabular}{ccccc}
\toprule
\textbf{Method} & \textbf{Num Conformers} & \textbf{Dataset} & \textbf{ROC} $\uparrow$ & \textbf{PRC} $\uparrow$ \\
\midrule
SchNetFeatures & 200 & CoV-2 3CL & 0.86 & 0.26 \\ 
ChemProp3D & 200 & CoV-2 3CL & 0.66 & 0.20  \\ 
CP3D-NDU & 200 & CoV-2 3CL & 0.901 & 0.413 \\ 

SchNetFeatures & average neighbors & CoV-2 3CL & 0.84 & 0.29 \\ 
ChemProp3D & average neighbors & CoV-2 3CL & 0.73 & 0.31 \\ 
CP3D-NDU & average neighbors & CoV-2 3CL & \underline{0.916} & \textbf{0.467} \\ 
ConfNet & $\{20, 40\}$ & CoV-2 3CL & 0.493  & 0.128 \\
$\conan$ & 10 & CoV-2 3CL & 0.881 $\pm$ 0.009 & 0.317 $\pm$ 0.052 \\ 
$\conan$-FGW & 5 & CoV-2 3CL & \textbf{0.918 $\pm$ 0.012} & \underline{0.423 $\pm$ 0.045} \\
\midrule
SchNetFeatures & 200 & CoV-2 & 0.63 & 0.037 \\ 
ChemProp3D & 200 & CoV-2 & 0.53 & 0.032 \\ 
CP3D-NDU & 200 & CoV-2 & \underline{0.663} & \underline{0.06} \\ 
SchNetFeatures & average neighbors & CoV-2 & 0.61 & 0.027 \\ 
ChemProp3D & average neighbors & CoV-2 & 0.56 & \textbf{0.10} \\ 
CP3D-NDU & average neighbors & CoV-2 & 0.647 & 0.058 \\ 
ConfNet & $\{20, 40\}$ & CoV-2  & 0.501 $\pm$ 0.001  & 0.36 $\pm$ 0.2 \\
$\conan$ & 10 & CoV-2 & 0.634 $\pm$ 0.053 & 0.031 $\pm$ 0.023 \\ 
$\conan$-FGW & 10 & CoV-2 & \textbf{0.6735 $\pm$ 0.032} & 0.036 $\pm$ 0.014 \\ 
\bottomrule
\end{tabular}
}
\end{table}
\vspace{-0.2in}

\textbf{Results.} Table \ref{tab:cls-task} presents performance of \conan \, and \conan-FGW with the number of conformers $10 \text{ or }5$. It can be seen that \conan-FGW delivers the best performance on ROC metric on two datasets and holds the second-best rank with PRC on CoV-2-3CL while requiring only $10 \text{ or } 5$ conformers compared with $200$ conformers as CP3D-NDU. These results underscore the efficacy of incorporating barycenter components over merely aggregating 2D and 3D conformer embeddings, as observed in \conan.
\subsection{Molecular Conformer Ensemble Benchmark}
\vspace{-0.05in}
\textbf{Dataset.} We run \conan on the BDE dataset (Table \ref{tab:regression-classification}), which is the second-largest setting in \citep{zhu2023learning} aim to predict reaction-level molecule properties. \\
\textbf{Baselines.} \conan\,is compared with state-of-the-art  conformer ensemble strategies presented in \citet{zhu2023learning}, including SchNet \cite{Sch+2017}, DimeNet++ \cite{gasteiger2020directional}, GemNet \cite{gasteiger2021gemnet}, PaiNN \cite{Schuett2021}, ClofNet \cite{du2022se}, and LEFTNet \cite{du2024new}. All these approaches employ $20$ conformers in training and testing. We provide two results of \conan\, \textit{using only $10$ conformers} and based on two architectures, SchNet and LEFTNet.\\
\textbf{Results.} Table \ref{tab:conformer-ensemble} summarizes our achieved scores where the \conan-FGW using LEFTNet backbone holds the second rank overall while using half the number of conformers. Additionally, it can be seen that \conan-FGW improves with significant margins over both base models like SchNet ($1.9737 \rightarrow 1.6047$) and LEFTNet ($1.5276 \rightarrow 1.4829$), demonstrating the generalization of the proposed aggregation.
\vspace{-0.2in}
\begin{table}[H]
\caption{Performance of different conformer ensemble strategies on reaction molecules prediction. Results are in Mean Absolute Error (MAE $\downarrow$). \conan-FGW$^{1}$ and \conan-FGW$^{2}$ denote for our versions using SchNet and LEFTNet, respectively.}
\setlength{\tabcolsep}{2.5pt}
\vspace{0.05in}
\centering
\scalebox{0.56}{
\begin{tabular}{lccccccccc}
\toprule
 & SchNet & {DimeNet++} & {GemNet} & {PainNN} & {ClofNet}  & {LEFTNet} & \small{{\conan-FGW$^{1}$}} & \small{{\conan-FGW$^{2}$}}\\
\hline
\vspace{0.05in}
\textbf{Conf.} &20  & 20 & 20 & 20 & 20 & 20 & 10 & 10 \\
\textbf{MAE $\downarrow$} & 1.9737 & \textbf{1.4741} & 1.6059 & 1.8744 &  2.0106 & 1.5276 & \textit{1.6047} & \textbf{\underline{1.4829}} \\
\bottomrule
\end{tabular}}
\label{tab:conformer-ensemble}
\end{table}
\vspace{-0.25in}
\subsection{Ablation Study}
\label{sec:ablation_study}
\vspace{-0.05in}
\textbf{Contribution of 3D Conformer Number.}\ One of the building blocks of our model is the use of multiple 3D conformations of a molecule. Each molecule is represented by $K$ conformations, so the choice of $K$ affects the model's behavior. We treat $K$ as a hyperparameter and conduct experiments to  
validate the impact on model performance. To this end, we test on the \conan \, version with different $K$ ($K=0$ is equivalent to the 2D-GAT baseline) and report performance in Table \ref{tab:k-choice} Appendix.
We can observe that using 3D conformers with $K \geq 1$ clearly improves performance compared to using only 2D molecular graphs as 2D-GAT. Furthermore, there is no straightforward dependency between the number of conformations in use and the accuracy of the model. For e.g., 
the performance tends to increase when using $K=10$ (Lipo and FreeSolv), but overall, the best trade-off value is $K=5$.
\vspace{-0.1in}
\begin{figure}[H]
\centering   \includegraphics[width=0.78\columnwidth]{./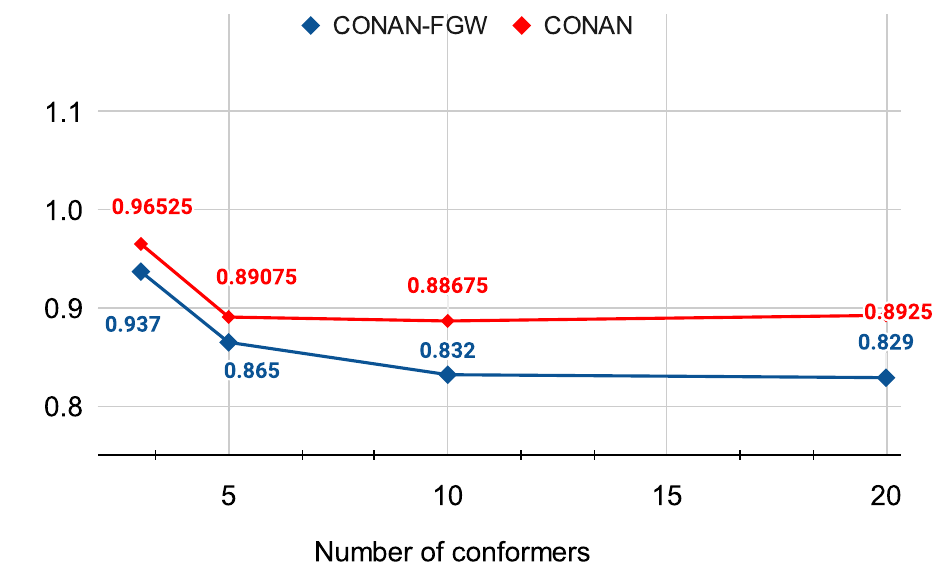}
\vspace{-0.1in}
      \caption{Ablation study on the effect of number conformers on the FGW barycenter component on valid sets.}
      \label{fig:ablation_fgw}
\end{figure}
\vspace{-0.2in}
\textbf{Contribution of FGW Barycenter Aggregation.} \ We examine the effect of barycenter aggregation 
when varying the number of conformers $K$.
Figure \ref{fig:ablation_fgw} summarizes results for those settings where we report average RMSE over four datasets in the MoleculeNet benchmark. We draw the following observations. First, \conan-FGW shows notable enhancements as the number of conformers increases, with $K$ values ranging within the set ${3, 5, 10}$;
however, when as $K=20$, discernible disparities compared to the results obtained at $K=10$ diminish. We argue that this phenomenon aligns consistently with theoretical results in \textbf{Theorem \ref{theorem_main}} suggesting that employing a sufficiently large $K$ facilitates a precise approximation of the target barycenter.

Secondly, upon examining various datasets, it becomes evident that \conan-FGW consistently demonstrates enhanced performance with the utilization of larger conformers, a phenomenon not uniformly observed in the case of \conan. This observation validates the robustness and resilience inherent in \conan-FGW. We attribute this advantage to the efficacy of its geometry-informed aggregation strategy in ensemble learning with diverse 3D conformers.

\textbf{Generalization to other Backbone Model.}
We investigate \conan\, and \conan-FGW performance using the VisNet backbone \cite{wang2024enhancing}, an equivariant geometry-enhanced graph neural network for 3D conformer embedding extraction. Results in Table \ref{tab:visnet} confirm that \conan-FGW still advances \conan\, performance. Between VisNet and SchNet, there is no universal best choice over datasets.
\vspace{-0.2in}
\begin{table}[H]
\centering
\caption{\conan\, evaluation using VisNet and SchNet on regression tasks (MSE $\downarrow$).}
\vspace{0.1in}
\label{tab:visnet}
\setlength{\tabcolsep}{2.5pt}
\resizebox{0.9\columnwidth}{!}{%
\begin{tabular}{lllll}
    \toprule
    \multicolumn{1}{c}{\bf Model}  &\multicolumn{1}{c}{\bf Lipo} &\multicolumn{1}{c}{\bf ESOL} &\multicolumn{1}{c}{\bf FreeSolv} &\multicolumn{1}{c}{\bf BACE} \\
    \cline{1-5}
    \conan\, (VisNet) & {$0.554 \pm 0.448$} & {$1.025 \pm 0.119$} & $0.692 \pm 0.032$ & $0.612 \pm 0.148$ \\
    \conan-FGW & {$0.495 \pm 0.008$} & {$0.552 \pm 0.052$} & \boldsymbol{$ 0.643 \pm  0.015 $} & \boldsymbol{$0.469 \pm 0.012$} \\
    \midrule
    \conan\,(SchNet) & {$0.556 \pm 0.013$} & {$0.571 \pm 0.019$} & $1.496 \pm 0.158$ & $0.635 \pm 0.051$ \\
    \conan-FGW & \boldsymbol{$0.422 \pm 0.016$} & \boldsymbol{$0.529 \pm 0.022$} & {$ 1.068 \pm  0.083 $} & {$0.549 \pm 0.016$} \\
    \bottomrule
\end{tabular}
}%
\end{table}
\vspace{-0.3in}
\subsection{3D Conformers distance distribution}
\vspace{-0.05in}
We check diversity conformers randomly selected from a set of conformers generated by RDKit. For each pair of 3D conformers, we compute the optimal root mean square distance, which first aligns two molecules before measuring distance. Two settings are conducted: (i) estimating the mean, variance, max, and min distances distribution for conformers sampled by \conan\, over $200$ conformers generated by RDKit. and (ii) estimate distribution for those values in case they are the top closest conformers. Figure \ref{fig:visualize_distance} below shows our observation with a box plot on the validation set of Fressolv.
\vspace{-0.1in}
\begin{figure}[H]
\centering
\includegraphics[width=0.85\columnwidth]{./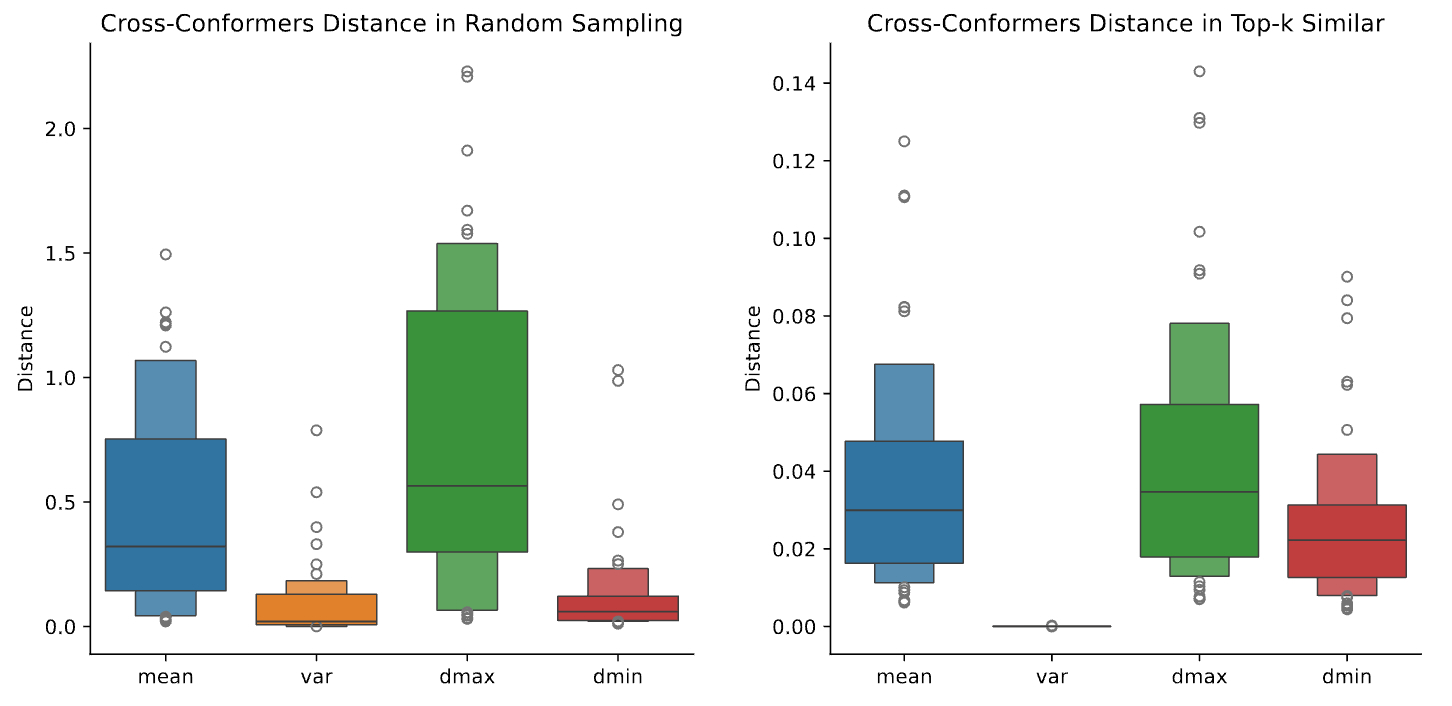}
\vspace{-0.15in}
      \caption{(left) box-plot distribution of mean, variance, maximum, and minimum distances among conformers; (right) distribution of the same values where sample top-$k$ closest conformers.}
      \label{fig:visualize_distance}
\end{figure}
\vspace{-0.2in}
We observe that the distribution on the left ranges from 0.1 to 1.5, while in the worst case, the distance is between 0.01 and 0.08. Additionally, there's a large gap between $d_{\mathrm{max}}$ and $d_\mathrm{min}$ on the left, whereas on the right, their means are close. It, therefore, can be seen that \conan \, sampling strategy, given 200 RDKit-generated conformers, remains consistent and diverse.

\vspace{-0.1in}
\subsection{FGW Barycenter Algorithm Efficiency} \label{subsection-efficiency}
\vspace{-0.05in}
We contrast our entropic solver (Algorithm \ref{alg:fgw_barycenter}) with FGW-Mixup \cite{ma2023fused}
for the $K$ barycenter problem. FGW-Mixup accelerates FGW problem-solving by relaxing coupling feasibility constraints. However, as the number of conformers $K$ increases, FGW-Mixup requires more outer iterations due to compounding marginal errors in solving $K$ FGW distances. In contrast, our approach employs an entropic-relaxation FGW formulation ensuring that marginal constraints are respected, resulting in a less noisy FGW subgradient. Furthermore, we implement our algorithm with distributed computation on multi-GPUs, as highlighted in Fig. \ref{fig:runtime_freesolv_cov23cl}. This figure illustrates epoch durations during both forward and backward steps of training, showcasing the performance across various conformer setups on FreeSolv and CoV-2 3CL datasets. Utilizing a batch size of 32 conformers, all three algorithms employ early termination upon reaching error tolerance. Notably, our solver exhibits linear scalability with \(K\), while FGW-Mixup shows exponential growth, presenting challenges for large-scale learning tasks. More details are in~\cref{fgwmixup-details}.

\vspace{-0.15in}
\begin{figure}[H]
\centering
\includegraphics[width=0.95\columnwidth]{./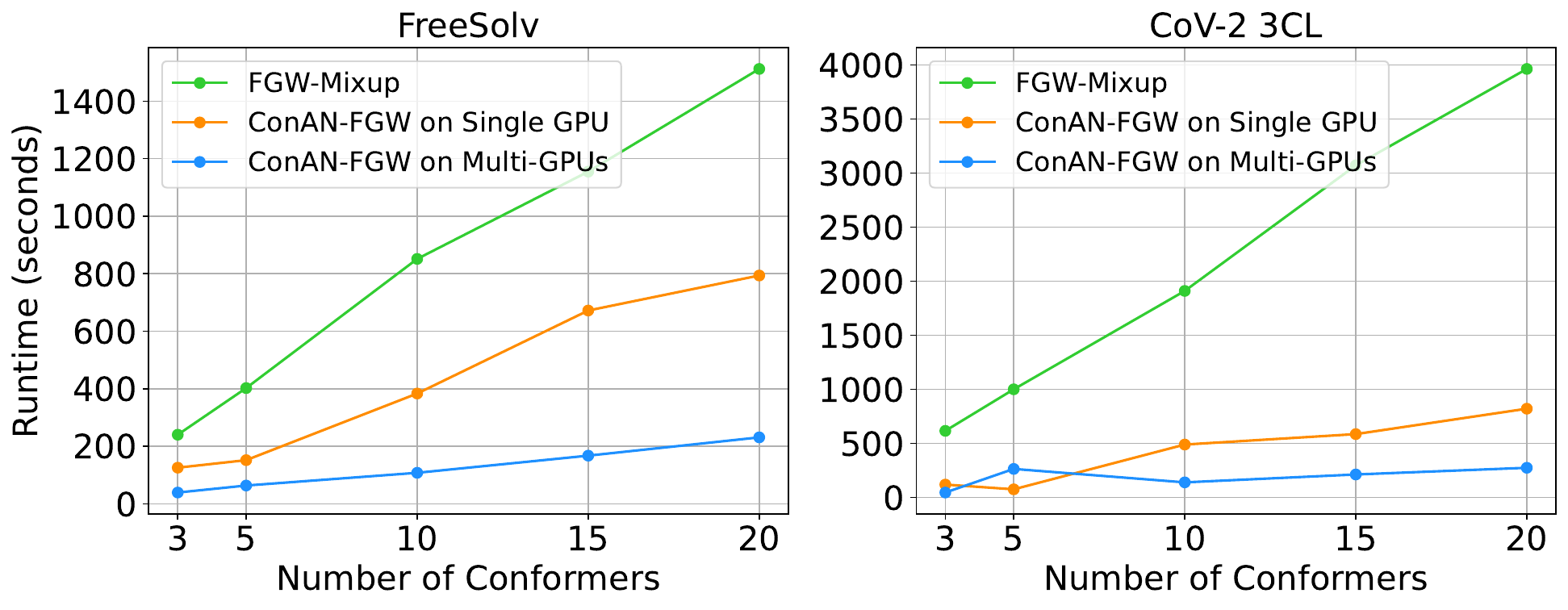}
\vspace{-0.15in}
      \caption{Comparing runtimes of FGW-Mixup, \conan-FGW (single and multi-GPU).}
      \label{fig:runtime_freesolv_cov23cl}
\end{figure}
\vspace{-0.3in}
\section{Conclusion and Future Works}
\vspace{-0.05in}
In this study, we present an $E(3)$-invariant molecular conformer aggregation network that
integrates 2D molecular graphs, 3D conformers, and geometry-attributed structures using Fused Gromov-Wasserstein barycenter formulations. The results indicate the effectiveness of this approach, surpassing several baseline methods across diverse downstream tasks, including molecular property prediction and 3D classification. Moreover, we investigate the convergence properties of the empirical barycenter problem, demonstrating that an adequate number of conformers can yield a reliable approximation of the target structure. To enable training on large datasets, we also introduce entropic barycenter solvers, maximizing GPU utilization. Future research directions include exploring the robustness of using RDKit for multiple low-energy scenarios or more accurate reference methods for atomic structure relaxation, such as density-functional theory. Finally, extending \conan, to learn from large-scale unlabeled multi-modal molecular datasets holds significant promise for advancing the field.

\section*{Acknowledgement}
\vspace{-0.05in}
The authors thank the
International Max Planck Research School for Intelligent Systems (IMPRS-IS) for supporting Duy
M. H. Nguyen and Nina Lukashina. Duy
M. H. Nguyen and Daniel Sonntag are also supported by the XAINES project (BMBF, 01IW20005), No-IDLE project (BMBF, 01IW23002), and the Endowed Chair of Applied Artificial Intelligence, Oldenburg
University. An T. Le was supported by the German Research Foundation project METRIC4IMITATION (PE 2315/11-1). Nhat Ho acknowledges support from the NSF IFML 2019844 and the NSF AI Institute for Foundations of Machine Learning. Mathias Niepert acknowledges funding by Deutsche Forschungsgemeinschaft (DFG, German Research Foundation) under Germany’s Excellence Strategy - EXC and support by the Stuttgart Center for Simulation Science (SimTech).  Furthermore, we acknowledge the support of the European Laboratory for Learning and Intelligent Systems (ELLIS) Unit Stuttgart. 

\section*{Impact Statement}
\vspace{-0.05in}
This paper presents work whose goal is to advance the field of Machine Learning, focusing on molecular representation learning. There are many potential societal consequences of our work, none of which we feel must be specifically highlighted here.

\bibliography{ICML2024/main}

\begin{thebibliography}{85}
\providecommand{\natexlab}[1]{#1}
\providecommand{\url}[1]{\texttt{#1}}
\expandafter\ifx\csname urlstyle\endcsname\relax
  \providecommand{\doi}[1]{doi: #1}\else
  \providecommand{\doi}{doi: \begingroup \urlstyle{rm}\Url}\fi

\bibitem[Ahmad et~al.(2022)Ahmad, Simon, Chithrananda, Grand, and
  Ramsundar]{ChemBerta2}
Ahmad, W., Simon, E., Chithrananda, S., Grand, G., and Ramsundar, B.
\newblock Chemberta-2: Towards chemical foundation models, 2022.

\bibitem[Axelrod \& Gómez-Bombarelli(2023)Axelrod and
  Gómez-Bombarelli]{axelrod_molecular_2023}
Axelrod, S. and Gómez-Bombarelli, R.
\newblock Molecular machine learning with conformer ensembles.
\newblock \emph{Mach. Learn.: Sci. Technol.}, 4\penalty0 (3):\penalty0 035025,
  September 2023.
\newblock ISSN 2632-2153.
\newblock \doi{10.1088/2632-2153/acefa7}.

\bibitem[Batatia et~al.(2022)Batatia, Kovacs, Simm, Ortner, and
  Csanyi]{batatia2022mace}
Batatia, I., Kovacs, D.~P., Simm, G., Ortner, C., and Csanyi, G.
\newblock Mace: Higher order equivariant message passing neural networks for
  fast and accurate force fields.
\newblock In Koyejo, S., Mohamed, S., Agarwal, A., Belgrave, D., Cho, K., and
  Oh, A. (eds.), \emph{Advances in Neural Information Processing Systems},
  volume~35, pp.\  11423--11436. Curran Associates, Inc., 2022.

\bibitem[Batatia et~al.(2023)Batatia, Benner, Chiang, Elena, Kovács,
  Riebesell, Advincula, Asta, Baldwin, Bernstein, Bhowmik, Blau, Cărare,
  Darby, De, Pia, Deringer, Elijošius, El-Machachi, Fako, Ferrari,
  Genreith-Schriever, George, Goodall, Grey, Han, Handley, Heenen, Hermansson,
  Holm, Jaafar, Hofmann, Jakob, Jung, Kapil, Kaplan, Karimitari, Kroupa,
  Kullgren, Kuner, Kuryla, Liepuoniute, Margraf, Magdău, Michaelides, Moore,
  Naik, Niblett, Norwood, O'Neill, Ortner, Persson, Reuter, Rosen, Schaaf,
  Schran, Sivonxay, Stenczel, Svahn, Sutton, van~der Oord, Varga-Umbrich,
  Vegge, Vondrák, Wang, Witt, Zills, and Csányi]{Batatia2023}
Batatia, I., Benner, P., Chiang, Y., Elena, A.~M., Kovács, D.~P., Riebesell,
  J., Advincula, X.~R., Asta, M., Baldwin, W.~J., Bernstein, N., Bhowmik, A.,
  Blau, S.~M., Cărare, V., Darby, J.~P., De, S., Pia, F.~D., Deringer, V.~L.,
  Elijošius, R., El-Machachi, Z., Fako, E., Ferrari, A.~C.,
  Genreith-Schriever, A., George, J., Goodall, R. E.~A., Grey, C.~P., Han, S.,
  Handley, W., Heenen, H.~H., Hermansson, K., Holm, C., Jaafar, J., Hofmann,
  S., Jakob, K.~S., Jung, H., Kapil, V., Kaplan, A.~D., Karimitari, N., Kroupa,
  N., Kullgren, J., Kuner, M.~C., Kuryla, D., Liepuoniute, G., Margraf, J.~T.,
  Magdău, I.-B., Michaelides, A., Moore, J.~H., Naik, A.~A., Niblett, S.~P.,
  Norwood, S.~W., O'Neill, N., Ortner, C., Persson, K.~A., Reuter, K., Rosen,
  A.~S., Schaaf, L.~L., Schran, C., Sivonxay, E., Stenczel, T.~K., Svahn, V.,
  Sutton, C., van~der Oord, C., Varga-Umbrich, E., Vegge, T., Vondrák, M.,
  Wang, Y., Witt, W.~C., Zills, F., and Csányi, G.
\newblock A foundation model for atomistic materials chemistry, 2023.

\bibitem[Batzner et~al.(2022)Batzner, Musaelian, Sun, Geiger, Mailoa,
  Kornbluth, Molinari, Smidt, and Kozinsky]{Batzner2022}
Batzner, S., Musaelian, A., Sun, L., Geiger, M., Mailoa, J.~P., Kornbluth, M.,
  Molinari, N., Smidt, T.~E., and Kozinsky, B.
\newblock E(3)-equivariant graph neural networks for data-efficient and
  accurate interatomic potentials.
\newblock \emph{Nat. Commun.}, 13\penalty0 (1):\penalty0 2453, May 2022.
\newblock ISSN 2041-1723.

\bibitem[Brandstetter et~al.(2022)Brandstetter, Hesselink, van~der Pol,
  Bekkers, and Welling]{brandstetter2022geometric}
Brandstetter, J., Hesselink, R., van~der Pol, E., Bekkers, E.~J., and Welling,
  M.
\newblock Geometric and physical quantities improve e(3) equivariant message
  passing.
\newblock In \emph{International Conference on Learning Representations}, 2022.

\bibitem[Brogat-Motte et~al.(2022)Brogat-Motte, Flamary, Brouard, Rousu, and
  D'Alché-Buc]{brogat_learning_2022}
Brogat-Motte, L., Flamary, R., Brouard, C., Rousu, J., and D'Alché-Buc, F.
\newblock Learning to {Predict} {Graphs} with {Fused} {Gromov}-{Wasserstein}
  {Barycenters}.
\newblock In Chaudhuri, K., Jegelka, S., Song, L., Szepesvari, C., Niu, G., and
  Sabato, S. (eds.), \emph{Proceedings of the 39th {International} {Conference}
  on {Machine} {Learning}}, volume 162 of \emph{Proceedings of {Machine}
  {Learning} {Research}}, pp.\  2321--2335. PMLR, July 2022.

\bibitem[Butler et~al.(2018)Butler, Davies, Cartwright, Isayev, and
  Walsh]{Butler2018}
Butler, K.~T., Davies, D.~W., Cartwright, H., Isayev, O., and Walsh, A.
\newblock Machine learning for molecular and materials science.
\newblock \emph{Nature}, 559\penalty0 (7715):\penalty0 547--555, Jul 2018.
\newblock ISSN 1476-4687.

\bibitem[Cao et~al.(2022)Cao, Coventry, Goreshnik, Huang, Sheffler, Park, Jude,
  Markovi{\'c}, Kadam, Verschueren, et~al.]{cao2022design}
Cao, L., Coventry, B., Goreshnik, I., Huang, B., Sheffler, W., Park, J.~S.,
  Jude, K.~M., Markovi{\'c}, I., Kadam, R.~U., Verschueren, K.~H., et~al.
\newblock Design of protein-binding proteins from the target structure alone.
\newblock \emph{Nature}, 605\penalty0 (7910):\penalty0 551--560, 2022.

\bibitem[Chang \& Ye(2023)Chang and Ye]{chang2023bidirectional}
Chang, J. and Ye, J.~C.
\newblock Bidirectional generation of structure and properties through a single
  molecular foundation model.
\newblock \emph{arXiv preprint arXiv:2211.10590}, 2023.

\bibitem[Chen et~al.(2020)Chen, B{\'e}cigneul, Ganea, Barzilay, and
  Jaakkola]{chen2020optimal}
Chen, B., B{\'e}cigneul, G., Ganea, O.-E., Barzilay, R., and Jaakkola, T.
\newblock Optimal transport graph neural networks.
\newblock \emph{arXiv preprint arXiv:2006.04804}, 2020.

\bibitem[Choudhary et~al.(2022)Choudhary, DeCost, Chen, Jain, Tavazza, Cohn,
  Park, Choudhary, Agrawal, Billinge, Holm, Ong, and Wolverton]{Choudhary2022}
Choudhary, K., DeCost, B., Chen, C., Jain, A., Tavazza, F., Cohn, R., Park,
  C.~W., Choudhary, A., Agrawal, A., Billinge, S. J.~L., Holm, E., Ong, S.~P.,
  and Wolverton, C.
\newblock Recent advances and applications of deep learning methods in
  materials science.
\newblock \emph{npj Comput. Mater.}, 8\penalty0 (1):\penalty0 59, Apr 2022.
\newblock ISSN 2057-3960.

\bibitem[Chowdhury \& Needham(2021)Chowdhury and
  Needham]{chowdhury2021generalized}
Chowdhury, S. and Needham, T.
\newblock Generalized spectral clustering via gromov-wasserstein learning.
\newblock In \emph{International Conference on Artificial Intelligence and
  Statistics}, pp.\  712--720. PMLR, 2021.

\bibitem[Cuturi(2013)]{cuturi2013sinkhorn}
Cuturi, M.
\newblock Sinkhorn distances: Lightspeed computation of optimal transport.
\newblock \emph{Advances in neural information processing systems}, 26, 2013.

\bibitem[Cuturi \& Doucet(2014)Cuturi and Doucet]{cuturi2014fast}
Cuturi, M. and Doucet, A.
\newblock Fast computation of wasserstein barycenters.
\newblock In \emph{International conference on machine learning}, pp.\
  685--693. PMLR, 2014.

\bibitem[Du et~al.(2022)Du, Zhang, Du, Meng, Chen, Zheng, Shao, and
  Liu]{du2022se}
Du, W., Zhang, H., Du, Y., Meng, Q., Chen, W., Zheng, N., Shao, B., and Liu,
  T.-Y.
\newblock Se (3) equivariant graph neural networks with complete local frames.
\newblock In \emph{International Conference on Machine Learning}, pp.\
  5583--5608. PMLR, 2022.

\bibitem[Du et~al.(2024)Du, Wang, Feng, Wang, Ji, Gomes, Ma, et~al.]{du2024new}
Du, Y., Wang, L., Feng, D., Wang, G., Ji, S., Gomes, C.~P., Ma, Z.-M., et~al.
\newblock A new perspective on building efficient and expressive 3d equivariant
  graph neural networks.
\newblock \emph{Advances in Neural Information Processing Systems}, 36, 2024.

\bibitem[Ellinger et~al.(2020)Ellinger, Bojkova, Zaliani, Cinatl, Claussen,
  Westhaus, Reinshagen, Kuzikov, Wolf, Geisslinger, Gribbon, and
  Ciesek]{PPR:PPR153247}
Ellinger, B., Bojkova, D., Zaliani, A., Cinatl, J., Claussen, C., Westhaus, S.,
  Reinshagen, J., Kuzikov, M., Wolf, M., Geisslinger, G., Gribbon, P., and
  Ciesek, S.
\newblock Identification of inhibitors of sars-cov-2 in-vitro cellular toxicity
  in human (caco-2) cells using a large scale drug repurposing collection,
  2020.

\bibitem[Fang et~al.(2021)Fang, Liu, Lei, He, Zhang, Zhou, Wang, Wu, and
  Wang]{ChemRL-GEM}
Fang, X., Liu, L., Lei, J., He, D., Zhang, S., Zhou, J., Wang, F., Wu, H., and
  Wang, H.
\newblock Chemrl-gem: Geometry enhanced molecular representation learning for
  property prediction.
\newblock \emph{Nature Machine Intelligence}, 2021.
\newblock \doi{10.48550/ARXIV.2106.06130}.

\bibitem[Fang et~al.(2022)Fang, Liu, Lei, He, Zhang, Zhou, Wang, Wu, and
  Wang]{fang2022geometry}
Fang, X., Liu, L., Lei, J., He, D., Zhang, S., Zhou, J., Wang, F., Wu, H., and
  Wang, H.
\newblock Geometry-enhanced molecular representation learning for property
  prediction.
\newblock \emph{Nature Machine Intelligence}, 4\penalty0 (2):\penalty0
  127--134, 2022.

\bibitem[Fedik et~al.(2022)Fedik, Zubatyuk, Kulichenko, Lubbers, Smith, Nebgen,
  Messerly, Li, Boldyrev, Barros, Isayev, and Tretiak]{Fedik2022}
Fedik, N., Zubatyuk, R., Kulichenko, M., Lubbers, N., Smith, J.~S., Nebgen, B.,
  Messerly, R., Li, Y.~W., Boldyrev, A.~I., Barros, K., Isayev, O., and
  Tretiak, S.
\newblock Extending machine learning beyond interatomic potentials for
  predicting molecular properties.
\newblock \emph{Nat. Rev. Chem.}, 6\penalty0 (9):\penalty0 653--672, Sep 2022.
\newblock ISSN 2397-3358.
\newblock \doi{10.1038/s41570-022-00416-3}.

\bibitem[Feydy et~al.(2019)Feydy, S{\'e}journ{\'e}, Vialard, Amari, Trouv{\'e},
  and Peyr{\'e}]{feydy2019interpolating}
Feydy, J., S{\'e}journ{\'e}, T., Vialard, F.-X., Amari, S.-i., Trouv{\'e}, A.,
  and Peyr{\'e}, G.
\newblock Interpolating between optimal transport and mmd using sinkhorn
  divergences.
\newblock In \emph{The 22nd International Conference on aRtIfIcIaL InTeLlIgEnCe
  and Statistics}, pp.\  2681--2690. PMLR, 2019.

\bibitem[Fuchs et~al.(2020)Fuchs, Worrall, Fischer, and Welling]{fuchs2020se}
Fuchs, F., Worrall, D., Fischer, V., and Welling, M.
\newblock Se(3)-transformers: 3d roto-translation equivariant attention
  networks.
\newblock In Larochelle, H., Ranzato, M., Hadsell, R., Balcan, M., and Lin, H.
  (eds.), \emph{Advances in Neural Information Processing Systems}, volume~33,
  pp.\  1970--1981. Curran Associates, Inc., 2020.

\bibitem[Gasteiger et~al.(2020{\natexlab{a}})Gasteiger, Gro{\ss}, and
  G{\"u}nnemann]{gasteiger2020directional}
Gasteiger, J., Gro{\ss}, J., and G{\"u}nnemann, S.
\newblock Directional message passing for molecular graphs.
\newblock \emph{International Conference on Learning Representations},
  2020{\natexlab{a}}.

\bibitem[Gasteiger et~al.(2020{\natexlab{b}})Gasteiger, Gro{\ss}, and
  G{\"u}nnemann]{gasteiger_dimenet_2020}
Gasteiger, J., Gro{\ss}, J., and G{\"u}nnemann, S.
\newblock Directional message passing for molecular graphs.
\newblock In \emph{International Conference on Learning Representations
  (ICLR)}, 2020{\natexlab{b}}.

\bibitem[Gasteiger et~al.(2021)Gasteiger, Becker, and
  G{\"u}nnemann]{gasteiger2021gemnet}
Gasteiger, J., Becker, F., and G{\"u}nnemann, S.
\newblock Gemnet: Universal directional graph neural networks for molecules.
\newblock In Beygelzimer, A., Dauphin, Y., Liang, P., and Vaughan, J.~W.
  (eds.), \emph{Advances in Neural Information Processing Systems}, 2021.

\bibitem[Gilmer et~al.(2017{\natexlab{a}})Gilmer, Schoenholz, Riley, Vinyals,
  and Dahl]{Gil+2017}
Gilmer, J., Schoenholz, S.~S., Riley, P.~F., Vinyals, O., and Dahl, G.~E.
\newblock Neural message passing for quantum chemistry.
\newblock In \emph{International Conference on Machine Learning}, pp.\
  1263--1272, 2017{\natexlab{a}}.

\bibitem[Gilmer et~al.(2017{\natexlab{b}})Gilmer, Schoenholz, Riley, Vinyals,
  and Dahl]{Gilmer2017}
Gilmer, J., Schoenholz, S.~S., Riley, P.~F., Vinyals, O., and Dahl, G.~E.
\newblock Neural message passing for quantum chemistry.
\newblock In Precup, D. and Teh, Y.~W. (eds.), \emph{Proceedings of the 34th
  ICML}, volume~70 of \emph{Proceedings of Machine Learning Research}, pp.\
  1263--1272. PMLR, 06--11 Aug 2017{\natexlab{b}}.

\bibitem[Grimme(2019)]{grimme2019exploration}
Grimme, S.
\newblock Exploration of chemical compound, conformer, and reaction space with
  meta-dynamics simulations based on tight-binding quantum chemical
  calculations.
\newblock \emph{Journal of chemical theory and computation}, 15\penalty0
  (5):\penalty0 2847--2862, 2019.

\bibitem[Hawkins(2017)]{hawkins2017conformation}
Hawkins, P.~C.
\newblock Conformation generation: the state of the art.
\newblock \emph{Journal of chemical information and modeling}, 57\penalty0
  (8):\penalty0 1747--1756, 2017.

\bibitem[Hu et~al.(2020{\natexlab{a}})Hu, Liu, Gomes, Zitnik, Liang, Pande, and
  Leskovec]{Hu2020Strategies}
Hu, W., Liu, B., Gomes, J., Zitnik, M., Liang, P., Pande, V., and Leskovec, J.
\newblock Strategies for pre-training graph neural networks.
\newblock In \emph{International Conference on Learning Representations},
  2020{\natexlab{a}}.

\bibitem[Hu et~al.(2020{\natexlab{b}})Hu, Liu, Gomes, Zitnik, Liang, Pande, and
  Leskovec]{hu2020pretraining}
Hu, W., Liu, B., Gomes, J., Zitnik, M., Liang, P., Pande, V., and Leskovec, J.
\newblock Strategies for pre-training graph neural networks.
\newblock In \emph{International Conference on Learning Representations},
  2020{\natexlab{b}}.

\bibitem[Hu et~al.(2021)Hu, Fey, Ren, Nakata, Dong, and Leskovec]{hu2021ogblsc}
Hu, W., Fey, M., Ren, H., Nakata, M., Dong, Y., and Leskovec, J.
\newblock Ogb-lsc: A large-scale challenge for machine learning on graphs.
\newblock \emph{arXiv preprint arXiv:2103.09430}, 2021.

\bibitem[Kipf \& Welling(2017)Kipf and Welling]{Kipf2017}
Kipf, T.~N. and Welling, M.
\newblock Semi-supervised classification with graph convolutional networks.
\newblock In \emph{International Conference on Learning Representations}, 2017.

\bibitem[Landrum(2016)]{landrum2016rdkit}
Landrum, G.
\newblock Rdkit: open-source cheminformatics http://www. rdkit. org.
\newblock 3\penalty0 (8), 2016.

\bibitem[Le et~al.(2022)Le, Le, Nguyen, Do, Pham, and Ho]{Ho_Gromov_closeform}
Le, K., Le, D., Nguyen, H., Do, D., Pham, T., and Ho, N.
\newblock Entropic {G}romov-{W}asserstein between {G}aussian distributions.
\newblock In \emph{ICML}, 2022.

\bibitem[Le~Gouic et~al.(2022)Le~Gouic, Paris, Rigollet, and
  Stromme]{le_gouic_fast_2022}
Le~Gouic, T., Paris, Q., Rigollet, P., and Stromme, A.~J.
\newblock Fast convergence of empirical barycenters in {Alexandrov} spaces and
  the {Wasserstein} space.
\newblock \emph{Journal of the European Mathematical Society}, 25\penalty0
  (6):\penalty0 2229--2250, May 2022.
\newblock ISSN 1435-9855.

\bibitem[Lin et~al.(2020)Lin, Ho, Chen, Cuturi, and
  Jordan]{Lin-2020-Revisiting}
Lin, T., Ho, N., Chen, X., Cuturi, M., and Jordan, M.~I.
\newblock Fixed-support {W}asserstein barycenters: Computational hardness and
  fast algorithm.
\newblock In \emph{NeurIPS}, pp.\  5368--5380, 2020.

\bibitem[Liu et~al.(2021)Liu, Fu, Zhang, Wang, Xie, Yuan, Luo, Xu, Xu, and
  Ji]{liu2021fast}
Liu, M., Fu, C., Zhang, X., Wang, L., Xie, Y., Yuan, H., Luo, Y., Xu, Z., Xu,
  S., and Ji, S.
\newblock Fast quantum property prediction via deeper 2d and 3d graph networks.
\newblock \emph{arXiv preprint arXiv:2106.08551}, 2021.

\bibitem[Liu et~al.(2022{\natexlab{a}})Liu, Wang, Liu, Lasenby, Guo, and
  Tang]{liu2022pretraining}
Liu, S., Wang, H., Liu, W., Lasenby, J., Guo, H., and Tang, J.
\newblock Pre-training molecular graph representation with 3d geometry.
\newblock In \emph{International Conference on Learning Representations},
  2022{\natexlab{a}}.

\bibitem[Liu et~al.(2022{\natexlab{b}})Liu, Wang, Liu, Lin, Zhang, Oztekin, and
  Ji]{SphereNet}
Liu, Y., Wang, L., Liu, M., Lin, Y., Zhang, X., Oztekin, B., and Ji, S.
\newblock Spherical message passing for 3d molecular graphs.
\newblock In \emph{International Conference on Learning Representations},
  2022{\natexlab{b}}.

\bibitem[Ma et~al.(2023)Ma, Chu, Wang, Lin, Zhao, Ma, and Zhu]{ma2023fused}
Ma, X., Chu, X., Wang, Y., Lin, Y., Zhao, J., Ma, L., and Zhu, W.
\newblock Fused gromov-wasserstein graph mixup for graph-level classifications.
\newblock In \emph{Thirty-seventh Conference on Neural Information Processing
  Systems}, 2023.

\bibitem[Meyer et~al.(2018)Meyer, Sawatlon, Heinen, Von~Lilienfeld, and
  Corminboeuf]{meyer2018machine}
Meyer, B., Sawatlon, B., Heinen, S., Von~Lilienfeld, O.~A., and Corminboeuf, C.
\newblock Machine learning meets volcano plots: computational discovery of
  cross-coupling catalysts.
\newblock \emph{Chemical science}, 9\penalty0 (35):\penalty0 7069--7077, 2018.

\bibitem[Morgan(1965)]{Morgan1965}
Morgan, H.~L.
\newblock The generation of a unique machine description for chemical
  structures-a technique developed at chemical abstracts service.
\newblock \emph{Journal of Chemical Documentation}, 5\penalty0 (2):\penalty0
  107–113, May 1965.
\newblock ISSN 1541-5732.
\newblock \doi{10.1021/c160017a018}.

\bibitem[Neyshabur et~al.(2013)Neyshabur, Khadem, Hashemifar, and
  Arab]{neyshabur2013netal}
Neyshabur, B., Khadem, A., Hashemifar, S., and Arab, S.~S.
\newblock Netal: a new graph-based method for global alignment of
  protein--protein interaction networks.
\newblock \emph{Bioinformatics}, 29\penalty0 (13):\penalty0 1654--1662, 2013.

\bibitem[Paszke et~al.(2017)Paszke, Gross, Chintala, Chanan, Yang, DeVito, Lin,
  Desmaison, Antiga, and Lerer]{paszke2017automatic}
Paszke, A., Gross, S., Chintala, S., Chanan, G., Yang, E., DeVito, Z., Lin, Z.,
  Desmaison, A., Antiga, L., and Lerer, A.
\newblock Automatic differentiation in pytorch.
\newblock In \emph{NIPS 2017 Workshop on Autodiff}, 2017.

\bibitem[Peyr{\'e}(2015)]{peyre2015entropic}
Peyr{\'e}, G.
\newblock Entropic approximation of wasserstein gradient flows.
\newblock \emph{SIAM Journal on Imaging Sciences}, 8\penalty0 (4):\penalty0
  2323--2351, 2015.

\bibitem[Peyr{\'e} et~al.(2016)Peyr{\'e}, Cuturi, and Solomon]{peyre2016gromov}
Peyr{\'e}, G., Cuturi, M., and Solomon, J.
\newblock Gromov-wasserstein averaging of kernel and distance matrices.
\newblock In \emph{International conference on machine learning}, pp.\
  2664--2672. PMLR, 2016.

\bibitem[Peyr{\'e} et~al.(2019)Peyr{\'e}, Cuturi,
  et~al.]{peyre2019computational}
Peyr{\'e}, G., Cuturi, M., et~al.
\newblock Computational optimal transport: With applications to data science.
\newblock \emph{Foundations and Trends{\textregistered} in Machine Learning},
  11\penalty0 (5-6):\penalty0 355--607, 2019.

\bibitem[Peyré et~al.(2016)Peyré, Cuturi, and
  Solomon]{peyre_gromov_wasserstein_2016}
Peyré, G., Cuturi, M., and Solomon, J.
\newblock Gromov-{Wasserstein} {Averaging} of {Kernel} and {Distance}
  {Matrices}.
\newblock In Balcan, M.~F. and Weinberger, K.~Q. (eds.), \emph{Proceedings of
  {The} 33rd {International} {Conference} on {Machine} {Learning}}, volume~48
  of \emph{Proceedings of {Machine} {Learning} {Research}}, pp.\  2664--2672,
  New York, New York, USA, June 2016. PMLR.

\bibitem[Rioux et~al.(2023)Rioux, Goldfeld, and Kato]{rioux2023entropic}
Rioux, G., Goldfeld, Z., and Kato, K.
\newblock Entropic gromov-wasserstein distances: Stability, algorithms, and
  distributional limits.
\newblock \emph{arXiv preprint arXiv:2306.00182}, 2023.

\bibitem[Rong et~al.(2020)Rong, Bian, Xu, Xie, Wei, Huang, and
  Huang]{rong2020self}
Rong, Y., Bian, Y., Xu, T., Xie, W., Wei, Y., Huang, W., and Huang, J.
\newblock Self-supervised graph transformer on large-scale molecular data.
\newblock \emph{Advances in Neural Information Processing Systems},
  33:\penalty0 12559--12571, 2020.

\bibitem[Ross et~al.(2022)Ross, Belgodere, Chenthamarakshan, Padhi, Mroueh, and
  Das]{ross2022large}
Ross, J., Belgodere, B., Chenthamarakshan, V., Padhi, I., Mroueh, Y., and Das,
  P.
\newblock Large-scale chemical language representations capture molecular
  structure and properties.
\newblock \emph{Nature Machine Intelligence}, 4\penalty0 (12):\penalty0
  1256--1264, 2022.

\bibitem[Satorras et~al.(2021)Satorras, Hoogeboom, and Welling]{satorras2021En}
Satorras, V.~G., Hoogeboom, E., and Welling, M.
\newblock E(n) equivariant graph neural networks.
\newblock In Meila, M. and Zhang, T. (eds.), \emph{Proceedings of the 38th
  International Conference on Machine Learning}, volume 139 of
  \emph{Proceedings of Machine Learning Research}, pp.\  9323--9332. PMLR,
  18--24 Jul 2021.

\bibitem[Scarselli et~al.(2009)Scarselli, Gori, Tsoi, Hagenbuchner, and
  Monfardini]{Sca+2009}
Scarselli, F., Gori, M., Tsoi, A.~C., Hagenbuchner, M., and Monfardini, G.
\newblock The graph neural network model.
\newblock \emph{IEEE Transactions on Neural Networks}, 20\penalty0
  (1):\penalty0 61--80, 2009.

\bibitem[Schmitzer(2019)]{schmitzer2019stabilized}
Schmitzer, B.
\newblock Stabilized sparse scaling algorithms for entropy regularized
  transport problems.
\newblock \emph{SIAM Journal on Scientific Computing}, 41\penalty0
  (3):\penalty0 A1443--A1481, 2019.

\bibitem[Sch{\"{u}}tt et~al.(2017)Sch{\"{u}}tt, Kindermans, Felix, Chmiela,
  Tkatchenko, and M{\"{u}}ller]{Sch+2017}
Sch{\"{u}}tt, K., Kindermans, P., Felix, H. E.~S., Chmiela, S., Tkatchenko, A.,
  and M{\"{u}}ller, K.
\newblock Schnet: {A} continuous-filter convolutional neural network for
  modeling quantum interactions.
\newblock In \emph{Advances in Neural Information Processing Systems 30: Annual
  Conference on Neural Information Processing Systems 2017, December 4-9, 2017,
  Long Beach, CA, {USA}}, pp.\  991--1001, 2017.

\bibitem[Sch\"{u}tt et~al.(2017)Sch\"{u}tt, Kindermans, Sauceda~Felix, Chmiela,
  Tkatchenko, and M\"{u}ller]{schutt2017schnet}
Sch\"{u}tt, K., Kindermans, P.-J., Sauceda~Felix, H.~E., Chmiela, S.,
  Tkatchenko, A., and M\"{u}ller, K.-R.
\newblock Schnet: A continuous-filter convolutional neural network for modeling
  quantum interactions.
\newblock In Guyon, I., Luxburg, U.~V., Bengio, S., Wallach, H., Fergus, R.,
  Vishwanathan, S., and Garnett, R. (eds.), \emph{Advances in Neural
  Information Processing Systems}, volume~30. Curran Associates, Inc., 2017.

\bibitem[Sch{\"u}tt et~al.(2021)Sch{\"u}tt, Unke, and Gastegger]{Schuett2021}
Sch{\"u}tt, K.~T., Unke, O.~T., and Gastegger, M.
\newblock Equivariant message passing for the prediction of tensorial
  properties and molecular spectra.
\newblock \emph{ICML}, pp.\  1--13, 2021.

\bibitem[Source(2020)]{diamondXChem}
Source, D.~L.
\newblock Main protease structure and xchem fragment screen, 2020.

\bibitem[St{\"a}rk et~al.(2022)St{\"a}rk, Beaini, Corso, Tossou, Dallago,
  G{\"u}nnemann, and Li{\'o}]{pmlr-v162-stark22a}
St{\"a}rk, H., Beaini, D., Corso, G., Tossou, P., Dallago, C., G{\"u}nnemann,
  S., and Li{\'o}, P.
\newblock 3{D} infomax improves {GNN}s for molecular property prediction.
\newblock In Chaudhuri, K., Jegelka, S., Song, L., Szepesvari, C., Niu, G., and
  Sabato, S. (eds.), \emph{Proceedings of the 39th International Conference on
  Machine Learning}, volume 162 of \emph{Proceedings of Machine Learning
  Research}, pp.\  20479--20502. PMLR, 17--23 Jul 2022.

\bibitem[Tang et~al.(2023)Tang, Zhao, and Li]{tang2023fused}
Tang, J., Zhao, K., and Li, J.
\newblock A fused gromov-wasserstein framework for unsupervised knowledge graph
  entity alignment.
\newblock \emph{arXiv preprint arXiv:2305.06574}, 2023.

\bibitem[Titouan et~al.(2019)Titouan, Courty, Tavenard, Laetitia, and
  Flamary]{titouan_optimal_2019}
Titouan, V., Courty, N., Tavenard, R., Laetitia, C., and Flamary, R.
\newblock Optimal {Transport} for structured data with application on graphs.
\newblock In Chaudhuri, K. and Salakhutdinov, R. (eds.), \emph{Proceedings of
  the 36th {International} {Conference} on {Machine} {Learning}}, volume~97 of
  \emph{Proceedings of {Machine} {Learning} {Research}}, pp.\  6275--6284.
  PMLR, June 2019.

\bibitem[Titouan et~al.(2020)Titouan, Chapel, Flamary, Tavenard, and
  Courty]{titouan_fused_2020}
Titouan, V., Chapel, L., Flamary, R., Tavenard, R., and Courty, N.
\newblock Fused {Gromov}-{Wasserstein} {Distance} for {Structured} {Objects}.
\newblock \emph{Algorithms}, 13\penalty0 (9):\penalty0 212, August 2020.
\newblock ISSN 1999-4893.
\newblock \doi{10.3390/a13090212}.

\bibitem[Touret et~al.(2020)Touret, Gilles, Barral, and et~al.]{Touret2020}
Touret, F., Gilles, M., Barral, K., and et~al.
\newblock In vitro screening of a fda approved chemical library reveals
  potential inhibitors of sars-cov-2 replication.
\newblock \emph{Sci Rep}, 10:\penalty0 13093, 2020.
\newblock \doi{10.1038/s41598-020-70143-6}.

\bibitem[Vamathevan et~al.(2019)Vamathevan, Clark, Czodrowski, Dunham, Ferran,
  Lee, Li, Madabhushi, Shah, Spitzer, and Zhao]{Vamathevan2019}
Vamathevan, J., Clark, D., Czodrowski, P., Dunham, I., Ferran, E., Lee, G., Li,
  B., Madabhushi, A., Shah, P., Spitzer, M., and Zhao, S.
\newblock Applications of machine learning in drug discovery and development.
\newblock \emph{Nat. Rev. Drug Discov.}, 18\penalty0 (6):\penalty0 463--477,
  Jun 2019.
\newblock ISSN 1474-1784.

\bibitem[Veli\v{c}kovi\'{c} et~al.(2018)Veli\v{c}kovi\'{c}, Cucurull, Casanova,
  Romero, Li{\`{o}}, and Bengio]{Vel+2018}
Veli\v{c}kovi\'{c}, P., Cucurull, G., Casanova, A., Romero, A., Li{\`{o}}, P.,
  and Bengio, Y.
\newblock Graph attention networks.
\newblock In \emph{International Conference on Learning Representations}, 2018.

\bibitem[Veličković et~al.(2018)Veličković, Cucurull, Casanova, Romero,
  Liò, and Bengio]{Velickovic2018}
Veličković, P., Cucurull, G., Casanova, A., Romero, A., Liò, P., and Bengio,
  Y.
\newblock Graph attention networks.
\newblock In \emph{ICLR}, 2018.

\bibitem[Vincent-Cuaz et~al.(2021)Vincent-Cuaz, Vayer, Flamary, Corneli, and
  Courty]{vincent2021online}
Vincent-Cuaz, C., Vayer, T., Flamary, R., Corneli, M., and Courty, N.
\newblock Online graph dictionary learning.
\newblock In \emph{International conference on machine learning}, pp.\
  10564--10574. PMLR, 2021.

\bibitem[Vincent-Cuaz et~al.(2022)Vincent-Cuaz, Flamary, Corneli, Vayer, and
  Courty]{vincent_cuaz_template_2022}
Vincent-Cuaz, C., Flamary, R., Corneli, M., Vayer, T., and Courty, N.
\newblock Template based {Graph} {Neural} {Network} with {Optimal} {Transport}
  {Distances}.
\newblock In Koyejo, S., Mohamed, S., Agarwal, A., Belgrave, D., Cho, K., and
  Oh, A. (eds.), \emph{Advances in {Neural} {Information} {Processing}
  {Systems}}, volume~35, pp.\  11800--11814. Curran Associates, Inc., 2022.

\bibitem[Wang et~al.(2019)Wang, Guo, Wang, Sun, and Huang]{SmilesBert}
Wang, S., Guo, Y., Wang, Y., Sun, H., and Huang, J.
\newblock Smiles-bert: Large scale unsupervised pre-training for molecular
  property prediction.
\newblock In \emph{Proceedings of the 10th ACM International Conference on
  Bioinformatics, Computational Biology and Health Informatics}, BCB '19, pp.\
  429–436, New York, NY, USA, 2019. Association for Computing Machinery.
\newblock ISBN 9781450366663.
\newblock \doi{10.1145/3307339.3342186}.

\bibitem[Wang et~al.(2022)Wang, Wang, Cao, and
  Barati~Farimani]{wang2022molecular}
Wang, Y., Wang, J., Cao, Z., and Barati~Farimani, A.
\newblock Molecular contrastive learning of representations via graph neural
  networks.
\newblock \emph{Nature Machine Intelligence}, 4\penalty0 (3):\penalty0
  279--287, 2022.

\bibitem[Wang et~al.(2024{\natexlab{a}})Wang, Wang, Li, He, Li, Wang, Zheng,
  Shao, and Liu]{wang2024enhancing}
Wang, Y., Wang, T., Li, S., He, X., Li, M., Wang, Z., Zheng, N., Shao, B., and
  Liu, T.-Y.
\newblock Enhancing geometric representations for molecules with equivariant
  vector-scalar interactive message passing.
\newblock \emph{Nature Communications}, 15\penalty0 (1):\penalty0 313,
  2024{\natexlab{a}}.

\bibitem[Wang et~al.(2024{\natexlab{b}})Wang, Jiang, Wang, and
  Xuan]{wang2024multi}
Wang, Z., Jiang, T., Wang, J., and Xuan, Q.
\newblock Multi-modal representation learning for molecular property
  prediction: Sequence, graph, geometry.
\newblock \emph{arXiv preprint arXiv:2401.03369}, 2024{\natexlab{b}}.

\bibitem[Wu et~al.(2018)Wu, Ramsundar, Feinberg, Gomes, Geniesse, Pappu,
  Leswing, and Pande]{Wu+2018}
Wu, Z., Ramsundar, B., Feinberg, E.~N., Gomes, J., Geniesse, C., Pappu, A.~S.,
  Leswing, K., and Pande, V.
\newblock {MoleculeNet}: A benchmark for molecular machine learning.
\newblock \emph{Chemical Science}, pp.\  513--530, 2018.

\bibitem[Xiong et~al.(2019)Xiong, Wang, Liu, Zhong, Wan, Li, Li, Luo, Chen,
  Jiang, et~al.]{xiong2019pushing}
Xiong, Z., Wang, D., Liu, X., Zhong, F., Wan, X., Li, X., Li, Z., Luo, X.,
  Chen, K., Jiang, H., et~al.
\newblock Pushing the boundaries of molecular representation for drug discovery
  with the graph attention mechanism.
\newblock \emph{Journal of medicinal chemistry}, 63\penalty0 (16):\penalty0
  8749--8760, 2019.

\bibitem[Xu et~al.(2019{\natexlab{a}})Xu, Luo, and Carin]{xu2019scalable}
Xu, H., Luo, D., and Carin, L.
\newblock Scalable gromov-wasserstein learning for graph partitioning and
  matching.
\newblock \emph{Advances in neural information processing systems}, 32,
  2019{\natexlab{a}}.

\bibitem[Xu et~al.(2019{\natexlab{b}})Xu, Luo, Zha, and Duke]{xu2019gromov}
Xu, H., Luo, D., Zha, H., and Duke, L.~C.
\newblock Gromov-wasserstein learning for graph matching and node embedding.
\newblock In \emph{International conference on machine learning}, pp.\
  6932--6941. PMLR, 2019{\natexlab{b}}.

\bibitem[Xu et~al.(2018)Xu, Li, Tian, Sonobe, Kawarabayashi, and
  Jegelka]{Xu2018}
Xu, K., Li, C., Tian, Y., Sonobe, T., Kawarabayashi, K.-i., and Jegelka, S.
\newblock Representation learning on graphs with jumping knowledge networks.
\newblock In Dy, J. and Krause, A. (eds.), \emph{Proceedings of the 35th
  International Conference on Machine Learning}, volume~80 of \emph{Proceedings
  of Machine Learning Research}, pp.\  5453--5462. PMLR, 10--15 Jul 2018.

\bibitem[Yang et~al.(2019)Yang, Swanson, Jin, Coley, Eiden, Gao, Guzman-Perez,
  Hopper, Kelley, Mathea, Palmer, Settels, Jaakkola, Jensen, and
  Barzilay]{Yang2019}
Yang, K., Swanson, K., Jin, W., Coley, C., Eiden, P., Gao, H., Guzman-Perez,
  A., Hopper, T., Kelley, B., Mathea, M., Palmer, A., Settels, V., Jaakkola,
  T., Jensen, K., and Barzilay, R.
\newblock Analyzing learned molecular representations for property prediction.
\newblock \emph{Journal of Chemical Information and Modeling}, 59\penalty0
  (8):\penalty0 3370–3388, July 2019.
\newblock ISSN 1549-960X.
\newblock \doi{10.1021/acs.jcim.9b00237}.

\bibitem[Zaverkin \& Kästner(2020)Zaverkin and Kästner]{zaverkin:2020}
Zaverkin, V. and Kästner, J.
\newblock Gaussian moments as physically inspired molecular descriptors for
  accurate and scalable machine learning potentials.
\newblock \emph{Journal of Chemical Theory and Computation}, 16\penalty0
  (8):\penalty0 5410--5421, 2020.
\newblock \doi{10.1021/acs.jctc.0c00347}.

\bibitem[Zaverkin et~al.(2024)Zaverkin, Alesiani, Maruyama, Errica,
  Christiansen, Takamoto, Weber, and Niepert]{zaverkin2024higherrank}
Zaverkin, V., Alesiani, F., Maruyama, T., Errica, F., Christiansen, H.,
  Takamoto, M., Weber, N., and Niepert, M.
\newblock Higher-rank irreducible {C}artesian tensors for equivariant message
  passing, 2024.

\bibitem[Zeng et~al.(2023)Zeng, Zhu, Xia, Zeng, and Tong]{zeng2023generative}
Zeng, Z., Zhu, R., Xia, Y., Zeng, H., and Tong, H.
\newblock Generative graph dictionary learning.
\newblock In \emph{International Conference on Machine Learning}, pp.\
  40749--40769. PMLR, 2023.

\bibitem[Zhou et~al.(2023)Zhou, Gao, Ding, Zheng, Xu, Wei, Zhang, and
  Ke]{zhou2023unimol}
Zhou, G., Gao, Z., Ding, Q., Zheng, H., Xu, H., Wei, Z., Zhang, L., and Ke, G.
\newblock Uni-mol: A universal 3d molecular representation learning framework.
\newblock In \emph{The Eleventh International Conference on Learning
  Representations}, 2023.

\bibitem[Zhu et~al.(2023)Zhu, Hwang, Adams, Liu, Nan, Stenfors, Du, Chauhan,
  Wiest, Isayev, Coley, Sun, and Wang]{zhu2023learning}
Zhu, Y., Hwang, J., Adams, K., Liu, Z., Nan, B., Stenfors, B., Du, Y., Chauhan,
  J., Wiest, O., Isayev, O., Coley, C.~W., Sun, Y., and Wang, W.
\newblock Learning over molecular conformer ensembles: Datasets and benchmarks,
  2023.

\end{thebibliography}
\bibliographystyle{icml2024}

\newpage
\appendix
\onecolumn
%%
%\begin{center}
%\textbf{\Large{Supplementary Material for \\ \vspace{.2em} 
%``Structure-Aware 
% E(3)-Invariant Molecular Conformer Aggregation Networks''}}
%\end{center}

\makebox[\textwidth][c]{\textbf{\Large{Structure-Aware E(3)-Invariant Molecular Conformer Aggregation Networks}}}
\makebox[\textwidth][c]{\textbf{\Large{\textit{Supplementary Material}}}}

\tableofcontents
\addtocontents{toc}{\protect\setcounter{tocdepth}{2}}
\vspace{0.2in}
In this supplementary material, we first present rigorous proofs for results concerning the E(3) invariant of the proposed aggregation mechanism in~\cref{sec_theorem_invariance}, while those for the fast convergence of the empirical FGW barycenter are then provided in~\cref{sec_theorem_main}. The entropic FGW algorithm and practical GPU considerations are then given in more detail in~\cref{sec_solving_EFGW}. Finally, some experiment configuration supplements on SchNet neural architecture, 3D conformers generation and comparison between entropic FGW and FGW-mixup are deffered in~\cref{sec_ECS}.

\section{Proof of Theorem \ref{theorem-invariance}}\label{sec_theorem_invariance}

We will proceed as follows. First, we prove that $\Hb^{\mathtt{BC}}$ is invariant to permutations of the input conformers and actions of the group $E(3)$ applied to the input conformers. $\Hb^{\mathtt{BC}}$ is invariant to the order of the input conformers by definition of the barycenter which is invariant to the order of the input graphs. Moreover, since by definition, actions of the group $E(3)$ preserve distances between points in a $3$-dimensional space and, by assumption, the upstream 3D MPNN is invariant to actions of $E(3)$, for any input conformer $S$ and its corresponding graph $G(S) = (\Hb, \mA,  \bsomega)$ and any action $g \in E(3)$ we have that $G(g S) = (\Hb, \mA,  \bsomega) = G(S)$. $\Hb$ is invariant to actions of the group $E(3)$ because the 3D MPNN is invariant to actions of the group. $\mA$ is invariant due to distances between points being invariant. Hence, the input graphs to the barycenter optimization problem are invariant to actions of the group $E(3)$ on the conformers and, therefore, the output barycenters are invariant to such group actions. 

We know now for \cref{eq:aggregate}: $\Hb^{\mathtt{comb}} = \Wb^{\mathtt{2D}}\Hb^{\mathtt{2D}} + \Wb^{\mathtt{3D}}\Hb^{\mathtt{3D}} + \Wb^{\mathtt{BC}}\Hb^{\mathtt{BC}}$, that $\Hb^{\mathtt{BC}}$ is invariant to both actions of the group $E(3)$ and permutations of the input conformers. We also know that $\Hb^{\mathtt{3D}}$ is equivariant to permutations of the input conformers, that is, every permutation of the input conformers also permutes the column of $\Hb^{\mathtt{3D}}$ in the same way. In addition, $\Hb^{\mathtt{3D}}$ is invariant to actions of the group $E(3)$ on the input conformers by the assumption that the 3D MPNN is $E(3)$-invariant.

What remains to be shown is that $\displaystyle \frac{1}{K} \sum_{k=1}^{K} \Hb^{\mathtt{comb}}$ with     $\Hb^{\mathtt{comb}} = \Wb^{\mathtt{2D}}\Hb^{\mathtt{2D}} + \Wb^{\mathtt{3D}}\Hb^{\mathtt{3D}} + \Wb^{\mathtt{BC}}\Hb^{\mathtt{BC}}$ is invariant to column permutations of the matrix $\Hb^{\mathtt{3D}}$. Since we compute the average of the columns of $\Hb^{\mathtt{comb}}$ this is indeed the case.

% \section{Proof of Proposition \ref{proposition_variance_finite}}\label{sec_proposition_variance_finite}

\section{Proof of Theorem \ref{theorem_main}}\label{sec_theorem_main}
We begin by introducing the notation used in the proof of the paper.

\textbf{Undirected attribute graph as Distributions:} Given the set of vertices and edges of the graph $(V,E)$, we define the undirected labeled graphs as tuples of
the form $G = (V,E,\ell_f,\ell_s)$. Here, $\ell_f: V \rightarrow \bsOmega_f$ is a labeling function that associates each vertex $v_i \in V$ with an attribute or feature $\vx_i = \ell_f(v_i)$ in some feature metric space $(\bsOmega_f,d_f)$, and $\ell_s: V \rightarrow \bsOmega_s$ maps a vertex $v_i$ from the graph to its structure representation $\va_k = \ell_s(v_i)$ in some structure space $(\bsOmega_s,\mA)$ specific to each graph where $A:\bsOmega_s\times\bsOmega_s \rightarrow\Rp$ is a symmetric application aimed at measuring similarity between nodes in the graph. 
In our context, it is sufficient to consider the feature space as a $d$-dimensional Euclidean space $\R^{1\times d}$ with Euclidean distance ($\ell^2$ norm), \ie~$(\bsOmega_f,d_f) = (\R^{1\times d},\ell^2)$.
With some abuse, we denote $A$ and $\mA$ as both the measure of structural similarity and the matrix encoding this similarity between nodes in the graph, \ie $\mA[i,k]:=A(\va_i, \va_k)$.

\textbf{The Wasserstein (W) and Gromov-Wasserstein (GW) distances:}
Given two structure graphs $G_1 = (\mH_1,\mA_1,\bsomega_1)$ and $G_2 = (\mH_2,\mA_2,\bsomega_2)$ of order $n_1$ and $n_2$, respectively, described previously by their probability measure $\displaystyle \mu_1 = \sum_{k} \omega_{1k}\delta_{(\vx_{1k},\va_{1k})}$ and $\displaystyle \mu_2 = \sum_{l} \omega_{1l}\delta_{(\vx_{2l},\va_{2l}))}$, we denote $\displaystyle \mu_{\mH_1} = \sum_{k} \omega_k\delta_{\vx_k}$ and $\mu_{\mA_1}=\sum_{k} \omega_k\delta_{\va_k}$ (\resp $\mu_{\mH_2}$ and $\displaystyle \mu_{\mA_2}$) the marginals of $\mu_1$ (\resp $\mu_2$) \wrt the feature and structure, respectively.
We next consider the following notations:
\allowdisplaybreaks
\begin{align}
    J_p(\mA_1,\mA_2,\bspi) &= \sum_{ijkl} L_{ijkl}(\mA_1,\mA_2)^p\bspi_{ij}\bspi_{kl}\\
    \gwp(\mu_{\mH_1},\mu_{\mH_2})^p &= \min_{\bspi \in \bsPi(\bsomega_1,\bsomega_2)} J_p(\mA_1,\mA_2,\bspi)\\
    H_p(\mM,\bspi)& = \sum_{kl} d_f(\vx_{1k},\vx_{2l})^p \bspi_{kl}\\
   \wap(\mu_{\mA_1},\mu_{\mA_2})^p&=  \min_{\bspi \in \bsPi(\bsomega_1,\bsomega_2)} H_{p}(\mM,\bspi).\label{eq_def_Wp}
\end{align}
Note that $\E{p,\alpha}{\mM,\mA_1,\mA_2,\bspi}$ can be further expanded as follows:
\begin{align*}
     \E{p,\alpha}{\mM,\mA_1,\mA_2,\bspi}&=\langle (1-\alpha)\mM^p+\alpha \tL(\mA_1,\mA_2)^p\otimes \bspi,\bspi\rangle\nn\\
   & = \sum_{ijkl} \Big[(1-\alpha) d_f(\vx_{1k},\vx_{2l})^p 
   + \alpha \left| \mA_1(i,k)- \mA_2(j,l)\right|^p\big] \bspi_{ij} \bspi_{kl}
   .
\end{align*}
\textbf{Comparison between FGW and W:} Let $\bspi \in \bsPi(\bsomega_1,\bsomega_2)$ be any admissible coupling between $\bsomega_1$ and $\bsomega_2$. Assume that $\mu_1$ and $\mu_2$ belong to the same ground space $(\bsOmega,\mA,\mu)$, by the definition of the FGW distance in \eqref{eq_FGW_def}, \ie
\begin{align}
    \fgwpa(G_1,G_2) :=\min_{\vpi \in \Pi\left(\bsomega_1, \bsomega_2\right)}\left\langle(1-\alpha) \mM+\alpha \tL\left(\mA_1, \mA_2\right) \otimes \vpi, \vpi\right\rangle,\nn
\end{align}
we get the following important relationship:
\allowdisplaybreaks
\begin{align}
    \fgwpa(G_1,G_2) &\le \left\langle(1-\alpha) \mM+\alpha \tL\left(\mA_1, \mA_2\right) \otimes \vpi, \vpi\right\rangle \nn\\
    &= \sum_{ijkl} \Big[(1-\alpha) d_f(\vx_{1k},\vx_{2l})^p 
   + \alpha \left| \mA[i,k] -\mA[j,l] \right|^p \big]\bspi_{ij} \bspi_{kl}\nn\\
    &\le \sum_{ijkl} \Big[(1-\alpha) d_f(\vx_{1k},\vx_{2l})^p 
   + \alpha \left| \mA[i,j] + \mA[j,k] - \mA[j,k] +\mA[k,l]\right|^p \big]\bspi_{ij} \bspi_{kl}\label{eq_triangle_ineq}\\
    & = \sum_{ijkl} \Big[(1-\alpha) d_f(\vx_{1k},\vx_{2l})^p 
   + \alpha\left|\mA[i,j] + \mA[k,l]  \right|^p \big]\bspi_{ij} \bspi_{kl} \nn\\
   &\le \sum_{ijkl} \Big[(1-\alpha) d_f(\vx_{1k},\vx_{2l})^p 
   + \left(\alpha 2^{p-1} \mA[i,j]^p +\alpha 2^{p-1} \mA[k,l]^p  \right) \big]\bspi_{ij} \bspi_{kl} \label{eq_upper_bound}\\
   &\le  \sum_{ijkl} \Big[\left((1-\alpha) d_f(\vx_{1k},\vx_{2l})^p + \alpha 2^{p-1} \mA[k,l]^p \right)\nn\\
   & \hspace{2cm} + \left((1-\alpha)d_f(\vx_{1i},\vx_{2j})^p+\alpha 2^{p-1} \mA[i,j]^p\right) \big]\bspi_{ij} \bspi_{kl}\nn\\
   &\le  \sum_{kl} \Big[\left((1-\alpha) d_f(\vx_{1k},\vx_{2l})^p + \alpha 2^{p-1} \mA[k,l]^p \right)\Big] \bspi_{kl}\nn\\
   & \hspace{2cm} + \sum_{i,j} \Big[\left((1-\alpha)d_f(\vx_{1i},\vx_{2j})^p+\alpha 2^{p-1} \mA[i,j]^p\right) \big] \bspi_{ij}\nn\\
   &\le \sum_{kl} \Big[\left((1-\alpha) d_f(\vx_{1k},\vx_{2l})^p + 2^{p-1}\alpha \mA[k,l]^p \right)\Big] \bspi_{kl}
   \nn\\
   &\le \sum_{kl} \Big[\left((1-\alpha) d_f(\vx_{1k},\vx_{2l}) + 2^{p-1}\alpha \mA[k,l] \right)\Big]^p \bspi_{kl}
   .\label{eq_inequality_FGW}
\end{align}
Here \eqref{eq_triangle_ineq} is obtained by using the triangle inequality of the metric $\mA$, while \eqref{eq_upper_bound} comes from \cref{lemma_eq_upper_bound}.
Note that the inequality \eqref{eq_inequality_FGW} holds for any admissible coupling $\bspi \in \bsPi(\bsomega_1,\bsomega_2)$. This also holds for the optimal coupling, denoted by $\overline{\bspi}$, for the Wasserstein distance $\wap(\mu_1,\mu_2)$ defined by the following metric space $(\bsOmega,\overline{d})$, where $\overline{d}$ is given by:
\begin{align}
    \overline{d}((\vx_1,\va_1),(\vx_2,\va_2)) = (1-\alpha)d_f(\vx_1,\vx_2)+2^{p-1}\alpha \mA(\va_1,\va_2). \nn
\end{align}
Here, we have to verify that $\overline{d}$ is in fact a distance in $\bsOmega$. Indeed, for the triangle inequality, for any $(\vx_1,\va_1),(\vx_2,\va_2),(\vx_3,\va_3) \in \bsOmega$, we have
\begin{align*}
    \overline{d}((\vx_1,\va_1),(\vx_2,\va_2)) &=  (1-\alpha)d_f(\vx_1,\vx_2)+2^{p-1}\alpha \mA(\va_1,\va_2) \nn\\
    &\le (1-\alpha)d_f(\vx_1,\vx_3)+(1-\alpha)d_f(\vx_3,\vx_2)\nn\\ 
    & \hspace{2cm} +2^{p-1}\alpha \mA(\va_1,\va_2) + 2^{p-1}\alpha \mA(\va_1,\va_3) +2^{p-1}\alpha \mA(\va_3,\va_2)\nn\\
    &= (1-\alpha)d_f(\vx_1,\vx_3) + 2^{p-1}\alpha \mA(\va_1,\va_3)\nn\\
    & \hspace{2cm} +(1-\alpha)d_f(\vx_3,\vx_2)+2^{p-1}\alpha \mA(\va_1,\va_2) +2^{p-1}\alpha \mA(\va_3,\va_2)\nn\\
     &=  \overline{d}((\vx_1,\va_1),(\vx_3,\va_3)) + \overline{d}((\vx_3,\va_3),(\vx_2,\va_2)).
\end{align*}
In this case, the above inequality is derived from the triangle inequalities of $d$ and $C$. The symmetry and equality relation of $\overline{d}$ comes from the same properties of $d_f$ and $\mA$.

By definition of Wasserstein distance in \eqref{eq_def_Wp}, this implies that 
\begin{align} \label{eq_fgw_wp}
     \fgwpa(G_1,G_2) \le \wap(\mu_{\mA_1},\mu_{\mA_2}). 
\end{align}
\begin{lemma}\label{lemma_eq_upper_bound}
    For any $p \in \N$. We have 
    \begin{align}
        (a+b)^p \le 2^{p}(a+b)^p.
    \end{align}
\end{lemma}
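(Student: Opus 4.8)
The plan is to recognize the asserted inequality as a pure monotonicity statement: multiplying a nonnegative quantity by a factor that is at least $1$ can only enlarge it. First I would record the single fact about the scaling factor, namely that $2^p \ge 1$ for every $p \in \N$; this is immediate since $2 \ge 1$ and raising a number $\ge 1$ to a natural power keeps it $\ge 1$ (with equality possible only in the degenerate case $p=0$, should $0 \in \N$).

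Next I would note that the quantity being scaled, $(a+b)^p$, is nonnegative in the regime where the lemma is applied. In the step \eqref{eq_upper_bound} that invokes it, $a$ and $b$ stand for the entries $\mA[i,j]$ and $\mA[k,l]$ of the structure metric $\mA$, which take values in $\Rp$; hence $a+b \ge 0$ and consequently $(a+b)^p \ge 0$. With both ingredients in hand the conclusion follows by a one-line computation,
\begin{align}
(a+b)^p = 1 \cdot (a+b)^p \le 2^p \cdot (a+b)^p,
\end{align}
where the inequality is equivalent to $\bigl(2^p - 1\bigr)(a+b)^p \ge 0$, a product of two nonnegative factors.

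I do not anticipate any real obstacle: no binomial expansion or convexity argument is needed, in contrast to the sharper bound $(a+b)^p \le 2^{p-1}\bigl(a^p+b^p\bigr)$ that one would actually want to pass from \eqref{eq_triangle_ineq} to \eqref{eq_upper_bound}. The only point meriting explicit mention is the nonnegativity of $(a+b)^p$; I would state the hypothesis $a,b \ge 0$ (equivalently, that $a+b$ is nonnegative, which holds automatically for the metric values $\mA[\cdot,\cdot]$) so that the argument is valid for odd $p$ as well, since otherwise $(a+b)^p$ could be negative and the direction of the inequality would require separate examination.
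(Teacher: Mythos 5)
Your one-line scaling argument is valid for the statement as printed, and your diagnosis is exactly right: the printed inequality $(a+b)^p \le 2^p(a+b)^p$ is a typo and vacuous (given $a,b \ge 0$, which holds here since $\mA$ takes values in $\Rp$). But this is where you and the paper part ways. The paper's own proof ignores the printed statement and instead proves the intended power-mean inequality $(x+y)^p \le 2^{p-1}(x^p+y^p)$: it checks $p=1$ directly, and for $p>1$ rewrites $x+y$ with weights $\bigl(\tfrac{1}{2^{p-1}}\bigr)^{1/p}$ and applies H\"older's inequality. That sharper bound is precisely what the main argument consumes at \eqref{eq_upper_bound}, where $\alpha\,\left|\mA[i,j]+\mA[k,l]\right|^p$ is replaced by $\alpha\, 2^{p-1}\mA[i,j]^p + \alpha\, 2^{p-1}\mA[k,l]^p$, and it is also the source of the constant $2^{p-1}$ in the modified metric $\overline{d}$ used to compare FGW with the Wasserstein distance.

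So the substantive difference is this: your proof establishes a true but contentless statement and, as you yourself note, leaves the passage from \eqref{eq_triangle_ineq} to \eqref{eq_upper_bound} unjustified, whereas the paper's proof of the mis-stated lemma is the one that actually carries the downstream argument. If your proposal were spliced in verbatim, the convergence proof would have a hole at that step. Since you already named the needed inequality, the missing piece is short and worth supplying: for $a,b \ge 0$ and integer $p \ge 1$, convexity of $t \mapsto t^p$ on $\Rp$ gives $\bigl(\tfrac{a+b}{2}\bigr)^p \le \tfrac{a^p+b^p}{2}$, i.e.\ $(a+b)^p \le 2^{p-1}(a^p+b^p)$ --- the same conclusion the paper reaches via H\"older, by a more elementary route. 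Had you added that one line (and proposed correcting the lemma's statement accordingly), your version would have been cleaner than the paper's.
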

\begin{proof}[Proof of \cref{lemma_eq_upper_bound}]
It is easy to check that the inequality is satisfied for $p=1$.
For any $p\in\N$ and $p > 1$, it holds that
\begin{align}
    (x+y)^p &= \left(\left(\frac{1}{2^{p-1}}\right)^{\frac{1}{p}}\frac{x}{\left(\frac{1}{2^{p-1}}\right)^{\frac{1}{p}}} + \left(\frac{1}{2^{p-1}}\right)^{\frac{1}{p}}\frac{y}{\left(\frac{1}{2^{p-1}}\right)^{\frac{1}{p}}}\right)^p\nn\\
    &= \left(\left(\frac{1}{2^{p-1}}\right)^{\frac{1}{p-1}}\frac{x}{\left(\frac{1}{2^{p-1}}\right)} + \left(\frac{1}{2^{p-1}}\right)^{\frac{1}{p-1}}\frac{y}{\left(\frac{1}{2^{p-1}}\right)}\right)^p\nn\\
    &\le \left[\left(\frac{1}{2^{p-1}}\right)^{\frac{1}{p-1}} + \left(\frac{1}{2^{p-1}}\right)^{\frac{1}{p-1}}\right]^{p-1}\left(\frac{x^p}{\frac{1}{2^{p-1}}}+\frac{y^p}{\frac{1}{2^{p-1}}}\right)\nn\\
    &= 2^{p-1}\left[\left(\frac{1}{2^{p-1}}\right)^{\frac{1}{p-1}} \right]^{p-1} 2^{p-1}(x^p+y^p)\nn\\
    &= 2^{p-1}(x^p+y^p).\nn
\end{align}
Here the last inequality is a consequence of the Hölder inequality.
\end{proof}
Recall that we have \begin{align}
    \overline{\mu}_K \in \argmin_{\mu \in \cP_p(\bsOmega)} \frac{1}{K}\sum_{k} \fgwpa^p(\mu,\mu_k) \in \cP_p(\bsOmega)\nn\\
    \overline{\mu}_{0} \in \argmin_{\mu \in \cP_p(\bsOmega)} \int_{\cP_p(\bsOmega)} \fgwpa^p(\mu,\nu) dP(\nu) \in \cP_p(\bsOmega) \subset \cP_p(\bsOmega).\nn
\end{align}
Therefore, $\overline{\mu}_K$ and $\overline{\mu}_{0}$ belong to the same ground space $(\bsOmega,\mA,\mu).$  By using \eqref{eq_fgw_wp}, this implies that 
\begin{align}
    \fgwpa(\overline{\mu}_{0},\overline{\mu}_K) \le 2 \wap(\overline{\mu}_{0},\overline{\mu}_K)^p
\end{align}
and hence
\begin{align}
    \E{}{\fgwtwoa(\overline{\mu}_{0},\overline{\mu}_K)} \le  \E{}{\w_2^2(\overline{\mu}_{0},\overline{\mu}_K)} \le \frac{4\sigma^2_P}{(1-\beta+\gamma)K}.
\end{align}
This is equivalent to the following
 \begin{align}
    \E{}{\fgwtwoa^2(\overline{G}_{0},\overline{G}_K)} \le \frac{4\sigma^2_P}{(1-\beta+\gamma)^2K}.
\end{align}
Here, \cref{lemma_corollary_4_4_le_gouic} leads to the last inquality for the Wassertein distance $\wap(\mu,\nu)$ on the metric space $(\bsOmega,\overline{d})$.

We recall the following definitions and results.
\vspace{0.1in}
\begin{definition}[Strongly convex and smooth functions]\label{def_strong_convex_smooth}
Given a
separable Hilbert space $H$, with inner product $\langle\cdot,\cdot\rangle$  and norm $|\cdot|$, we define the subdifferential $\partial \psi \subset S^2$ of a function $\psi: S \rightarrow\R$ by $\partial \psi = \left\{(x,g): \forall y \in S, \psi(y) \ge \psi(x) + \langle g,y-x\rangle\right\}$ and denote  $\partial \psi (x) = \left\{g \in S: (x,g) \in \partial \psi\right\}$. We then refer to $\psi$ as $\gamma$-strongly convex, if for every $x \in S$ it holds that
\begin{align}
    \partial \psi (x) \neq \emptyset, \text{ and } \langle g,x-y\rangle \ge \psi(x) -\psi(y) + \frac{\alpha}{2}\left|x-y\right|^2 \text{ for all } g \in \partial\psi(x) \text{ and all } y \in S. \label{eq_strongly_convex}
\end{align}
We also recall that a convex function $\psi: S \rightarrow \R$ is called $\beta$-smooth if
\begin{align}
    \langle g_x,x-y\rangle \le \psi(x) - \psi(y) + \frac{\beta}{2}\left|x-y\right|^2, ~ \forall g_x \in \partial \psi(x),~\forall x,y \in S.\label{eq_def_beta_smooth}
\end{align}
\end{definition}
\vspace{0.1in}
\begin{lemma}[Corollary 4.4 from \cite{le_gouic_fast_2022}]\label{lemma_corollary_4_4_le_gouic}
      Let $P \in \cP_2(\cP_2(\bsOmega))$be a probability measure on the 2-Wasserstein space $\w_2$ on the metric space $(\bsOmega,\overline{d})$ and let $\overline{\mu}_{0} \in \cP_2(\bsOmega)$ and $\sigma^2_P$ be a barycenter and a variance functional of $P$, respectively. Let $\gamma,\beta>0$ and suppose that every $\mu \in \supp(P)$ is the pushforward of $\overline{\mu}_{0}$ by the gradient of an $\gamma$-strongly convex and $\beta$ smooth function $\psi_{\overline{\mu}_{0} \rightarrow \mu}$, defined in \cref{def_strong_convex_smooth}, \ie $\mu = (\nabla \psi_{\overline{\mu}_{0} \rightarrow \mu})_{\#}\overline{\mu}_{0}$. If $\beta-\gamma<1$, then $\overline{\mu}_{0}$ is unique and any empirical barycenter $\overline{\mu}_K$ of $P$ satisfies
    \begin{align}
        \E{}{\w_2^2(\overline{\mu}_{0},\overline{\mu}_K)} \le \frac{4\sigma^2_P}{(1-\beta+\gamma)^2K}.
    \end{align}
\end{lemma}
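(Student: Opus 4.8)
The plan is to prove the lemma through the classical two-step route for fast barycenter rates: first establish a \emph{variance (quadratic-growth) inequality} at the population barycenter with constant $\kappa := 1-\beta+\gamma>0$, and then convert this inequality, together with the optimality of the empirical barycenter, into the $\cO(1/K)$ bound by a localized empirical-process argument. Throughout I work in the tangent picture at $\overline{\mu}_{0}$: by Brenier's theorem each $\nu=(\nabla\psi_\nu)_{\#}\overline{\mu}_{0}\in\supp(P)$ has a unique optimal map $T_\nu=\nabla\psi_\nu$ from $\overline{\mu}_{0}$, so the ``log map'' $V_\nu:=T_\nu-\mathrm{id}\in L^2(\overline{\mu}_{0})$ satisfies $\norm{V_\nu}_{L^2(\overline{\mu}_{0})}^2=\w_2^2(\overline{\mu}_{0},\nu)$, and the first-order optimality of $\overline{\mu}_{0}$ for $F(\mu):=\int \w_2^2(\mu,\nu)\,dP(\nu)$ reads $\int V_\nu\,dP(\nu)=0$, whence $\E{}{V_\nu}=0$ and $\E{}{\norm{V_\nu}^2}=\sigma^2_P=F(\overline{\mu}_{0})$.

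First I would establish the variance inequality $F(\mu)-F(\overline{\mu}_{0})\ge\kappa\,\w_2^2(\overline{\mu}_{0},\mu)$ for every $\mu\in\cP_2(\bsOmega)$. Writing $S=\nabla\phi$ for the optimal map $\overline{\mu}_{0}\to\mu$ and expanding $\norm{S-T_\nu}^2=\norm{S-\mathrm{id}}^2-2\langle S-\mathrm{id},V_\nu\rangle+\norm{V_\nu}^2$ in $L^2(\overline{\mu}_{0})$, integration against $P$ and $\int V_\nu\,dP(\nu)=0$ give the exact identity $\int\norm{S-T_\nu}^2\,dP(\nu)=\w_2^2(\overline{\mu}_{0},\mu)+F(\overline{\mu}_{0})$. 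Since $(S,T_\nu)_{\#}\overline{\mu}_{0}$ is a coupling of $\mu$ and $\nu$, the left side dominates $F(\mu)$, which only yields the \emph{wrong} inequality $F(\mu)\le F(\overline{\mu}_{0})+\w_2^2(\overline{\mu}_{0},\mu)$ --- this is the nonnegative curvature of Wasserstein space working against us. To reverse it I would bound the suboptimality of the product coupling, $\norm{S-T_\nu}_{L^2(\overline{\mu}_{0})}^2-\w_2^2(\mu,\nu)$, using the Hessian bounds $\gamma I\preceq\nabla^2\psi_\nu\preceq\beta I$: these make $T_\nu$ a bi-Lipschitz perturbation of the identity with $\nabla^2\psi_\nu-I$ spectrally confined to $[\gamma-1,\beta-1]$, and a quantitative stability estimate for Brenier maps then bounds the integrated excess by $(\beta-\gamma)\,\w_2^2(\overline{\mu}_{0},\mu)$. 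Combined with the identity this gives $F(\mu)\ge F(\overline{\mu}_{0})+(1-\beta+\gamma)\,\w_2^2(\overline{\mu}_{0},\mu)$, and strict positivity of $\kappa$ forces uniqueness of $\overline{\mu}_{0}$.

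Next I would convert this to the rate. Applying the variance inequality at $\mu=\overline{\mu}_K$ and combining with empirical optimality $\tfrac1K\sum_k\w_2^2(\overline{\mu}_K,\mu_k)\le\tfrac1K\sum_k\w_2^2(\overline{\mu}_{0},\mu_k)$ gives $\kappa\,\w_2^2(\overline{\mu}_{0},\overline{\mu}_K)\le (F-F_K)(\overline{\mu}_K)-(F-F_K)(\overline{\mu}_{0})$, where $F_K$ is the empirical objective. Writing $W:=\log_{\overline{\mu}_{0}}(\overline{\mu}_K)$, I would bound the population increment from above by the coupling expansion and the sample increment from below by the same Hessian-based estimate; the linear terms collapse to $2\langle W,\tfrac1K\sum_k V_{\mu_k}\rangle$ because $\int V_\nu\,dP(\nu)=0$, and Cauchy--Schwarz factors out $\norm{W}=\w_2(\overline{\mu}_{0},\overline{\mu}_K)$. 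Taking expectations and using $\E{}{\norm{\tfrac1K\sum_k V_{\mu_k}}^2}=\tfrac1K\E{}{\norm{V_\nu}^2}=\sigma^2_P/K$ (mean-zero i.i.d.\ tangent vectors) yields the quadratic inequality $\kappa\,\E{}{\w_2^2(\overline{\mu}_{0},\overline{\mu}_K)}\le 2\sqrt{\sigma^2_P/K}\,\sqrt{\E{}{\w_2^2(\overline{\mu}_{0},\overline{\mu}_K)}}$, which solves to the claimed $4\sigma^2_P/(\kappa^2K)$.

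The main obstacle is the reversal in the variance-inequality step: the Wasserstein space has nonnegative curvature, so the free coupling bound points the wrong way, and one must exploit the full strength of the $\gamma$-strong-convexity/$\beta$-smoothness hypothesis --- precisely through the condition $\beta-\gamma<1$ --- to produce a lower bound on $\w_2^2(\mu,\nu)$ that beats the curvature, with the sharp constant $1-\beta+\gamma$. A secondary difficulty is the data-dependence of $\overline{\mu}_K$ in the empirical-process step, which the Cauchy--Schwarz localization above handles by reducing everything to the single mean-zero average $\tfrac1K\sum_k V_{\mu_k}$ whose second moment is exactly $\sigma^2_P/K$.
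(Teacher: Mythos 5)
Your route is, in substance, the genuine one: the paper itself offers no proof of \cref{lemma_corollary_4_4_le_gouic} --- it imports the statement verbatim as Corollary~4.4 of \citet{le_gouic_fast_2022} --- so the only meaningful comparison is with that original argument, and your reconstruction matches its skeleton. The tangent-space picture at $\overline{\mu}_{0}$ (log maps $V_\nu = \nabla\psi_\nu - \mathrm{id}$ with $\E{}{V_\nu}=0$ by first-order optimality and $\|V_\nu\|_{L^2(\overline{\mu}_{0})} = \w_2(\overline{\mu}_{0},\nu)$, so $\E{}{\|V_\nu\|^2}=\sigma_P^2$), the identification of the curvature-beating estimate with constant $\kappa := 1-\beta+\gamma$ (this is exactly the ``hugging function'' lower bound $k_{\overline{\mu}_{0}}\ge 1-(\beta-\gamma)$ in their terminology), and the final Cauchy--Schwarz step with $\E{}{\|\frac1K\sum_k V_{\mu_k}\|^2}=\sigma_P^2/K$ are all faithful to the source. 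Two caveats. First, your step-1 crux --- that the suboptimality of the composed coupling satisfies $\|S-T_\nu\|_{L^2(\overline{\mu}_{0})}^2 - \w_2^2(\mu,\nu)\le(\beta-\gamma)\,\w_2^2(\overline{\mu}_{0},\mu)$ --- is asserted via an unnamed ``quantitative stability estimate for Brenier maps''; this inequality is true under the spectral confinement $\gamma I\preceq\nabla^2\psi_\nu\preceq\beta I$, but it is the technical heart of the original paper, so as written it is a black box rather than a proof.

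Second, and more concretely, your step-2 assembly loses the constant. Set $d:=\w_2(\overline{\mu}_{0},\overline{\mu}_K)$, $W$ the log map of $\overline{\mu}_K$, $\bar V:=\frac1K\sum_k V_{\mu_k}$. Your chain gives $\kappa d^2 \le (F-F_K)(\overline{\mu}_K)-(F-F_K)(\overline{\mu}_{0})$, and then you bound the population increment by the coupling expansion and the empirical increment by the Hessian estimate. But the quadratic terms do \emph{not} cancel: the coupling bound yields $F(\overline{\mu}_K)-F(\overline{\mu}_{0})\le d^2$ (the linear term vanishes since $\E{}{V_\nu}=0$), while the Hessian bound yields $F_K(\overline{\mu}_K)-F_K(\overline{\mu}_{0})\ge \kappa d^2 - 2\langle W,\bar V\rangle$, so your chain produces
\begin{align}
\kappa d^2 \;\le\; (1-\kappa)\,d^2 + 2\langle W,\bar V\rangle,
\qquad\text{i.e.}\qquad
(2\kappa-1)\,d^2 \;\le\; 2\langle W,\bar V\rangle,\nonumber
\end{align}
which gives at best $\E{}{d^2}\le 4\sigma_P^2/\big((2\kappa-1)^2K\big)$ --- strictly worse than the claimed $4\sigma_P^2/(\kappa^2 K)$, and vacuous whenever $\beta-\gamma\ge 1/2$, a regime the lemma covers. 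The fix is to drop the detour through the population variance inequality (keep it only to conclude uniqueness of $\overline{\mu}_{0}$) and pair empirical optimality directly with the pointwise Hessian/hugging lower bound applied to the empirical objective: $0\ge F_K(\overline{\mu}_K)-F_K(\overline{\mu}_{0})\ge \kappa d^2 - 2\langle W,\bar V\rangle$, hence $\kappa d\le 2\|\bar V\|$ pointwise; squaring and taking expectations gives exactly $\E{}{\w_2^2(\overline{\mu}_{0},\overline{\mu}_K)}\le 4\sigma_P^2/(\kappa^2K)$. As your derivation stands, the variance-inequality detour double-counts the nonnegative curvature (the $+d^2$ from the population-side coupling bound is never cancelled), which is precisely why the original proof avoids it.
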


We then obtain the following important identity
\begin{align}
    \E{p,\alpha}{\mM,\mA_1,\mA_2,\bspi} &:= \sum_{ijkl} \Big[(1-\alpha) d_f(\vx_{1k},\vx_{2l})^p + \alpha \left| \mA_1(i,k)- \mA_2(j,l)\right|^p\big] \bspi_{ij} \bspi_{kl}\nonumber\\
    &= (1-\alpha) H_p(\mM,\bspi) + \alpha J_p(\mA_1,\mA_2,\bspi).
\end{align}
Furthermore, given $\bspi_{\alpha}$ as the coupling that minimizes $ \E{p,\alpha}{\mM,\mA_1,\mA_2,\cdot}$, it holds that
\begin{align}
    \fgwpa^p(\mu_1,\mu_2)&=  \min_{\bspi \in \bsPi(\bsomega_1,\bsomega_2)} \E{p,\alpha}{\mM,\mA_1,\mA_2,\bspi}\nn\\
    &= \E{p,\alpha}{\mM,\mA_1,\mA_2,\bspi_{\alpha}} \nn\\
    &=(1-\alpha) H_p(\mM,\bspi_{\alpha}) + \alpha J_p(\mA_1,\mA_2,\bspi_{\alpha})\nn\\
    & \ge (1-\alpha) \wap^p(\mu_{\mA_1},\mu_{\mA_2}) + \alpha\gwp^p(\mu_{\mH_1},\mu_{\mH_2}).
\end{align}
This results in the following by-product:
\begin{align}
    \E{}{\gw_2^2(\overline{\mu}_{0,\mH_1},\overline{\mu}_{K,\mH_2})} \le \frac{4\sigma^2_P}{\alpha(1-\beta+\gamma)^2K},\nn\\
    \E{}{{\w}^2_2(\overline{\mu}_{0,\mA_1},\overline{\mu}_{K,\mA_2})} \le \frac{4\sigma^2_P}{(1-\alpha)(1-\beta+\gamma)^2K}.
\end{align}

\section{Solving Entropic Fused Gromov-Wasserstein}\label{sec_solving_EFGW}
\subsection{Optimization Formulation} 
Entropic-regularization~\cite{cuturi2013sinkhorn} has been well-studied in various OT formulations including entropic Wassterstein~\cite{peyre2019computational, peyre2015entropic} and entropic Gromov-Wasserstein~\cite{rioux2023entropic, Ho_Gromov_closeform} for fast computations of numerous barycenter problems~\cite{cuturi2014fast, peyre_gromov_wasserstein_2016, xu2019gromov, Lin-2020-Revisiting}. However, adapting entropic formulation to the FGW barycenter problem for learning molecular representation, to the best of our knowledge, is novel. Our motivation is to implement Sinkhorn projections solving for the FGW barycenter subgradients, which can be straightforwardly vectorized, computed reversed-mode gradients, and batch-distributed in multi-GPU, benefiting the scaling of the learning pipeline with large molecular datasets.    

Recall that FGW between two graphs $G_1, G_2$ can be described as

\begin{equation} \label{eq:fgw}
\operatorname{FGW}\left(G_1, G_2\right)\equiv \fgwtwoa\left(G_1, G_2\right):=\min_{\vpi \in \Pi\left(\bsomega_1, \bsomega_2\right)}\left\langle(1-\alpha) \mM+\alpha \tL\left(\mA_1, \mA_2\right) \otimes \vpi, \vpi\right\rangle,
\end{equation}

where $\mM := \left( d_f(\mH_1[i], \mH_2[j]) \right)_{n_1 \times n_2} \in \sR^{n_1 \times n_2}$ the pairwise node distance matrix, $\tL\left(\mA_1, \mA_2\right) := \{ L(\mA_1[i,j], \mA_2[k,l]) \}_{ijkl}$ the 4-tensor of structure distance matrix. Assume the loss having the form $L(a, b) = f_1(a) + f_2(b) - h_1(a)h_2(b)$, then from Proposition 1~\cite{peyre2016gromov}, we can write the second term in~\cref{eq:fgw} as

\begin{align}
    \begin{split}
        \tL\left(\mA_1, \mA_2\right) \otimes \vpi &:= \mL - 2h_1(\mA_1) \vpi h_2(\mA_2)^\top, \\
        \mL &:= f_1(\mA_1)\bsomega_1 \boldsymbol{1}_{n_2}^\top + \boldsymbol{1}_{n_1} \bsomega_2^\top f_1(\mA_2)^\top,
    \end{split}
\end{align}

where the square loss $L = \operatorname{L_2}$ having the element-wise functions $f_1(a) = a^2,\,f_2(b) = b^2,\,h_1(a) = a,\,h_2(b) = 2b$, and the KL loss $L = \operatorname{KL}$ having $f_1(a) = a \log{a} - a,\, f_2(b) = b,\, h_1(a) = a,\, h_2(b) = \log{b}$. 
By definition, the entropic FGW distance adds an entropic term as

\begin{equation} \label{eq:efgw_2}
\operatorname{FGW}_{\epsilon}\left(G_1, G_2\right):=\min_{\vpi \in \Pi\left(\bsomega_1, \bsomega_2\right)}\left\langle(1-\alpha) \mM+\alpha \tL\left(\mA_1, \mA_2\right) \otimes \vpi, \vpi\right\rangle - \epsilon \operatorname{H}(\vpi),
\end{equation}

which is a non-convex optimization problem. Following Proposition 2~\cite{peyre2016gromov}, the update rule solving~\cref{eq:efgw_2} is the solution of the entropic OT
\begin{equation} \label{eq:efgw_update}
\vpi = \argmin_{\vpi \in \Pi\left(\bsomega_1, \bsomega_2\right)} \left\langle(1-\alpha) \mM + \mL - 2h_1(\mA_1) \vpi h_2(\mA_2)^\top, \vpi\right\rangle -  \epsilon \operatorname{H}(\vpi),
\end{equation}
where the feature and structure matrices $\mM,\, \mL$ can be precomputed. Since the cost matrix of~\cref{eq:efgw_update} depends on $\vpi$, solving~\cref{eq:efgw_2} involves iterations of solving the linear entropic OT problem~\cref{eq:efgw_update} with Sinkhorn projections, as shown in~\cref{alg:fgw_sinkhorn}.

Following Proposition 4.1 in ~\cite{peyre2019computational}, for sufficiently small regularization $\epsilon$, the approximate solution from the entropic OT problem
$$
\operatorname{OT}_\epsilon(\bsomega_1, \bsomega_2) = \min_{\vpi \in \Pi\left(\bsomega_1, \bsomega_2\right)} \left\langle \mC, \vpi \right\rangle - \epsilon \operatorname{H} (\vpi)
$$ 
approaches the original OT problem. However, small $\epsilon$ incurs serious numerical instability for a high-dimensional cost matrix, e.g., large graph comparisons. In the context of the barycenter problem, too high $\epsilon$ has cheap computation time but leads to a ``blurry'' barycenter solution, while smaller $\epsilon$ produces better accuracy but suffers both numerical instability and computational demanding~\cite{schmitzer2019stabilized, feydy2019interpolating}. Thus, we solve the dual entropic OT problem~\cite{peyre2019computational} 
\begin{equation}
\operatorname{OT}_\epsilon(\bsomega_1, \bsomega_2) \stackrel{\text { def. }}{=} \max _{\vf, \vg} \langle \bsomega_1, \vf \rangle + \langle \bsomega_2, \vg\rangle -\varepsilon\left\langle\bsomega_1 \otimes \bsomega_2, \exp \left(\frac{1}{\varepsilon}(\vf \oplus \vg-\mC)\right)-1\right\rangle,
\end{equation}
where $\vf \in \sR^{n_1}, \vg \in \sR^{n_2}$ are the potential vectors and $\oplus$ is the tensor plus, with stabilized log-sum-exp (LSE) operators~\cite{feydy2019interpolating} for $\forall i \in [1, n_1],\, \forall j \in [1, n_2]$

\begin{equation}
\begin{gathered}
\vf[i]=-\varepsilon \operatorname{LSE}_{k=1}^{n_2}\left(\log \left(\bsomega_2[k]\right)+\frac{1}{\varepsilon} \vg[k]-\frac{1}{\varepsilon} \mC[i,k]\right) \\
\vg[j]=-\varepsilon \operatorname{LSE}_{k=1}^{n_1}\left(\log \left(\bsomega_1[k]\right)+\frac{1}{\varepsilon} \vf[k]-\frac{1}{\varepsilon} \mC[k,j]\right) \\
\text { where } \quad \operatorname{LSE}_{k=1}^{n}\left(\vx[k]\right)=\log \sum_{k=1}^{n} \exp \left(\vx[k]\right)
\end{gathered}
\end{equation}
for numerical stability with large dimension datasets. In practice, we implement these LSEs using \textit{einsum} operations.

The optimal coupling of the dual entropic OT can be computed after the potential vectors converged as
$$
\vpi^* = \exp \left(\frac{1}{\varepsilon}(\vf^* \oplus \vg^* - \mC)\right) \cdot (\bsomega_1 \otimes \bsomega_2).
$$
We state the Sinkhorn algorithm solving the dual entropic OT in~\cref{alg:log_stable_sk}. With~\cref{alg:log_stable_sk}, the auto-differentiation gradient is robust through small perturbation of the potential solutions $\vf^*, \vg^*$. We observe that $\epsilon \in [0.1, 0.2]$ and a few Sinkhorn LSEs are enough for our setting.

\subsection{Empirical Entropic FGW Barycenter} \label{sec_solving_EFGWB}

In our experiments, we propose to solve the entropic relaxation of~\cref{eq_FGWB_def} for utilizing GPU-accelerated Sinkhorn iterations~\cite{peyre2019computational}. Given a set of conformer graphs $\{G_s := (\mH_s, \mA_s,\bsomega_s)\}_{s=1}^K$, we want to optimize the entropic barycenter~\cref{eq:fgw_barycenter}, where we fixed the prior on nodes $\overline{\bsomega}$.~\citet{titouan_optimal_2019} solves~\cref{eq:fgw_barycenter} using Block Coordinate Descent as shown in~\cref{alg:fgw_barycenter}, which iteratively minimizes the original FGW distance between the current barycenter and the graphs $G_s$. In our case, we solve for $K$ couplings of entropic FGW distances to the empirical graphs at each iteration (i.e., $\lambda_s = 1 / K$), then following the update rule for structure matrix (Proposition 4,~\cite{peyre2016gromov})
\begin{align}
\begin{split}
    \overline{\mA}^{(k+1)} &\leftarrow \frac{1}{\overline{\bsomega}~\overline{\bsomega}^\top} \sum_{s=1}^K \lambda_s {\vpi_s^{(k)}} \mA_s {\vpi_s^{(k)}}^\top,\, \textrm{if } L := L_2 \\
    \overline{\mA}^{(k+1)} &\leftarrow \exp \left(\frac{1}{\overline{\bsomega}~\overline{\bsomega}^\top} \sum_{s=1}^K \lambda_s {\vpi_s^{(k)}} \mA_s {\vpi_s^{(k)}}^\top \right), \, \textrm{if } L := \operatorname{KL},
\end{split}
\end{align}
and for the feature matrix~\cite{titouan_optimal_2019, cuturi2014fast}
\begin{equation}
    \overline{\mH}^{(k+1)} \leftarrow \mathrm{diag}(1/\overline{\bsomega}) \sum_{s=1}^K \lambda_s {\vpi_s^{(k)}} \mH_s,
\end{equation}
leading to~\cref{alg:fgw_barycenter}. Note that~\cref{alg:fgw_barycenter} presents only the structure matrix update rule for the square loss $L = L_2$ for clarity. We can modify the structure matrix update rule according to the loss type $L$. In the experiment, we found that the algorithm usually converges after running the number of $10$ outer iterations and $30$ inner iterations. 

\begin{algorithm}[H]
   \caption{Entropic FGW with Sinkhorn projections}
   \label{alg:fgw_sinkhorn}
\begin{algorithmic}
   \STATE {\bfseries Input:} Graph $G_1, G_2$, weighting $\alpha$, entropic scalar $\epsilon$.
   \STATE {\bfseries Optimizing:} $\vpi \in \Pi(\bsomega_1, \bsomega_2)$.
   \STATE Compute $\mL := f_1(\mA_1)\bsomega_1 \boldsymbol{1}_{n_2}^\top + \boldsymbol{1}_{n_1} \bsomega_2^\top f_1(\mA_2)^\top$. 
   \STATE Compute $\mM = \left( d(\mH_1[i], \mH_2[j]) \right)_{n_1 \times n_2}$.
   \STATE Initialize $\vpi$.
   \REPEAT
   \STATE Compute $\mC^{(k)} = (1 - \alpha)\mM + 2\alpha (\mL - h_1(\mA_1) \vpi^{(k)} h_2(\mA_2)^\top)$.
   \STATE Solve $\argmin_{\vpi_s^{(k)}} \left\langle \mC, \vpi \right\rangle - \epsilon \operatorname{H} (\vpi)$ with~\cref{alg:log_stable_sk}.
   \UNTIL{$k$ in \textit{inner iterations} and \textit{not converged}}
\end{algorithmic}
\end{algorithm}
\vspace{-0.2in}
\begin{algorithm}[H]
   \caption{Stabilized LSE Sinkhorn algorithm}
   \label{alg:log_stable_sk}
\begin{algorithmic}
    \STATE {\bfseries Input:} Entropic scalar $\epsilon$, cost matrix $\mC$, marginals $\bsomega_1, \bsomega_2$.
    \STATE Initialize $\vf, \vg = \boldsymbol{0}$.
    \WHILE{termination criteria not met} 
        \FOR{$\forall i \in [1, n]$}
        \STATE $\vf[i]=-\varepsilon \operatorname{LSE}_{k=1}^{m}\left(\log \left(\bsomega_2[k]\right)+\frac{1}{\varepsilon} \vg[k]-\frac{1}{\varepsilon} \mC[i,k]\right)$.
        \ENDFOR
        \FOR{$\forall j \in [1, m]$}
        \STATE $\vg[j]=-\varepsilon \operatorname{LSE}_{k=1}^{n}\left(\log \left(\bsomega_1[k]\right)+\frac{1}{\varepsilon} \vf[k]-\frac{1}{\varepsilon} \mC[k,j]\right)$.
        \ENDFOR
    \ENDWHILE 
    \STATE Return $\vpi^* = \exp \left(\frac{1}{\varepsilon}(\vf^* \oplus \vg^* -\mC)\right) \cdot (\bsomega_1 \otimes \bsomega_2)$.
\end{algorithmic}
\end{algorithm}
\vspace{-0.2in}
\paragraph{Practical GPU considerations.} Our motivation for adopting entropic formulation for FGW barycenter is to solve the barycenter problem fast with (stabilized LSE) Sinkhorn projections, which can be straightforwardly vectorized in PyTorch, facilitating end-to-end unsupervised training with GPU~\cite{cuturi2013sinkhorn, cuturi2014fast, peyre2019computational}. This entropic formulation avoids using Conditional Gradients~\cite{titouan_optimal_2019} to solve FGW, which uses the classical network flow algorithms\footnote{These algorithms are usually available in off-the-shell C++ backend libraries, which are difficult to construct auto-differentiation computation graph over these solvers.} at each iteration. Furthermore, by implementing~\cref{alg:fgw_barycenter} in PyTorch~\cite{paszke2017automatic}, we utilize reverse-mode auto differentiation over solver iterations to propagate gradients from the graph parameters to the barycenter solutions. We observe that the inner entropic OT problem usually converges with a few iterations; thus, we typically limit the number of Sinkhorn iterations solving entropic OT problem to reduce memory burden~\cite{peyre2019computational}.

\vspace{-0.2in}
\paragraph{Scalability and complexity.} As shown in~\cref{alg:fgw_barycenter}, we have three loops to optimize for the FGW barycenter. However, the inner entropic OT problem typically converges with a few stabilized LSE Sinkhorn iterations. Thus, we fix a constant number of Sinkhorn iterations and denote maximum outer (\cref{alg:fgw_barycenter}) and inner iterations (\cref{alg:fgw_sinkhorn}) as $M, N$. In~\cref{alg:fgw_sinkhorn}, the complexity computing $\mC$ is $\gO(n^3 + n^2 d)$ with $n := \max (\{n_s\}_{s=1}^K)$. The first term is the complexity of computing structure cost, while the second is the feature cost complexity. Thus, the complexity for~\cref{alg:fgw_barycenter} is $\gO(MKN(n^3 + n^2d))$ including the feature and structure matrix updates. Note that solving entropic FGW for $K$ graphs can be done in parallel with GPU. Additionally, this complexity does not depend on the maximum edge numbers in graphs $e := \max (\{\| E_s \|\}_{s=1}^K)$, and thus very competitive compared to previous graph matching method~\cite{neyshabur2013netal} for each outer iteration when $e \gg n$.

\section{Experiment Configuration Supplements}\label{sec_ECS}
\subsection{SchNet Neural Architecture}
\label{appendix-schnet-details}
We represent each of the $K$ molecular conformers as a set of atoms $V$ with atom numbers $Z = (Z_1, ..., Z_n)$ and atomic positions $R = (\vr_1, .., \vr_n)$. At each layer $\ell$ an atom $v$ is represented by a learnable representation $\hb_{v}$. We use the geometric message and aggregation functions of SchNet~\citet{Sch+2017} but any other $E(3)$-invariant neural network can be used instead. Besides providing a good trade-off between model complexity and efficacy, we choose SchNet as it was used in prior related work~\cite{axelrod_molecular_2023}.

SchNet relies on the following building blocks. The initial node attributes are learnable embeddings of the atom types, that is, $\hb_{v}^{(0)} \in \mathbb{R}^{d}$ is an embedding of the atom type of node $v$ with $d$ dimensions. 
Two types of combinations of atom-wise linear layers and activation functions
    \vspace{-0.05in}
    \begin{align}
    \varphi_{i}^{(\ell)}\left(\hb\right) &:= \Wb_i^{(l)}\hb + \bb_i^{(l)}
    \ \ \ \ \mbox{ and } \ \ \ \ \ \nonumber\\
    \phi_{i,j}^{(\ell)}\left(\hb\right) &:= \varphi_{j}^{\ell}\left(\mathtt{ssp}\left(\varphi_{i}^{(\ell)}\left(\hb\right) \right)\right)
    \end{align}
    where $\mathtt{ssp}$ is the shifted softplus function (cite), $\Wb_i^{(l)} \in \Rb^{d \times d}$, $\bb_i^{(l)} \in \mathbb{R}^{d}$, with $d$ the hidden dimension of the atom embeddings.
A filter-generating network that serves as a rotationally invariant function $\mathtt{Inv}$:
    \begin{align*}
        \eb_{v, u} &= \mathtt{Inv}\left(\vv{\bbv}_{v}^{(\ell-1)}, \vv{\bbv}_{u}^{\tup{\ell-1}}\right) 
        = \phi_{1,2}^{(\ell)}(\rbf(||\bbr_v - \bbr_u||)),
    \end{align*}
    where $\rbf$ is the radial basis function and $\phi_{1,2}^{(\ell)}$ is a sequence of two dense layers with shifted softplus activation.
    
$E(3)$-invariant message-passing is performed by using the following message function
\vspace{-0.1in}
\begin{align}
    \mb_{v,u}^\tup{\ell} =
    \bM^\tup{\ell}\Bigl(\hb_{v}^\tup{\ell-1},\hb_{u}^\tup{\ell-1}, \eb_{v,u}\Bigr) \nonumber
    = \varphi_{1}^{(\ell)}\left(\hb^{(l - 1)}_u\right)  \circ \ve_{v,u},
\end{align}
where $\circ$ represents the element-wise multiplication. 
The aggregation function is now defined as
\begin{align}
    \bar{\hb}_{v}^{(\ell)} &:= \AGG^\tup{\ell} \bigl(\oms \mb_{v,u}^\tup{\ell} \mid u\in N(v) \cms \bigr) = 
    \sum_{u\in N(v)} \mb_{v,u}^\tup{\ell}.\nonumber
\end{align}

Finally, the update function is given by
\begin{align}
    \hb_{v}^\tup{\ell} &=
	\UPD^\tup{\ell}\Bigl(\hb_{v}^\tup{\ell-1},\AGG^\tup{\ell} \bigl(\oms \mb_{v,u}^\tup{\ell}
	\mid u\in N(v) \cms \bigr)\Bigr) \nonumber\\
        &= \hb_{v}^\tup{\ell-1} + \phi_{3, 4}^{(\ell)}\left(\bar{\hb}_{v}^{(\ell)}\right).
\end{align}

We denote the matrix whose columns are the atom-wise features from the last message-passing layer $L$ with $\Hb$, that is, $\Hb[v] = \hb^{(L)}_v$.

\subsection{Dataset Overview}
\label{sec:dataset_overview}
\textbf{Molecular Property Prediction Tasks}
\ We conduct our experiments on MoleculeNet \citep{Wu+2018}, a comprehensive benchmark dataset for computational chemistry. It spans a wide array of tasks that range from predicting quantum mechanical properties to determining biological activities and solubilities of compounds. In our study, we focus on the regression tasks on four datasets from MoleculeNet benchmark: \texttt{Lipo}, \texttt{ESOL}, \texttt{FreeSolv}, and \texttt{BACE}. 
\begin{itemize}
    \item The \texttt{Lipo} dataset is a collection of 4200 lipophilicity values for various chemical compounds. Lipophilicity is a key property that impacts a molecule's pharmacokinetic behavior, making it crucial for drug development. 
    \item \texttt{ESOL} contains 1128 experimental solubility values for a range of small, drug-like molecules. Understanding solubility is vital in drug discovery, as poor solubility can lead to issues with bioavailability. 
    \item \texttt{FreeSolv} offers both calculated and experimentally determined hydration-free energies for a collection of 642 small molecules. These hydration-free energies are critical for assessing a molecule's stability and solubility in water.
    \item The \texttt{BACE} dataset focuses on biochemical assays related to Alzheimer's Disease. It contains 1513 pIC50 values, indicating the efficiency of various molecules in inhibiting the $\beta$-site amyloid precursor protein cleaving enzyme 1 (BACE-1).
\end{itemize}

\textbf{3D Molecular Classification Tasks}
\ In addition, we evaluate the classification performance using two closely related datasets associated with SARS-CoV: SARS-CoV-2 3CL (\texttt{CoV-2 3CL}), and SARS-CoV-2 (\texttt{CoV-2}).
\begin{itemize}
    \item \texttt{CoV-2} 3CL protease dataset comprises 76 instances corresponding to inhibitory interactions, considering a total of 804 unique species. This dataset specifically addresses the inhibition of the SARS-CoV-2 3CL protease (denoted as `CoV-2 CL') \citep{diamondXChem}.
    \item \texttt{CoV-2} dataset, which encompasses 92 instances across a spectrum of 5,476 unique species. This dataset focuses on the broader context of inhibitory interactions against SARS-CoV-2 measured in vitro within human cells \citep{PPR:PPR153247, Touret2020}.
\end{itemize}

\textbf{Reaction-level molecule properties prediction}
The \texttt{BDE} dataset \cite{meyer2018machine} contains 5915 organometallic catalysts ($ML_{1}L_{2}$), with metal centers (Pd, Pt, Au, Ag, Cu, Ni) and two flexible organic ligands ($L_1$ and $L_2$) chosen from a 91-ligand library. It includes conformations of each unbound catalyst and those bound to ethylene and bromide after reacting with vinyl bromide (resulting in 11830 individual molecules). The dataset provides electronic binding energies, calculated as the energy difference between the bound-catalyst complex and the unbound catalyst, optimized using DFT. Conformers are initially generated with Open Babel and then geometry-optimized to likely represent the global minimum energy structures at the force field level.
\subsection{3D Conformers Generation}
RDKit offers two methodologies to generate conformers for molecules:
\begin{itemize}
    \item The distance geometry approach employs distance geometry principles for conformer generation, starting with the determination of a molecule's distance bounds matrix based on connectivity and predefined rules. This matrix is then refined and used to formulate a random distance matrix, which subsequently guides the molecule's embedding into 3D space. The resulting atomic coordinates undergo further refinement through a specialized “distance geometry force field.”
    \item ETKDG method, which refines generated conformers by integrating torsion angle preferences from the Cambridge Structural Database (CSD). This technique can be further enhanced with additional torsion terms, catering especially to small rings and macrocycles, yielding high-quality conformers suitable for direct application in many scenarios.
\end{itemize} 

In our experiments, we applied a standardized approach to configuring all benchmark datasets, encompassing the following steps:
\begin{itemize}

\item Conformer Generation: During the training phase, we use RDKit to generate a fixed set of 200 conformers for every molecular structure specified by its SMILES string. However, in each epoch, each molecular is sampled with a $K$ conformers ($K << 200$). For the validation and testing, we use a fixed seed and generate randomly $K$ conformers for each sample in the dataset. 

\item Parallel Processing: Utilizing a process pool enhances the parallelization of conformer generation, thereby optimizing overall efficiency. We provide in Table \ref{tab:summary_edges_nodes_graph} the average execution time for generating a single conformer from its SMILES string across diverse datasets.
\end{itemize}

For a comprehensive 3D structural analysis, we present summary statistics detailing the number of edges and nodes (Table \ref{tab:summary_edges_nodes_graph}). These statistics provide insights into the structural characteristics of molecules within the datasets. Average values offer a perspective on the typical size of molecules in terms of edges and nodes, while minimum and maximum values reflect the varying complexities of molecular structures across datasets. Notably, the Lipo and BACE datasets emerge as the most intricate graphs, contrasting with ESOL and FreeSolv, which exhibit sparser structures. We illustrate in Figure \ref{fig:cfm_vis} some typical generated conformers for each dataset.

\begin{table}[h]
\caption{Summary statistics for edge and node counts in diverse datasets, reflecting the runtime needed to generate a conformer from a molecular structure.
}
\label{tab:summary_edges_nodes_graph}
\centering
\scalebox{0.9}{
\begin{tabular}{cccccccc}
\toprule
\multirow{2}{*}{\textbf{Dataset}} & \multicolumn{3}{c}{\textbf{Number of Edges}} & \multicolumn{3}{c}{\textbf{Number of Nodes}} & \multirow{2}{*}{\textbf{Execution Time (seconds)}} \\
% \hline
\cline{2-7}
 & Avg & Min & Max & Avg & Min & Max \\
\midrule
Lipo & 101.8 & 24 & 412 & 48.4 & 12 & 203 & \(4.68 \times 10^{-6}\) \\
ESOL & 52 & 6 & 252 & 25.6 & 4 & 119 & \(3.58 \times 10^{-6}\) \\
FreeSolv & 35.5 & 4 & 92 & 18.1 & 3&44 & \(3.13 \times 10^{-6}\)\\
BACE & 135 &	36 & 376 & 64.7 &	17 & 184 & \(4.34 \times 10^{-6}\) \\
\midrule
CoV-2 3CL & 56 & 16 & 96 & 27.4 & 8 & 48 & \(3.12 \times 10^{-6}\) \\
CoV-2& 95.2 &	4 & 220 & 45.7 & 3 & 100 & \(3.96 \times 10^{-6}\) \\
\bottomrule
\end{tabular}}

\end{table}

\subsection{Ablation Studies of Number of Conformers}
\begin{table}[H]
\caption{The impact of number of conformations $K$ on the accuracy of the $\conan$ model (without barycenter). Results are in MSE $\downarrow$ computed on the validation set. \textbf{Bold} and underline values denote first and second-rank results.}
\vspace{0.05in}
\label{tab:k-choice}
\centering
\resizebox{0.6\columnwidth}{!}{%
\begin{tabular}{lllll}
    \toprule
    \multicolumn{1}{c}{\bf $K$}  &\multicolumn{1}{c}{\bf Lipo} &\multicolumn{1}{c}{\bf ESOL} &\multicolumn{1}{c}{\bf FreeSolv} &\multicolumn{1}{c}{\bf BACE} \\
    \cline{1-5}
    0 & $1.387 \pm 0.206$  & $2.288 \pm 0.017$ & $8.564 \pm 1.345$ & $1.844 \pm 0.33$\\
    1 & $0.619 \pm 0.045$ & $0.645 \pm 0.054$ & $2.306 \pm 0.807$ & $0.705 \pm 0.064$ \\
    3 & $0.581 \pm 0.033$ & $0.592 \pm 0.072$ & $2.035 \pm 0.256$ & \underline{$0.653 \pm 0.026$} \\
    5 & \underline{$0.567 \pm 0.019$} & \boldsymbol{$0.581 \pm 0.051$} & $1.799 \pm 0.662$ & \boldsymbol{$0.616 \pm 0.051$} \\
    10 & \boldsymbol{$0.564 \pm 0.030$} & \underline{$0.583 \pm 0.027$} & \boldsymbol{$1.568 \pm 0.183$} & $0.832 \pm 0.143$ \\
    20 & $0.569 \pm 0.003$ & {$0.589 \pm 0.012$} & \underline{$1.742 \pm 0.143$} & $0.670 \pm 0.036$ \\
    \bottomrule
\end{tabular}
}
\end{table}
Table \ref{tab:k-choice} illustrates that incorporating 3D conformers with $K \geq 1$ significantly enhances performance compared to relying solely on 2D molecular graphs, as used in the 2D-GAT model. However, the relationship between the number of conformations and model accuracy is not linear or straightforward. For instance, while increasing the number of conformations to $K=10$ improves performance for datasets such as Lipo and ESOL, the best overall performance is usually achieved with $K=5$. This suggests that an optimal number of conformers maximizes model accuracy, which varies depending on the specific dataset.

\subsection{Entropic FGW versus FGW-Mixup detail}
\label{fgwmixup-details}
\vspace{-0.05in}
We provide more details on the efficiency ablation study in~\cref{subsection-efficiency}. We adapt the original GitHub repository~\url{https://github.com/ArthurLeoM/FGWMixup} from~\citet{ma2023fused} as the baseline. In the context of $K$ FGW barycenter problem, due to the numerical instability of the $\exp$ function, we have to set small stepsize $\gamma$ of the Bregman projections (Algorithm 2 in~\cite{ma2023fused}) to avoid NAN values output of FGW-Mixup in some datasets, leading to more inner iterations to converge.
Indeed, it is particularly difficult to find optimal parameters for FGW-Mixup, balancing between the marginal errors inducing the FGW subgradient noise at the outer iteration and the empirical convergence rate at the inner iteration.

\textbf{Running Time Analysis.}\ \  In Figure \ref{fig:runtime_freesolv_cov23cl}, we compare the running time of our solver with FGW-Mixup on two datasets, FreeSolv, and CoV-2 3CL, for both \textit{forward and backward steps} to update gradients for the whole models. We measure average times over epochs during the training steps with increasing values of conformers $K$. Note that in FGW-Mixup, the solver is not supported for inference on GPU, while our algorithm is designed for this purpose and can be scaled on large training samples using data distributed parallel in Pytorch. In particular, \textit{\conan-FGW Single-GPU \conan-FGW on Multi-GPUs} indicates the version where one and four Tesla V100-32GB are used for training, respectively. 

To delve deeper into the computation of the FGW barycenter, we present the runtime analysis in Figure \ref{fig:runtime_freesolv_cov23cl_fw_only}. The configuration mirrors that of Figure \ref{fig:runtime_freesolv_cov23cl}, with the exception that the runtime is specifically gauged at the barycenter components during the forward step. Notably, the execution time exhibits a consistent pattern comparable to Figure \ref{fig:runtime_freesolv_cov23cl}, highlighting that \conan-FGW outperforms FGX-Mixup in both single GPU and multi-GPU setups, achieving significantly faster runtimes as the number of conformers is scaled.
\vspace{-0.1in}
\begin{figure}[H]
\centering
\includegraphics[width=0.6\columnwidth]{./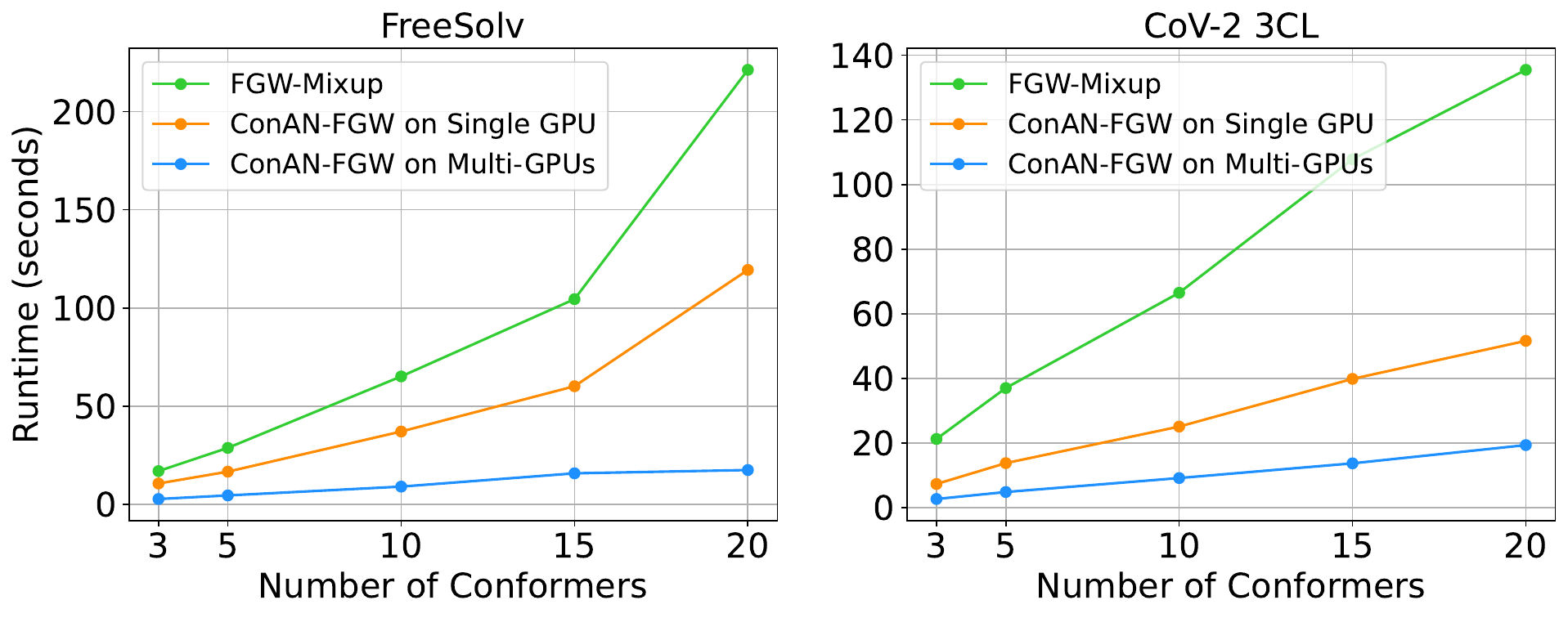}
\vspace{-0.15in}
\caption{\centering Runtime comparison of FGW-Mixup, \conan-FGW (single and multi-GPU) in the FGW barycenter computation.}
\label{fig:runtime_freesolv_cov23cl_fw_only}
\end{figure}

\iffalse
\begin{figure}[H]
\centering
\includegraphics[width=0.6\columnwidth]{ICML2024/figures/freesolv_cov23cl_fw_only.pdf}
\vspace{-0.15in}
\caption{Runtime comparison of FGW-Mixup and \conan-FGW (single and multi-GPU) in the FGW barycenter computation.}
\label{fig:runtime_freesolv_cov23cl_fw_only}
\end{figure}
\fi
\vspace{-0.2in}
\textbf{Error Analysis.}\ \ In this part, we investigate the error of \conan-FGW and FGW-Mixup. To this end, we use the solution of the original FGW problem solved by the Conditional Gradient algorithm~\cite{titouan_fused_2020} as the approximated ground truth for comparing solution errors (Table \ref{tab:error_sol_solvers}). We fix the same hyperparameters for both solvers as in~\cref{fig:runtime_freesolv_cov23cl}. As expected, the FGW-Mixup solution errors are slightly smaller than our \conan-FGW ones. This is due to the fact that (i) to prevent numerical instability, we set small stepsize for the mirror descents (i.e., alternating Bregman projections) and (ii) FGW-Mixup asymptotically converges to the original FGW solution up to a bounded gap~\cite{ma2023fused}. However, this induces more computational time for large FGW problems, as seen~\cref{fig:runtime_freesolv_cov23cl,fig:runtime_freesolv_cov23cl_fw_only}. In contrast, \conan-FGW maintains comparable solution errors to FGW-Mixup while having reasonable computational runtime and being compatible with deploying multi-GPU for large-scale problems.

\begin{figure}[!t]
\centering
\includegraphics[width=0.8\columnwidth]{./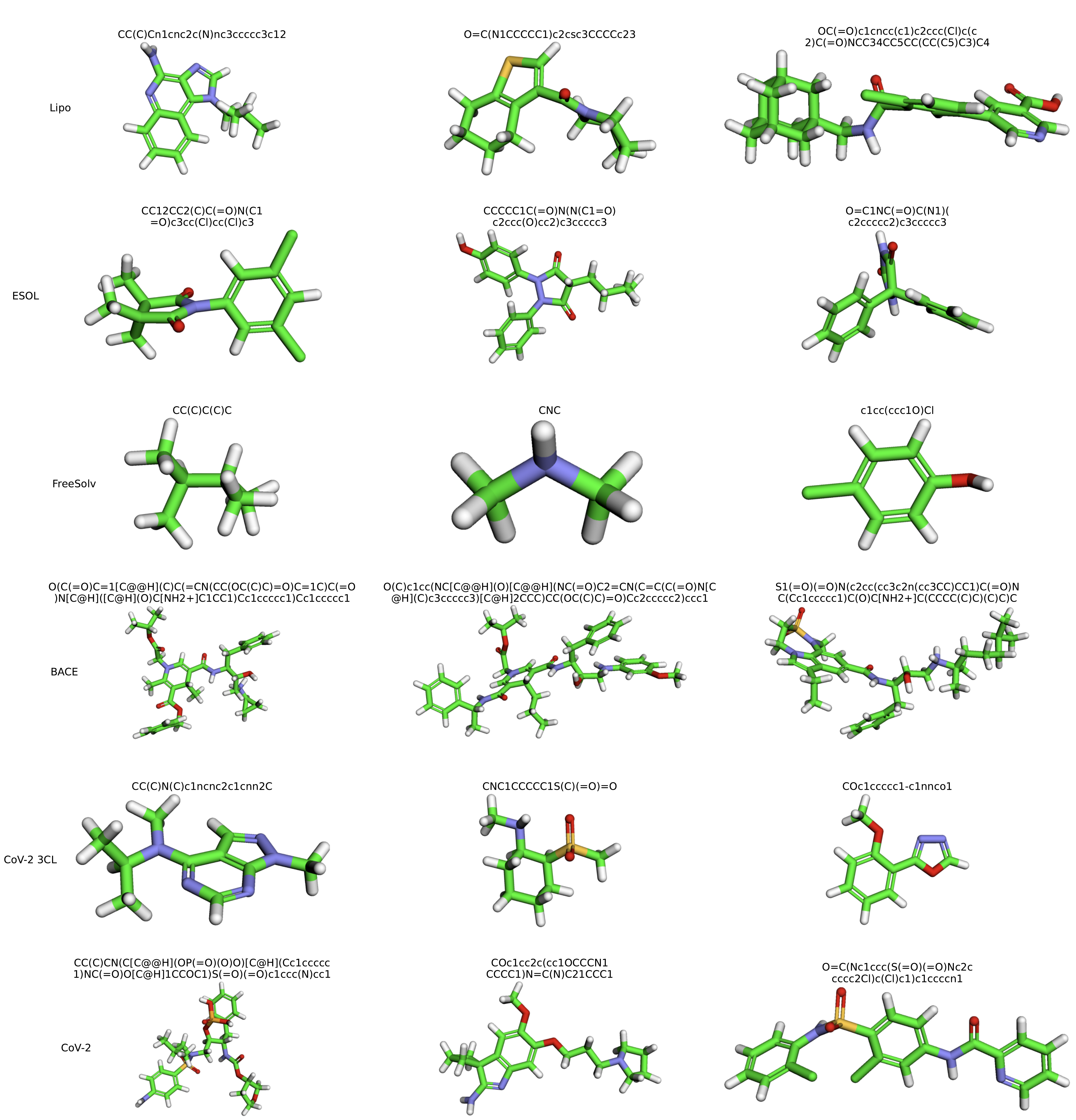}
      \caption{Visualizing 3D molecular conformers with corresponding SMILES strings across diverse datasets.}
      \label{fig:cfm_vis}
\end{figure}

\begin{table}[!t]
\centering
\caption{Error estimation performance across datasets, demonstrating the influence of conformer variations and different methodologies for Ground Truth (GT) in conjunction with \conan-FGW and FGW-Mixup. The comparing matrix metrics are Normalized Frobenius norm, Mean Absolute Error (MAE), Mean Absolute Percent Error (MAPE), and Mean Square Error (MSE).
}
\vspace{0.1in}
\scalebox{0.8}{
\begin{tabular}{cccccccccc}
\toprule
\multirow{2}{*}{\textbf{Dataset}} & \multirow{2}{*}{\textbf{Conformers}} & \multicolumn{4}{c}{\textbf{GT and \conan-FGW}} & \multicolumn{4}{c}{\textbf{GT and FGW-Mixup}} \\
% \hline
\cline{3-10}
 & & N-Frobenius & MAE & MAPE & MSE & N-Frobenius & MAE & MAPE & MSE \\
\midrule
\multirow{5}{*}{FreeSolv} & 3 & 0.1325 & 0.1727 & 0.3523 & 0.0812 & 0.1190 & 0.1590 & 0.3210 & 0.0671 \\
 & 5 & 0.1387 & 0.1823 & 0.3753 & 0.0870 & 0.1258 & 0.1695 & 0.3466 & 0.0731 \\
 & 10 & 0.1431 & 0.1874 & 0.3876 & 0.0919 & 0.1323 & 0.1776 & 0.3638 & 0.0792 \\
 & 15 & 0.1460 & 0.1924 & 0.3980 & 0.0947 & 0.1358 & 0.1819 & 0.3703 & 0.0832 \\
 & 20 & 0.1453 & 0.1920 & 0.3954 & 0.0952 & 0.1336 & 0.1805 & 0.3662 & 0.0816 \\
\midrule
\multirow{5}{*}{CoV-2 3CL} & 3 & 0.0859 & 0.1696 & 0.4207 & 0.0670 & 0.0804 & 0.1626 & 0.4055 & 0.0600 \\
 & 5 & 0.0842 & 0.1688 & 0.4114 & 0.0632 & 0.0793 & 0.1616 & 0.3942 & 0.0569 \\
 & 10 & 0.0879 & 0.1801 & 0.4452 & 0.0719 & 0.0806 & 0.1697 & 0.4201 & 0.0637 \\
 & 15 & 0.0859 & 0.1729 & 0.4251 & 0.0670 & 0.0764 & 0.1571 & 0.3899 & 0.0543 \\
 & 20 & 0.0902 & 0.1823 & 0.4558 & 0.0714 & 0.0865 & 0.1779 & 0.4460 & 0.0653 \\
\bottomrule
\end{tabular}
\label{tab:error_sol_solvers}}
\end{table}

\subsection{Visualize Conformers Generated by RDKit}
We present in Figure\,\ref{fig:cfm_vis} typical 3D conformers generated by RDKit with their string inputs denote below each figure.

\end{document}